\pdfoutput=1
\documentclass{article}
\usepackage{iclr2027_conference,times}

\usepackage[utf8]{inputenc}
\usepackage[T1]{fontenc}
\usepackage{microtype}
\usepackage{amsmath,amssymb}
\usepackage{amsthm}
\newtheorem{proposition}{Proposition}
\theoremstyle{remark}
\newtheorem{remark}{Remark}
\usepackage{xspace}
\newcommand{\bench}{RealCompanion\xspace}
\usepackage{graphicx}
\usepackage{booktabs}
\usepackage{enumitem}
\usepackage{multirow}
\usepackage{tabularx}
\usepackage{xurl}
\usepackage{xcolor}
\usepackage{tikz}
\usepackage{pgfplots}\pgfplotsset{compat=1.17}\usetikzlibrary{patterns}
\usetikzlibrary{arrows.meta,positioning,fit,backgrounds,calc,shapes.geometric,decorations.pathreplacing}
\usepackage{titletoc}
\usepackage{listings}
\usepackage{hyperref}
\definecolor{cData}{HTML}{2C5F8A}
\definecolor{cProf}{HTML}{6B4E9B}
\definecolor{cGT}{HTML}{1F7A5C}
\definecolor{cBad}{HTML}{B03A2E}
\definecolor{cGrey}{HTML}{6E6E6E}
\definecolor{cWash}{HTML}{F4F6F9}
\hypersetup{colorlinks=true, linkcolor=cData, citecolor=cData, urlcolor=cData,
  pdftitle={RealCompanion: Benchmarking Human Understanding from Reasoning over Longitudinal Real-World Conversations},
  pdfauthor={Arman Behnam, Sunglyoung Kim, Liangwei Yang}}
\lstdefinestyle{rcitem}{
  basicstyle=\ttfamily\scriptsize,
  frame=none,
  showstringspaces=false, breaklines=false,
  columns=fullflexible, keepspaces=true,
  moredelim=**[is][\color{cBad}\bfseries]{@}{@},
}

\title{\centering RealCompanion: Benchmarking Human\\ Understanding from Reasoning over\\ Longitudinal Real-World Conversations}

\author{}

\iclrfinalcopy
\begin{document}

\maketitle
\lhead{Preprint}

\vspace{-2.2em}
\begin{center}
\begin{tabular}{@{}c@{\hspace{2.5em}}c@{\hspace{2.5em}}c@{}}
\textbf{Arman Behnam} & \textbf{Sunglyoung Kim} & \textbf{Jiayi Yu} \\
Quis Lab, USA & Quis Lab, USA & Quis Lab, USA \\
\texttt{arman@quis.ai} & \texttt{sk@quis.ai} & \texttt{jy@quis.ai}
\end{tabular}

\vspace{0.8em}

\begin{tabular}{@{}c@{\hspace{2.5em}}c@{}}
\textbf{Eric Huang} & \textbf{Liangwei Yang} \\
Quis Lab, USA & Independent Researcher, USA \\
\texttt{eric@quis.ai} & \texttt{liangwei\_yang@outlook.com}
\end{tabular}
\end{center}
\vspace{0.5em}

\begin{abstract}
An AI companion that talks with someone for months should come to understand them. It should know who they are, remember what they said, and recognize when something from the past matters now. Testing this needs real conversations, but real conversations are private, so existing benchmarks use invented people and invented questions. We release \bench, ten real relationships between people and an AI companion, with 27{,}218 messages over up to 120 days. For each person we release the full conversation, a profile, a persona, chat test items and question test items. Every label points to the messages that support it, and every chat label comes with the reasoning that produced it. The real data shows three things. First, people rarely refer back. Only 3.4\% of their messages depend on something said earlier, and when one does, the earlier message is usually far away (a median of 2{,}157 messages back). Averages hide this. Looking at the most recent messages finds the needed one 95.9\% of the time overall, but only 2.2\% of the time when it is far back. Second, AI systems cannot tell when the past matters. The detectors we tested barely beat chance on real messages, and when the same earlier messages are labeled ``memories'' instead of ``earlier messages'', models bring up the past 10 to 14 percentage points more often, even when nothing from the past is needed. Third, AI systems read more into a person than the person revealed. Three agent systems rebuild each persona equally well (F1 0.71). They see the person, and then imagine more. Understanding a person depends on knowing when their past matters and where what they shared ends, and only real conversations can test it.
\end{abstract}
\section{Introduction}
\label{sec:intro}

% P1 - understanding a person means answering two questions
People now spend a long time talking to AI companions. Understanding such a person means answering two questions. The first is who they are, and that answer holds from one message to the next. The second is which of the things they have said matters now, and that answer changes with every message. A companion has to get both right. Both answers are claims about a real person, and a claim about a person can only be checked against what that person actually said. Those conversations are private, held by the companies whose products produced them, so the conversations used to test this have been generated instead. \bench\footnote{Data: \url{https://huggingface.co/datasets/Quislab/RealCompanion}. Code and reproduction map: \url{https://anonymous.4open.science/r/realcompanion-54CC/}.} releases ten real ones, with both answers written down beside them. Figure~\ref{fig:understanding} shows the two questions on one of them.

\begin{figure}[t]
\centering
\begin{tikzpicture}[
  font=\scriptsize,
  msg/.style  ={anchor=north west, inner sep=5pt, draw=cGrey!40, fill=white,
                rounded corners=4pt, align=left, text=black, text width=3.30cm,
                minimum height=1.32cm},
  now/.style  ={anchor=north west, inner sep=5pt, draw=cGT!65, fill=cGT!6,
                line width=0.9pt, rounded corners=4pt, align=left, text=black,
                text width=3.30cm, minimum height=1.32cm},
  when/.style ={anchor=south west, inner sep=0pt, font=\scriptsize\bfseries,
                text=cGrey!85!black},
  gap/.style  ={anchor=north, inner sep=0pt, font=\scriptsize\itshape, text=cGrey!70},
  ttl/.style  ={anchor=north west, inner sep=0pt, align=left,
                font=\scriptsize\bfseries, text=cGrey!90!black},
  sml/.style  ={anchor=north west, inner sep=0pt, align=left,
                font=\scriptsize, text=cGrey},
  merge/.style={draw=cProf!55, line width=0.9pt},
  rep/.style  ={anchor=north west, inner sep=5pt, rounded corners=4pt,
                align=left, text=black, text width=6.55cm}
]

% ======================= BAND 1 : her messages ======================
\node[ttl] at (0,3.40) {Three messages one person sent to an AI companion};
% \node[sml] at (6,3.40) {months apart, with hundreds of others in between};

\node[when] at (0.05,2.68) {March};
\node[msg] at (0,2.64)
  {Human: They posted a team lead role. I'd be managing six people.};

\node[when] at (4.92,2.68) {April};
\node[msg] at (4.87,2.64)
  {Human: I applied. I'm not sure I want it, I just didn't want to wonder.};

\node[when, text=cGT!75!black] at (9.79,2.68) {June\ \ \textnormal{(today)}};
\node[now] at (9.74,2.64) {Human: I got the job.};

\foreach \x in {3.92,8.79}{
  \foreach \j in {0,...,11}{
    \draw[cGrey!28, line width=0.45pt] ({\x+\j*0.060},1.92) -- ({\x+\j*0.060},2.12);}
  \node[gap, align=center] at ({\x+0.33},1.84) {hundreds of\\messages};
}

% ======================= BAND 2 : gathering =========================
\draw[merge] (1.82,1.32) .. controls (1.82,1.02) and (3.10,1.06) .. (3.10,0.78);
\draw[merge] (6.69,1.32) .. controls (6.69,0.98) and (3.10,1.06) .. (3.10,0.78);
\draw[merge] (11.56,1.32) .. controls (11.56,0.98) and (3.10,1.06) .. (3.10,0.78);
\draw[-{Stealth[length=4.2pt]}, merge] (3.10,0.78) -- (3.10,0.46);
\node[sml, anchor=west, text=cProf] at (3.38,0.62) {everything the user wrote};

% ======================= BAND 3 LEFT : the model of her =============
\def\bt{0.40}\def\bb{-3.02}
\draw[draw=cProf!40, fill=white, rounded corners=5pt, line width=0.5pt]
  (0,\bt) rectangle (5.80,\bb);
\fill[cProf!12, rounded corners=5pt] (0,\bt) rectangle (5.80,{\bt-0.36});
\fill[cProf!12] (0,{\bt-0.22}) rectangle (5.80,{\bt-0.36});
\node[anchor=west, font=\scriptsize\bfseries, text=cProf!85!black]
  at (0.18,{\bt-0.18}) {What the companion knows about the user};

% ---- profile ----
\draw[cData, line width=2.2pt] (0.20,-0.04) -- (0.20,-1.24);
\node[anchor=north west, font=\scriptsize\bfseries, text=cData] at (0.38,-0.02) {Profile};
\node[anchor=north west, font=\scriptsize, text=cGrey] at (0.38,-0.30)
  {what the user said, and where};
\foreach \t/\src/\k in {%
  {applied for a team lead role}/{April}/0,
  {would manage six people}/{March}/1}{
  \node[anchor=west, font=\scriptsize, text=cGT!85!black] at (0.42,{-0.80-\k*0.30}) {\checkmark};
  \node[anchor=west, font=\scriptsize] at (0.70,{-0.80-\k*0.30}) {\t};
  \node[anchor=west, font=\scriptsize, text=cGrey] at (4.35,{-0.80-\k*0.30}) {\src};
}

\draw[cProf!25, line width=0.4pt] (0.20,-1.26) -- (5.58,-1.26);

% ---- persona ----
\draw[cProf, line width=2.2pt] (0.20,-1.34) -- (0.20,-2.86);
\node[anchor=north west, font=\scriptsize\bfseries, text=cProf] at (0.38,-1.32) {Persona};
\node[anchor=north west, font=\scriptsize, text=cGrey] at (0.38,-1.60)
  {how the user thinks, feels and decides};

\fill[cGT!10, rounded corners=3pt] (0.32,-1.94) rectangle (5.62,-2.18);
\node[anchor=west, font=\scriptsize, text=cGT!85!black] at (0.42,-2.06) {\checkmark};
\node[anchor=west, font=\scriptsize] at (0.70,-2.06) {acts so as not to wonder};
\node[anchor=west, font=\scriptsize, text=cGrey] at (4.35,-2.06) {inferred};

\node[anchor=west, font=\scriptsize, text=cGT!85!black] at (0.42,-2.42) {\checkmark};
\node[anchor=west, font=\scriptsize] at (0.70,-2.42) {uneasy about managing};
\node[anchor=west, font=\scriptsize, text=cGrey] at (4.35,-2.42) {inferred};

\fill[cBad!10, rounded corners=3pt] (0.32,-2.66) rectangle (5.62,-2.90);
\node[anchor=west, font=\scriptsize, text=cBad] at (0.42,-2.78) {$\times$};
\node[anchor=west, font=\scriptsize] at (0.70,-2.78) {ambitious about managing};
\node[anchor=west, font=\scriptsize, text=cBad] at (4.35,-2.78) {never said};

% ======================= BAND 3 RIGHT : three replies ===============
\node[ttl, text=cGT!80!black] at (6.50,\bt) {What the companion says back};

\node[rep, draw=cBad!45, fill=cBad!6] at (6.50,0.12)
  {AI: ``Congratulations! Just what you wanted.''\\
  \textbf{A}\ \ \textcolor{cBad}{\textbf{$\times$\ \ Assumes the user wanted it, which the user never said.}} The person had already said that they were unsure.\\[2pt]};

\node[rep, draw=cGrey!45, fill=cGrey!5] at (6.50,-1.10)
  {AI: ``You applied in April but weren't sure.''\\
  \textbf{B}\ \ \textcolor{cGrey!90!black}{\textbf{$\circ$\ \ Right, and hollow.}} The person already knows this.\\[2pt]};

\node[rep, draw=cGT!55, fill=cGT!7, line width=0.8pt] at (6.50,-2.04)
  {AI: ``You got the position even though you were not sure about it. How is it?''\\
  \textbf{C}\ \ \textcolor{cGT!85!black}{\textbf{\checkmark\ \ Companion knows why the person applied.}}\\[2pt]};

\draw[-{Stealth[length=4.2pt]}, draw=cGT!70, line width=0.9pt]
  (5.80,-2.06) -- (6.16,-2.06) -- (6.16,-2.52) -- (6.46,-2.52);

% ======================= footer =====================================
\draw[cGrey!30, line width=0.4pt] (0,-3.20) -- (13.40,-3.20);
\node[anchor=north west, inner sep=0pt, font=\scriptsize, text=cGrey!95!black,
      align=left, text width=13.30cm]
  at (0.04,-3.26) {\textbf{B} uses only the \textcolor{cData}{\textbf{blue}} lines.
  \textbf{A} and \textbf{C} use \textcolor{cProf}{\textbf{purple}} ones, and only one
  of those is true.};

\end{tikzpicture}
\caption{A user's messages to an AI companion, and what the companion built from them. Reply A uses a persona line the companion got wrong, so it congratulates the user on a job they were unsure about. Reply B uses only the profile, so it repeats what the user already said. Reply C uses a correct persona line and the profile, so it speaks to why the user applied.}

\label{fig:understanding}
\end{figure}

% P2 - what generated conversations can measure
Generated conversations support one kind of measurement. They place a fact in a dialog and score whether a system returns it, which is memorization. Three properties of how they are built set that limit. A generated dialog is written around the memory it tests, so nearly every message in it needs the past \citep{maharana2024locomo,wu2024longmemeval,hu2026clonemem}. A prompted persona declares its traits in advance, so a question about the person is answered by reading the declaration \citep{zhang2018personachat,xu2022goldfish,jiang2025personamem, kim2025cupid}. A scripted history never revises itself, so nothing said in month nine changes what was established in month four \citep{jiang2025personamemv2,li2026horizonbench}. These benchmarks answer both questions before a system sees them. The dialog is built so that the past always matters. The persona states in advance what the person is like.

% P3 - four challenges
Answering those two questions means meeting four challenges. Challenge 1 is \emph{knowing when the past matters}, since most messages need nothing from earlier. Challenge 2 is \emph{knowing what the messages add up to}, since who a person is emerges across all of them together. Challenge 3 is \emph{drawing on everything at once}. A reply built on the last few messages misses what the person has carried for months. A reply built on the facts they stated hands those facts back to them. A reply built on a trait inferred wrongly is confident and wrong. Challenge 4 is \emph{telling whether any of this was done right}, and that takes the person's own record.

% P4 - what RealFriend supplies
\bench is built from ten people in long-term relationships with AI companions, released with their consent. These conversations were never written to test a system, so most messages ask nothing of the past (challenge 1). No one described these people before they spoke, so who they are has to be worked out from what they said (challenge 2). We release five files for each participant. The conversation is the source, and the other four come from it. A \emph{profile} records what the person stated, and a \emph{persona} records how they think, feel and decide. The \emph{chat ground truth} records what a reply had to know and what it should have said, and the \emph{question set} poses the same decisions as written questions. Together these give a system the recent messages, the distant ones, the facts the person stated and the state they are in (challenge 3). Every claim points back at the messages behind it, so a reader can check any of it (challenge 4). Three tracks run on these files. The \emph{reconstruction track} asks a system for the profile and persona. The \emph{chat track} scores its replies to the person's own messages. The \emph{question track} scores its answers to the written questions. The derivation is done by models and audited by people, and we report how often the two agree (Appendix~\ref{app:gt}).

% P5 - what is new in the release
The four derived files divide the work of knowing a person. The \emph{profile} holds every claim the conversation establishes, one record at a time, each pointing at the messages behind it. The \emph{persona} records who they are, how they think and feel, and how they decide. Each entry is filled only where the conversation supports it. Between them, the profile and the persona hold what it takes to understand the person. The \emph{question set} is written from the profile, so its 3{,}312 questions across 16 categories reach every claim, fact and trait the conversation established. Every item in the \emph{chat ground truth} carries a \textbf{reasoning trace}, and the trace names what the person was pointing at, where it lives, which messages the reply was built from, and whether those messages support it. A label can therefore be contested at the step that produced it. \bench makes all four challenges measurable on relationships that people actually had. We also define the task over any record of a person and state seven conditions a corpus must meet to measure it (Appendix~\ref{app:task}).
\section{Background}
\label{sec:bg}

% B1 - conversational memory
Work on conversational memory has converged on one shape. A long history is constructed, a question is posed at the end, and a system is scored on recovering the fact the question needs. LoCoMo builds very long generated conversations and queries them \citep{maharana2024locomo}, and LongMemEval scales the context past a million tokens \citep{wu2024longmemeval}. CloneMem moves to diaries and messages generated from Big Five profiles into multi-year arcs, links each question to evidence units, and reports that flat retrieval beats consolidation-based memory systems \citep{hu2026clonemem}. The shape has produced real machinery, and the memory layers now shipping inside products were tuned against it. It also decides one thing in advance. A question states what should be retrieved, so a benchmark built from questions never has to judge whether retrieval was warranted. Two benchmarks that test selection directly report how hard it is. CUPID curates histories in which a preference holds only in one context, and finds that no model it evaluates exceeds 50\% precision at identifying which prior context a request depends on \citep{kim2025cupid}. HorizonBench evolves preferences through a mental state graph and reports that all 25 models it evaluates disproportionately select the value from before the change, positioning itself as preparation in advance of longitudinal human data \citep{li2026horizonbench}.

\begin{table}[t]
\centering
\scriptsize
\setlength{\tabcolsep}{3.5pt}
\renewcommand{\arraystretch}{0.95}
\newcommand{\yes}{\tikz[baseline=-0.5ex]\fill[cData] (0,0) circle (1.9pt);}
\newcommand{\no}{\tikz[baseline=-0.5ex]\draw[cGrey!55, line width=0.45pt] (0,0) circle (1.8pt);}
\newcommand{\na}{\tikz[baseline=-0.5ex]{\fill[cGrey!30] (0,0) circle (1.9pt);
  \draw[cGrey!55, line width=0.45pt] (0,0) circle (1.8pt);}}
\newcommand{\grp}[1]{\addlinespace[1pt]%
  \multicolumn{6}{@{}l}{\itshape\color{cGrey}#1}\\[0.5pt]}
\newcommand{\bmk}[1]{\hspace{0.7em}#1}
\begin{tabular}{@{}lccccc@{}}
\toprule
Benchmark & Longitudinal & Probes & Evidence & Abstention & Checkable traces \\
\midrule
\grp{Crowdworker dialogs}
\bmk{PersonaChat} \citeyearpar{zhang2018personachat}  & \no  & authored & \no  & \no  & \no \\
\bmk{MSC} \citeyearpar{xu2022goldfish}                & \yes & authored & \no  & \no  & \no \\
\grp{Public histories}
\bmk{LaMP} \citeyearpar{salemi2024lamp}               & \no  & authored & \no  & \no  & \no \\
\grp{Generated dialogs}
\bmk{LoCoMo} \citeyearpar{maharana2024locomo}         & \yes & authored & \yes & \yes & \no \\
\bmk{LongMemEval} \citeyearpar{wu2024longmemeval}     & \yes & authored & \yes & \yes & \no \\
\bmk{PersonaMem} \citeyearpar{jiang2025personamem}    & \yes & authored & \no  & \no  & \no \\
\bmk{PersonaMem-v2} \citeyearpar{jiang2025personamemv2}& \yes & authored & \yes & \no  & \no \\
\bmk{CUPID} \citeyearpar{kim2025cupid}                & \yes & authored & \yes & \no  & \no \\
\bmk{CloneMem} \citeyearpar{hu2026clonemem}           & \yes & authored & \yes & \yes & \no \\
\bmk{HorizonBench} \citeyearpar{li2026horizonbench}   & \yes & authored & \yes & \no  & \no \\
\bmk{ANCHOR} \citeyearpar{venkit2026best}             & \yes & authored & \no  & \no  & \no \\
\bmk{CIMemories} \citeyearpar{mireshghallah2026cimemories}& \no & authored & \na & \yes & \no \\
\grp{Real conversations}
\bmk{PRISM} \citeyearpar{kirk2024prism}               & \no  & authored & \na  & \no  & \no \\
\bmk{CompanionBench} \citeyearpar{liu2026companionbench}& \no & synthesized & \na & \no & \no \\
\bmk{AlpsBench} \citeyearpar{xiao2026alpsbench}       & \yes & authored & \yes & \no  & \no \\
\midrule
\textbf{\bench} & \yes & \textbf{verbatim} & \yes & \yes & \yes \\
\bottomrule
\end{tabular}
\caption{Memory and personalization benchmarks. Filled circles mark yes, open
circles no, and half-filled not applicable.}
\label{tab:comparison}
\end{table}

% B3 - companionship
Companion applications are a setting of their own. A person returns to the same agent daily for months, and the relationship is what they are there for. ANCHOR evaluates whether the companion stays itself across 2{,}008 conversations, 27 personas and three memory configurations, and reports trajectory accuracy of 44.4\% with recall of the user's state near chance under every configuration \citep{venkit2026best}. CompanionBench works from de-identified real companion conversations and anchors its rubric in psychological theory, and states directly that a 20-turn session cannot reach the depth of disclosure that develops over a longer relationship \citep{liu2026companionbench}. Outside machine learning the same relationship is studied through interviews and surveys, which describe how the bond develops over months and document the distress people report when a system's behavior changes \citep{skjuve2021companion,brandtzaeg2022aifriend,ta2020socialsupport,laestadius2024toohuman}. Every one of those studies works from what people say about the interaction afterward.

% B4 - real conversation, and the comparison
Three efforts remove the substitution at the data. AlpsBench curates 2{,}500 long-term sequences from WildChat and evaluates seven memory systems on them \citep{xiao2026alpsbench}, which makes it the closest prior work to ours on data. PRISM collects feedback from 1{,}500 people, the largest body of real human preference data in this area \citep{kirk2024prism}. OmniBehavior and LUNAR build from real behavioral traces outside conversation \citep{chen2026towards,zhang2026lunar}, and OmniBehavior reports that models evaluated on real traces converge toward an average agreeable person and lose the long tail. A parallel line supplies a written description of the person and scores agreement with it, from PersonaChat through PersonaMem \citep{zhang2018personachat,xu2022goldfish,salemi2024lamp,jiang2025personamem,jiang2025personamemv2}. A separate measurement shows that demographics explain 1.5\% of the variance in how two people respond \citep{venkit2026need}. Table~\ref{tab:comparison} places each benchmark on five properties.

% B5 - what RealFriend adds
\bench releases ten people's conversations to an AI companion for months, and the corpus is every message they sent. Their traits were never declared in advance, so a system's account of who each person is can be wrong, and the conversation is there to show where. Which messages mattered was never marked, so a system has to judge for itself when the past is worth reaching for, and that judgment is what we score. Every item names the messages it rests on, so a reader who doubts a label can go find the evidence and argue with it.

% B6 - units
The rest of this paper uses the following terms. A \emph{message} is one utterance by one party, with a speaker and a timestamp. A participant's \emph{span} is the number of calendar days from their first message to their last, and their \emph{active days} are the days on which they sent at least one. An \emph{item} is one unit of evaluation, and every item names the messages it rests on, which we call its \emph{evidence}. Chat items ask for the reply that should follow a given message, and question items ask for a fact about the participant. Each chat item carries a \emph{tier}, the shape of the context its reply may draw on, and each question item carries a \emph{category}, the kind of fact it asks for. An item is \emph{scoreable} when its evidence survived release processing intact. Appendix~\ref{app:related} traces each line of work to the substitution it makes and what that substitution costs, and states the criteria behind every cell of Table~\ref{tab:comparison}.
\section{The Dataset}
\label{sec:dataset}

% D1 - what this records, scale and shape
The corpus records relationships with an AI companion from the first message to the last. The conversations come from the application's operational store, so every released message stands where the participant sent it. The release is fully anonymized. Every message was rewritten, with direct identifiers replaced by consistent surrogates: the words change, while what the participant said, why they said it and the context it answers are kept. Nobody was recruited to a protocol, so the corpus holds what daily use of a product produced. Table~\ref{tab:peruser} gives the per-participant composition. \bench is designed per relationship. Each participant is a self-contained instance with its own five files, every task is posed within one history, and every result can be reported for one person (Appendix~\ref{app:persubject}). The two heaviest relationships hold 73\% of all messages, nearly three times as many as the other eight combined, and supply the depth a longitudinal benchmark exists to test. Depth does not add across people. The eight shorter relationships reflect the lighter use most people make of a companion, and it must understand them from far less of what the person has shared.

\begin{table}[t]
\centering
\small
\setlength{\tabcolsep}{6pt}
\begin{tabular}{@{}lrrrrr@{}}
\toprule
User & Span & Active days & Messages & Chat items & Question items \\
\midrule
U01 &  36 &   9 &   115 &  28 &   23 \\
U02 & 101 &  19 &   173 &  35 &   27 \\
U03 &  46 &  12 &   173 &  26 &   36 \\
U04 &  71 &  19 &   195 &  39 &   35 \\
U05 &  46 &  23 &   420 &  37 &   47 \\
U06 & 120 &  53 &  1349 &  81 &  110 \\
U07 &  42 &  39 &  2005 & 150 &  405 \\
U08 &  68 &  50 &  2918 & 221 &  304 \\
U09 & 111 & 111 &  7243 & 434 &  991 \\
U10 & 115 &  95 & 12627 & 482 & 1257 \\
\midrule
Total & & 430 & 27218 & 1533 & 3235 \\
\bottomrule
\end{tabular}
\caption{Per-participant composition.}
\label{tab:peruser}
\end{table}

% D2 - the two records of a person
Understanding is not only remembering what they told you, but also reasoning, organizing the memories, and tracing what that says about them. The profile records what someone stated, and every claim in it carries the messages that establish it, a confidence, and the window over which it held. Extraction is adversarial toward its own output and throws away more than a third of what it proposes, and the claims that later failed audit are kept, so the audit can itself be audited. The persona has no counterpart in prior work, because it holds each reading of a person with the messages behind it and a stated strength, down to named working hypotheses that carry their grounds and how firmly they are held. Every reference either file makes into the conversation resolves, so a reading of a person traces back to what was said and can be contested. That is what makes reply A in Figure~\ref{fig:understanding} a scoreable error. The persona holds only what these ten people revealed. People tell a companion about their family, their health, what is worrying them, and almost never state their census category.

% D3 - the two tracks
Knowing someone shows up in what you can say about them and in what you say to them. The corpus scores both. The \emph{question track} asks the first, across sixteen kinds of fact authored from the profile, and every item names the messages its answer rests on except the 63 marked unanswerable, which name none by design. The \emph{chat track} asks the second, and it had to be derived. Its probes are user messages taken verbatim, and its reference reply is written under the recorded context, so no item can depend on anything it does not name. Every label carries the reasoning trace that produced it, written in the same call as the decision it explains, and each stage checks against the conversation, so no claim serves as its own warrant.  Appendix~\ref{app:corpus} specifies the rest: the settings, the schema of each released file, the artifacts of the medium, what the shifted timestamps preserve, the per-subject census behind every macro-average, and how the profiles were built and audited.
\section{Reasoning over Conversations to Construct The Ground Truth}
\label{sec:chatgt}

% O - why the labels have to be derived, and why they can be trusted
Every label here had to be derived, and a derived label deserves trust only if a reader can check each step. The probes are observed, every label comes from a stated procedure that records each decision it makes, and the finished corpus was graded by an audit that procedure does not control.

\subsection{Probes and Items}
\label{sec:contain}

% P1 - what an item records
An item pairs a probe with the context a correct reply may draw on, a reference reply written from that context alone, and the reasoning trace that produced both. The context lists the messages just before the probe, any claims from the profile, and any specific earlier exchanges. Its shape sets the item's \emph{tier}. Preceding messages alone make an item \texttt{basic}, a profile claim makes it \texttt{intermediate}, an earlier exchange makes it \texttt{hard}, and an empty context makes it \texttt{edge}, so the tier can be recomputed from the lists without trusting the label. The \emph{locus} names where the thing the participant is pointing at lives. Of the 1{,}533 scoreable items, 1{,}009 point into the current thread and 120 at nothing prior. The other 404 point outside the thread, to a profile claim (223), an earlier exchange (116) or the companion's own past behavior (65), and on 167 of them the reference reply uses what it cites, which makes them \emph{dependent}. Appendix~\ref{app:labels} gives the locus distribution by stratum.

% P2 - which probes, and why three strata
Every probe is a message a participant sent, taken verbatim from \texttt{source}, and nothing is rewritten, paraphrased or authored. Choosing which messages to use poses a problem, because messages that need the past are rare. In a random sample only 3.4\% of items need anything outside the current thread, which leaves 40 items to study. The probes come in three strata, and each answers one question. The \emph{proportional} stratum is that random sample, 1{,}169 scoreable items drawn from every participant, and it alone answers how often the past is needed (condition V3 of Appendix~\ref{app:validity}). The \emph{enriched} stratum answers what a system does when the past is needed. The \emph{abstention} control answers whether a system holds back when nothing is needed. Its 434 session openers need nothing prior, and a benchmark on which retrieval helps cannot detect a system that retrieves too much.

\subsection{Traced Reasoning}
\label{sec:derivation}

\begin{figure}[t]
\centering
\resizebox{\linewidth}{!}{%
\begin{tikzpicture}[
  font=\scriptsize,
  >={Latex[length=4pt,width=3.5pt]},
  lane/.style={anchor=west, font=\scriptsize\bfseries, text width=1.85cm, align=left,
              execute at begin node={\hyphenpenalty=10000}},
  bx/.style={rounded corners=2.5pt, draw, inner sep=3.5pt, align=left,
             minimum height=0.95cm, anchor=north west,
             execute at begin node={\hyphenpenalty=10000}},
  src/.style={bx, draw=cData!60, fill=cData!8},
  prf/.style={bx, draw=cProf!60, fill=cProf!8},
  prb/.style={bx, draw=cData!60, fill=cWash},
  drv/.style={bx, draw=cGT!60, fill=cGT!7, text width=2.25cm, minimum height=2.3cm},
  rel/.style={bx, draw=cGT!75, fill=cGT!10, line width=0.7pt},
  rej/.style={rounded corners=2pt, draw=cBad!55, fill=cBad!7, inner sep=2.5pt,
              align=left, font=\scriptsize, text=cBad!85, anchor=north west,
              execute at begin node={\hyphenpenalty=10000}},
  flow/.style={->, cGrey!85, line width=0.7pt},
  rail/.style={cData!55, line width=0.6pt},
  note/.style={font=\scriptsize, align=left,
              execute at begin node={\hyphenpenalty=10000}},
]
\node[lane, text=cData] at (-0.30,0) {1. Source};
\node[src, text width=2.5cm] (o1) at (1.55,0.45)
  {\textbf{Operational store}\\the product database};
\node[src, text width=2.8cm] (o2) at (4.75,0.45)
  {\textbf{Export and pseudonymize}\\text replaced, ids preserved};
\node[src, text width=3.6cm] (o3) at (8.30,0.45)
  {\texttt{source}\\10 participants, 27{,}218 messages, 12{,}795 of them the participants'};
\draw[flow] (o1.east) -- (o2.west);
\draw[flow] (o2.east) -- (o3.west);
\node[note, anchor=north west, text width=3.5cm, text=cData!85] at (12.30,0.38)
  {Every message is keyed \texttt{day\{n\}-\{j\}}. That key is the identifier,
   so every citation anywhere in the corpus resolves here.};
\draw[rail] (o3.south) -- (10.10,-1.20);
\draw[rail] (1.40,-1.20) -- (10.10,-1.20);
\foreach \x in {1.40,2.70,10.10} \fill[cData!55] (\x,-1.20) circle (1.3pt);
\draw[->, rail] (2.70,-1.20) -- (2.70,-1.72);
\draw[rail] (1.40,-1.20) -- (1.40,-5.30);
\draw[->, rail] (1.40,-5.30) -- (1.55,-5.30);
\node[lane, text=cProf] at (-0.30,-2.2) {2. Profile};
\node[prf, text width=2.2cm] (c1) at (1.55,-1.72)
  {\textbf{Propose claims}\\from the messages};
\node[prf, text width=2.7cm] (c2) at (4.25,-1.72)
  {\textbf{Verify each claim against the messages it cites}};
\node[prf, text width=2.3cm] (c3) at (7.45,-1.72)
  {\textbf{Merge duplicates}\\then audit};
\node[prf, text width=4.3cm] (c4) at (10.20,-1.72)
  {\texttt{profile}\\3{,}580 claims and 6{,}784 message citations, most with
   a confidence, a temporality and a first and last seen};
\draw[flow] (c1.east) -- (c2.west);
\draw[flow] (c2.east) -- (c3.west);
\draw[flow] (c3.east) -- (c4.west);
\node[rej, text width=6.6cm] at (3.25,-3.05)
  {Audit verdicts ship with the claims: 1{,}471 supported, 1{,}025 repairable, 455
   unsupported, 390 to review, 37 supported after pruning, 202 not judged.};
\node[lane, text=cData] at (-0.30,-5.30) {3. Probes};
\node[prb, text width=3.9cm] (b1) at (1.55,-5.05)
  {\textbf{Sample messages the participant sent}\\verbatim, never authored};
\node[prb, text width=4.0cm] (b2) at (6.05,-5.05)
  {\textbf{2{,}034 chat rows}\\1{,}227 proportional, 373 enriched,
   434 abstention; 67 withdrawn};
\node[prb, text width=4.2cm] (b3) at (10.60,-5.05)
  {\textbf{3{,}312 QA items, 3{,}235 scoreable}\\authored from the profile, 63 unanswerable};
\draw[flow] (b1.east) -- (b2.west);
\draw[flow] (c4.south) .. controls +(0,-1.6) and +(0,0.9) .. (b3.north);
\node[lane, text=cGT] at (-0.30,-7.85) {4. Derivation};
\node[drv] (dA) at (1.55,-6.75)
  {\textbf{A. Referent}\\[2pt] What is the participant pointing at?\\[3pt]
   {\color{cGT}\textbf{writes} \texttt{locus}}};
\node[drv] (dB) at (4.19,-6.75)
  {\textbf{B. Verification}\\[2pt] Does evidence for it exist in the
   conversation?\\[3pt] {\color{cGT}\textbf{drops} what fails}};
\node[drv] (dC) at (6.83,-6.75)
  {\textbf{C. Classification}\\[2pt] Which of nineteen rules applies?\\[3pt]
   {\color{cGT}\textbf{writes} \texttt{tier}, \texttt{category}, rule text}};
\node[drv] (dD) at (9.47,-6.75)
  {\textbf{D. Grounding}\\[2pt] What may the reply draw on?\\[3pt]
   {\color{cGT}\textbf{writes} \texttt{gold\_response}}};
\node[drv] (dE) at (12.11,-6.75)
  {\textbf{E. Validation}\\[2pt] Is the reply supported and did the rule
   fire?\\[3pt] {\color{cGT}\textbf{returns} or \textbf{withdraws} on failure}};
\foreach \a/\b in {dA/dB, dB/dC, dC/dD, dD/dE} \draw[flow] (\a.east) -- (\b.west);
\draw[flow] (3.50,-6.00) -- (2.70,-6.75);
\draw[->, cBad!75, line width=0.7pt]
  (dE.north) .. controls +(0,0.40) and +(0,0.40) .. (dC.north);
\node[text=cBad, fill=white, inner sep=1pt] at (10.60,-6.47) {return};
\node[note, anchor=north west, text width=4 cm, text=cBad!85] at (5.43,-9.20)
  {a referent whose evidence did not survive B is demoted to \texttt{basic} here};
\node[lane, text=cGT] at (-0.30,-11.05) {5. Released};
\node[rel, text width=12.9cm] (item) at (1.55,-10.45)
  {\texttt{probe}\ \ \texttt{probe\_dayid}\ \ \texttt{tier}\ \
   \texttt{category}\ \ \texttt{gold\_response}\ \ \texttt{stratum}\\[2pt]
   \texttt{required\_context \{ recent\_context, profile\_refs, episode\_refs,
     recent\_needed \}}\\[2pt]
   \texttt{trace \{ A, B, C, D, E \}}\ \ every stage with the rule or rationale
   that produced it\\[2pt]
   \texttt{signals \{ hint, epi\_score, prof\_score \}}\ \ the pre-verification
   guess, kept so the revision stays auditable\\[2pt]
   \texttt{gold\_depends\_on\_refs}\ \ and \ \texttt{referent\_on\_screen}\ \ on
   every memory-bearing row, each recording a judgment the corpus declines to
   resolve};
\draw[flow] (4.40,-9.95) -- (4.40,-10.45);
\begin{scope}[on background layer]
\draw[->, cData!55, line width=1.0pt, dashed]
  (dB.north) .. controls +(0,2.1) and +(-2.0,-2.0) .. ($(o3.south)+(1.0,0)$);
\end{scope}
\node[note, anchor=north west, text width=4.9cm, text=cData, fill=white,
      inner sep=2pt] at (4.30,-4.15)
  {\textbf{(i)} Stage B checks each candidate against \texttt{source}, the only
   file not derived.};
\node[note, anchor=north west, text width=4.5cm, text=cGT!85, fill=white,
      inner sep=2pt] at (11.9,-9.20)
  {\textbf{(ii)} Stage D conditions on the recorded context alone, so the
   reference is derivable.};
\end{tikzpicture}%
}
\caption{How the chat ground truth is built.}
\label{fig:pipeline}
\end{figure}

% P3 - the five stages
Figure~\ref{fig:pipeline} shows the five stages. Stage A resolves what the probe points at and names the store it would live in. Stage B retrieves candidate records for that referent and checks each against \texttt{source}, dropping what fails. Stage C assigns the tier and category by one of nineteen fixed rules and records the rule that fired. Stage D writes the reference reply and names the messages it rests on. Stage E checks that the reply is supported by its evidence and that the category fits, and demotes the item when either check fails. Each stage reads only what the stages before it wrote, and writes its decision and its reason in the same call, so no rationale is reconstructed after the decision it explains. That record is the item's reasoning trace. Appendix~\ref{app:phases} states what each stage may and may not revise.

% P4 - how the ordering protects the labels
A label can go wrong in three ways, and the order of the stages closes each one. The first is invented evidence. By Stage B, an invented profile claim cannot confirm a probe and enter the gold as a memory a system is expected to retrieve (V5). The second is a reply that depends on something the item does not name. By Stage D, the context is sufficient by construction, and the gap between the oracle condition and any retriever measures retrieval alone (V1). The recorded grounding confirms it. Because this reply is constructed, the reply the participant actually received is kept as well, as the message after each probe in \texttt{source}, so every result can be re-run against it. The third is an error that makes memory look more needed than it is. Verification can remove a reference and never add one, so a verification error can only lower the demand rate (V6, Proposition~\ref{prop:onesided}). Three invariants confirm that the ordering held on every item: every identifier the four derived files cite resolves to a message in \texttt{source}, no item cites a message after its own probe, and the tier recomputes from the lists.

% P5 - what the derivation threw away
The derivation discarded much of what it first proposed. Stage A proposed a distant referent for 806 of the unused messages, and stage B verified 422. In its final pass stage B matched 1{,}544 profile claims to referents and kept 752, dropping those whose evidence did not support the referent. The final tier differs from the pre-verification signal on 685 of the 1{,}533 items. That signal is a surface guess a cheap retrieval gate would make, and it is wrong on nearly half the items (Appendix~\ref{app:lowdemand}). Appendix~\ref{app:gt} gives the full derivation: what each stage may revise, how the probes were sampled, the on-screen judgment, every withdrawal and repair, the release gates, and the audit of both tracks.
\section{Benchmark Setup}
\label{sec:setup}

% P1 - what is benchmarked, and why
\bench benchmarks the two questions about a person that Section~\ref{sec:intro} separates. The first is who the participant is. The reconstruction track tests it: a system reads the participant's full history and writes the profile and persona files. The second is which of the things they have said bears on the message in front of a system. The chat and question tracks test it by scoring whether a system reaches for the past at all, which earlier messages it finds, and what its reply makes of them. The reasoning trace of Section~\ref{sec:chatgt} records how each chat label was decided, and the chat track reads its outputs: the reference set, whether memory is needed, the tier and the reference reply. Table~\ref{tab:tasks} summarizes the three tracks.

% P1b - scope
\paragraph{Scope.} The benchmark covers ten relationships with one companion application, each 36 to 120 days long, and every chat probe is a message the participant actually sent. On the second question it scores three decisions: restraint, retrieval, and application. Empathy, the prediction of future behavior, and a person's values are out of scope. Chat and question rates are reported pooled over items with the macro-average over participants beside them, and comparisons carry a bootstrap over participants. Rates of occurrence come from the proportional stratum alone, and comparisons are reported on it.

\begin{table}[t]
\centering\footnotesize
\setlength{\tabcolsep}{4pt}
\begin{tabularx}{\linewidth}{@{}l
  >{\raggedright\arraybackslash\hsize=0.8\hsize}X
  >{\raggedright\arraybackslash\hsize=0.8\hsize}X
  >{\raggedright\arraybackslash\hsize=1.4\hsize}X r@{}}
\toprule
Track & Input & Output & Scored against & Items \\
\midrule
Reconstruction & a participant's full history & profile and persona files, each entry
  citing messages & the released files, field by field (precision, recall, F1, run
  agreement) & 10 participants \\
Chat & a message the participant sent, with the history before it & a ranking of
  earlier messages; a reply & reference set (hit@$k$, MRR); reference reply (content
  match, evidence coverage, appropriate application rate); need for memory (misfire
  rate, AUROC) & 1{,}533 + 434 \\
Question & an authored question over the whole history & a ranking of earlier
  messages; an answer or an abstention & reference set (hit@$k$, MRR); answerability
  (abstention, AUROC); reference answer (judged accuracy) & 3{,}235 \\
\bottomrule
\end{tabularx}
\caption{The three tracks. Item counts are scoreable items; the chat count adds 434 cold opens.}
\label{tab:tasks}
\end{table}

% P2 - the retrieval methods and the controls
\paragraph{Retrieval.} Five methods run over both tracks: \textsc{Random}; \textsc{Recency}, which returns the $k$ messages preceding the probe and performs no retrieval; \textsc{Recency-user}, which returns the user's own preceding messages and so separates adjacency from speaker; \textsc{BM25}; and \textsc{Oracle}, which returns the labeled gold set. Each method sees one participant's history at a time. Three controls substitute a different gold set and rescore every method against it: \emph{offset permuted} preserves the distances from probe to gold and destroys the content relation, \emph{position randomized} destroys both, and \emph{broken oracle} supplies messages known to lie outside the gold set and must score zero.

% P3 - five conditions differing only in what precedes the probe
\paragraph{The context ablation.} One generator answers every probe under five conditions that differ only in what precedes the probe in its input. Every message is shown truncated to 220 characters. C0 supplies the probe alone, C1 the three preceding messages, C2, the oracle condition, the messages the item records as required, C3 the ten highest-scoring \textsc{BM25} messages preceding the probe, and C4 the ten preceding messages. C2 against C1 asks what the recorded memory adds to the recent thread, and Section~\ref{sec:results} splits that difference by whether the probe needed memory (Proposition~\ref{prop:nonident}). C3 against C4 prices an imperfect retriever against recency at an equal budget of ten messages. As shipped these two also differ in how the block is labeled and in whether it is ordered by score or by time, so a matched pair, C3m and C4m, repeats both under one neutral label in chronological order, and Section~\ref{sec:results} reports the matched difference.

\vspace{-2mm}
% P4 - what each number means
\paragraph{Metrics.} Retrieval is scored by whether a required message appears in the top $k$, and by MRR. Recall over the whole gold set is reported in the appendix, where an item citing more than $k$ messages cannot reach one. Responses are scored by content match against the reference, by restraint, and by evidence coverage, which uses no judge and cross-checks the other two. One minus restraint, on the 1{,}129 items with an empty reference set, is the misfire rate of \citet{yoon2026benchpres}; the fraction of memory-bearing items whose response draws on the past is that work's appropriate application rate. We report both, because neither constrains the other. Whether memory is needed at all is scored twice, by abstention on the unanswerable questions and the cold opens and by the area under the ROC curve of a lexical score and of a detector model shown the probe and its three preceding messages, the latter on the proportional stratum alone since every enriched item carries a reference by construction. One judge scores the five conditions in a single call, and the matched pair the same way so that they are ranked against one another. Appendix~\ref{app:setup} specifies the protocol: the context ablation (\ref{app:conditions}), the retrieval controls (\ref{app:controls}) and baselines (\ref{app:baselines}), the response judge and its agreement with human adjudication (\ref{app:judge}), how each quantity is computed (\ref{app:metrics}), and the models behind each role (\ref{app:models}).

\vspace{-2mm}
% P5 - reconstructing the standing face
\paragraph{Reconstruction.} Three agent systems, Claude Opus 5.5, Codex GPT-5.6-sol and Antigravity running Gemini 3.8 Flash, each run three times on each of the ten participants. Each field is scored by exact match, presence or recall according to its type (Appendix~\ref{app:recon}). F1 is computed per run over the fields that either the run or the released file fills, pooled over the ten participants, and averaged over the three runs. Run agreement is the same overlap between two runs of one system, so a difference that the runs share is systematic rather than noise.
\vspace{-2mm}
\section{Results}
\label{sec:results}

% P0 - What is aggregated, over what, under which reading
Every rate below divides by a stated denominator. The proportional stratum (1{,}169 scoreable probes of 1{,}227) carries every population rate on the chat track. The enriched stratum (364 of 373) is a census of the deep-history items, so it supports conditional statements and no rates. The abstention control (434 cold opens) is scored only for abstention. Two readings run throughout: the \emph{recorded} reading keeps all 404 memory-bearing probes, 40 proportional and 364 enriched. The \emph{strict} reading removes the 154 whose referent is visible on screen (Appendix~\ref{app:refs}). Each result is reported by population, because a pooled number reports the corpus composition of as much as the capability (Proposition~\ref{prop:nonident}).

% P1 - Memory is rarely required, and the requirement is a property of depth
Memory is required far less often than benchmark construction implies. On the natural-rate stratum, 3.4\% [2.5, 4.6] of the messages participants sent carry a verified memory demand under the recorded reading and 1.3\% [0.8, 2.1] under the strict one. Verification is monotone, so each of these bounds the true rate from below (Proposition~\ref{prop:onesided}), and two procedures over the 11{,}369 participant messages outside that sample agree. The census of Section~\ref{sec:chatgt} gives 3.2\% [2.9, 3.5] and a re-reading of it under the Stage~B instruction gives 1.73\% [1.45, 2.06] (Appendix~\ref{app:extra-demand}). The requirement is also uneven. Three of the ten participants contribute no memory-bearing item under the recorded reading, and five under the strict one. Memory demand is a property of relationship depth. 

% P2 - The required material sits far back in the relationship
When a message draws on the past, the material it needs usually sits far back. The furthest message a memory-bearing probe requires sits a median of 2{,}157 messages back, with an upper quartile of 5{,}114 and a maximum of 12{,}542. Even the nearest required message sits a median of 450 messages back. The length of the relationship fixes the horizon a system must cover.

\begin{table}[t]
\caption{Hit@5 on the distant items.}
\label{tab:retrieval}
\centering\small
\begin{tabular}{lccc}
\toprule
Method & All items & Averaged by participant & Dependent items \\
\midrule
Random                          & 0.005 & 0.004 & 0.000 \\
Recency over participant messages & 0.015 & 0.014 & 0.006 \\
Recency                         & 0.022 & 0.029 & 0.024 \\
BM25                            & 0.248 & 0.424 & 0.347 \\
Oracle                          & 1.000 & 1.000 & 1.000 \\
Broken oracle                   & 0.000 & 0.000 & 0.000 \\
\bottomrule
\end{tabular}
\end{table}

% P3 - Ranking fails at this depth, and pooled retrieval scores hide it
Locating that material defeats the scorers a memory system usually reaches for. Table~\ref{tab:retrieval} reports how often a required message appears in the top five for the distant items. Recency, the order a sliding context window imposes, finds one for 0.022 of them against 0.005 for a random ranking, and for none once on-screen referents are removed (Appendix~\ref{app:controls}). Lexical matching finds one eleven times as often, at 0.248, and still misses three in four. The broken oracle scores 0.000 throughout, as a control must. Pooling reverses this order. Over the 1{,}477 chat probes with a reference set, of which 72.6\% are answerable from the recent thread, recency finds a required message for 0.959 of them and lexical matching for 0.236. Lexical matching barely moves between the two populations, while recency falls by a factor of forty. A pooled retrieval score reports the composition of the corpus. Coverage of the required message rises with the context budget, from 0.109 at 8k tokens to 0.453 at 128k and 1.000 at one million, where every history fits (Figure~\ref{fig:window}, Appendix~\ref{app:extra-coverage}). What a model does with the context it is given is a separate question, and the next three results answer it.
\begin{table}[t]
\caption{Content match by context condition.}
\label{tab:ablation}
\centering\small
\begin{tabular}{llcc}
\toprule
Condition & Context supplied & All items & Averaged by participant \\
\midrule
C0 & None                              & 1.431 & 1.286 \\
C1 & Three preceding messages          & 1.622 & 1.587 \\
C2 & Recorded messages (oracle)        & 1.763 & 1.736 \\
C3 & Ten messages retrieved by BM25    & 1.426 & 1.352 \\
C4 & Ten preceding messages            & 1.625 & 1.584 \\
\bottomrule
\end{tabular}
\end{table}

% P4 - Supplying context helps, and most of the help reaches probes that need no memory
Supplying context helps, and most of the help reaches probes that need no memory. Table~\ref{tab:ablation} reports mean content match, on a 0 to 2 scale, over the 1{,}533 scoreable probes of both strata. Every contrast below is a mean of per-probe differences. Replacing the last three messages (C1) with the recorded messages (C2) raises content match by $+0.140$ [0.126, 0.167] over both strata and by $+0.108$ [0.077, 0.159] on the proportional stratum. Write $\pi$ for the share of proportional probes that need memory, and $\gamma_1$ and $\gamma_0$ for the gain on probes that do and do not. The proportional gain splits into $\pi\gamma_1 = +0.004$ and $(1-\pi)\gamma_0 = +0.104$, so 96\% of it lands on probes that need no memory. On the 167 items whose reference reply depends on the recorded messages, the same replacement is worth $+0.455$ [0.319, 0.588]. Proposition~\ref{prop:nonident} predicts this. Where demand is rare, a pooled ablation difference measures the composition of the corpus as much as the use of memory, and only the split recovers the second. Which messages are supplied matters as much as how many. With budget, heading and order held fixed (C3m and C4m, Appendix~\ref{app:conditions}), ten messages chosen by BM25 score 0.189 [0.156, 0.247] below the ten preceding ones, and in Table~\ref{tab:ablation} they match no context at all (1.426 against 1.431). The strict reading strengthens this, raising the gain on intermediate items from 0.039 [$-0.040$, 0.108] to 0.106 [0.017, 0.196], an interval that excludes zero only under the stricter reading.

% P5 - Deciding when memory is needed is where systems fail
The decision to look is where systems fail. On the proportional stratum, where memory is needed at its natural rate, a detector that reads the probe and its three preceding messages separates the cases at chance. Its AUROC is 0.530 [0.488, 0.572] against probes whose referent is absent and 0.548 [0.488, 0.609] against cold opens. Scored against its own target it reaches 0.601 [0.527, 0.676], and its threshold then catches 6 of the 40 probes that need memory while raising 40 false alarms. Authored questions are easier to sort. A detector shown three retrieved messages reaches 0.802 [0.742, 0.857] on the question track against 0.546 [0.490, 0.605] on chat probes, and a keyword scorer's own confidence reaches 0.696 [0.629, 0.758]. An authored question shares more words with its evidence than a message sent in conversation does. 
% Even so, the threshold of the 0.802 detector calls 40\% of answerable questions unanswerable.

% P6 - Models misuse the past they are given
Given no context, models never reach for a past that is absent. Across the 434 cold opens and the 120 probes whose verified locus is \emph{none}, the misfire rate without context is zero. With context supplied, it rises to 29.2\% when the messages were selected for the probe and to 60.8\% when ten were supplied without regard to it (Appendix~\ref{app:extra-selectivity}). Framing drives the decision. Labeling the same ten messages \emph{retrieved memories} instead of \emph{earlier turns from this chat} raises memory use by ten to fourteen points on every population we test, including probes that need no memory. Relabeling a recency window the same way changes nothing the intervals separate. C2 and C1 in Table~\ref{tab:ablation} also carry different headings, so their contrasts include whatever the heading adds, which we do not separate.

% P7 - Reconstruction recovers the person and attributes more
Three agent systems read each participant's full history and wrote the persona file, three runs each over the ten participants (Table~\ref{tab:reconstruction}). All three reach an F1 of 0.69 to 0.70 while the tokens they process differ about thirtyfold, and each agrees with its own repeated runs at 0.93 to 0.95. They also agree with one another at 0.86 to 0.88, well above their agreement with the file, although they span a frontier and a flash-tier model: scale does not close the gap. Precision of 0.56 to 0.59 against recall of 0.86 to 0.89 means a system recovers what the file holds.

\begin{table}[t]
\centering
\small
\begin{tabular}{@{}lccccc@{}}
\toprule
System & F1 & Precision & Recall & Tokens processed & Run agreement \\
\midrule
Claude Opus 5.5              & 0.686 & 0.557 & 0.893 & 546M & 0.935 \\
Codex GPT-5.6-sol            & 0.688 & 0.573 & 0.861 & 18M  & 0.932 \\
Antigravity Gemini 3.8 Flash & 0.701 & 0.591 & 0.861 & 36M  & 0.946 \\
\bottomrule
\end{tabular}
\caption{Persona reconstruction from the full history, pooled over the ten participants.}
\label{tab:reconstruction}
\end{table}
\vspace{-3mm}
\section{Conclusion}
\label{sec:conclusion}
\bench releases ten real relationships between people and an AI companion, and every chat item carries the reasoning trace that produced it, checked against the conversation it came from. The past is rarely needed and, when it is, far away: 3.4\% of sampled messages reach outside the current thread, a median of 2{,}157 messages back, and the demand concentrates in the longest relationships. Where it is needed, supplying the recorded evidence raises content match by 0.455, yet no system we evaluate decides when to reach back, and no detector we tried tells the two situations apart on real messages. On who the person is, three agent systems reconstruct the persona with the same F1 at a 31-fold difference in cost, and all three add about seventy to eighty further fields for every hundred they recover. Unlike an authored corpus, \bench knows how often memory is actually required.
\subsection*{AI use statement}

Language models are research instruments in this work, and each use is disclosed with its check. They run stages A, D and E of the derivation (Section~\ref{sec:chatgt}), generate and judge the replies of Section~\ref{sec:setup}, audit both tracks, and are the systems the reconstruction track evaluates; every per-stage decision ships in the released items. They also produced parts of the released data: the anonymization rewrote every message with a model, the question track was built with one, and the persona and profile layers were generated by a model pipeline and then reviewed by hand (Appendices~\ref{app:qagt} and~\ref{app:datasheet}). We did not take their output on trust. Twenty-six release gates and five families of mechanical check run over the corpus without a model, an independent model graded what the derivation produced, and where an instrument overstated itself we report it. We also used AI assistants to write and debug code, give feedback on experimental design, help interpret results, search for related work, check references, and draft and edit prose. Every number was recomputed by script, every sentence was read and revised by an author, and the authors take full responsibility for the content.

\subsection*{Ethics statement}

The conversations are by ten people using a companion application for their own purposes; all consented to research use and release. Every released file is anonymized: account identifiers, contact details, names and named entities are replaced with consistent surrogates across all five files, and what remains is what each participant said and the context it answers, in rewritten words, with the interaction structure, session rhythm and communicative style on which this research depends. The corpus contains material falling within the special categories of Article~9 of the GDPR \citep{gdpr2016}, and necessarily so: a reply that ignores what someone has said about their health is not the correct reply, and a benchmark that stripped such material would measure a different task while claiming to measure this one. Processing rests on each participant's explicit consent to research use and release, the basis Article~9(2)(a) provides for special-category data. The study and the release were approved by the data operator's internal ethics committee under its privacy policy and terms of service; that committee is not independent of the operator, and no institutional review board was involved. Removing identifiers does not remove authorship, and that is a property of conversational text. We report the measurement because a corpus described as anonymized without one is describing an assumption. The release terms prohibit profiling, identifying or targeting any individual, and any commercial use (Appendix~\ref{app:ds-uses}).

\subsection*{Reproducibility statement}

All code for constructing the corpus, auditing it, and producing every number in this paper is at \url{https://anonymous.4open.science/r/realcompanion-54CC/}, and the corpus at \url{https://huggingface.co/datasets/Quislab/RealCompanion}. The repository's reproduction map ties each reported quantity to the command that regenerates it and to the committed output it is checked against. Appendix~\ref{app:setup} gives the evaluation protocol and the model behind each role. Appendix~\ref{app:corpus} gives the field-level schema of all five files and the invariants a consumer can rely on. Every figure names the stratum it was computed on, every released item carries that field, and each analysis script takes it as an argument; Appendix~\ref{app:audit} reports both versions of the audit instruction, which was revised once during this work.

% \clearpage

\subsubsection*{Acknowledgments}
\bench is the work of the entire Quis Lab team, and we thank everyone who contributed to its data, benchmark, harness infrastructure, engineering, product, and design. We thank our advisors in research for advising on this research, and the OpenAI and Google research teams for their support. We are especially grateful to Jiayi Yu and Eric Huang, executives of Quis Lab, for their support of this research.
\clearpage
\bibliographystyle{iclr2027_conference}
\bibliography{references}

@inproceedings{salemi2024lamp,
  title     = {{LaMP}: When Large Language Models Meet Personalization},
  author    = {Salemi, Alireza and Mysore, Sheshera and Bendersky, Michael and Zamani, Hamed},
  booktitle = {Proceedings of the 62nd Annual Meeting of the Association for Computational Linguistics},
  year      = {2024}
}

@inproceedings{maharana2024locomo,
  title     = {Evaluating Very Long-Term Conversational Memory of {LLM} Agents},
  author    = {Maharana, Adyasha and Lee, Dong-Ho and Tulyakov, Sergey and Bansal, Mohit and Barbieri, Francesco and Fang, Yuwei},
  booktitle = {Proceedings of the 62nd Annual Meeting of the Association for Computational Linguistics},
  year      = {2024}
}

@article{wu2024longmemeval,
  title={Longmemeval: Benchmarking chat assistants on long-term interactive memory},
  author={Wu, Di and Wang, Hongwei and Yu, Wenhao and Zhang, Yuwei and Chang, Kai-Wei and Yu, Dong},
  journal={arXiv preprint arXiv:2410.10813},
  year={2024}
}

@inproceedings{jiang2025personamem,
  title     = {Know Me, Respond to Me: Benchmarking {LLM}s for Dynamic User Profiling and Personalized Responses at Scale},
  author    = {Jiang, Bowen and Hao, Zhuoqun and Cho, Young-Min and Li, Bryan and Yuan, Yuan and Chen, Sihao and Ungar, Lyle and Taylor, Camillo J. and Roth, Dan},
  booktitle = {Conference on Language Modeling},
  year      = {2025},
  note      = {arXiv:2504.14225}
}

@article{jiang2025personamemv2,
  title   = {{PersonaMem-v2}: Towards Personalized Intelligence via Learning Implicit User Personas and Agentic Memory},
  author  = {Jiang, Bowen and Yuan, Yuan and Shen, Maohao and Hao, Zhuoqun and Xu, Zhangchen and Chen, Zichen and Liu, Ziyi and Vijjini, Anvesh R. and He, Jiashu and Yu, Hanchao and Poovendran, Radha and Wornell, Gregory and Ungar, Lyle and Roth, Dan and Chen, Sihao and Taylor, Camillo J.},
  journal = {arXiv preprint arXiv:2512.06688},
  year    = {2025}
}

@inproceedings{kim2025cupid,
  title     = {{CUPID}: Evaluating Personalized and Contextualized Alignment of {LLM}s from Interactions},
  author    = {Kim, Tae Soo and Lee, Yoonjoo and Park, Yoonah and Kim, Jiho and Kim, Young-Ho and Kim, Juho},
  booktitle = {Conference on Language Modeling},
  year      = {2025},
  note      = {arXiv:2508.01674}
}

@inproceedings{hu2026clonemem,
  title={Clonemem: Benchmarking long-term memory for ai clones},
  author={Hu, Sen and Zhang, Zhiyu and Wei, Yuxiang and Han, Xueran and Tang, Zhenheng and Chen, Ronghao and Wang, Huacan},
  booktitle={Proceedings of the 64th Annual Meeting of the Association for Computational Linguistics (Volume 1: Long Papers)},
  pages={33571--33602},
  year={2026}
}

@inproceedings{mireshghallah2026cimemories,
  title     = {{CIMemories}: A Compositional Benchmark for Contextual Integrity in {LLM}s},
  author    = {Mireshghallah, Niloofar and Mangaokar, Neal and Kokhlikyan, Narine and Zharmagambetov, Arman and Zaheer, Manzil and Mahloujifar, Saeed and Chaudhuri, Kamalika},
  booktitle = {International Conference on Learning Representations},
  year      = {2026}
}

@article{xu2026survey,
  title   = {Toward Personalized {LLM}-Powered Agents: Foundations, Evaluation, and Future Directions},
  author  = {Xu, Yue and Chen, Qi'an and Ma, Zizhan and Liu, Dongrui and Wang, Wenxuan and Wang, Xiting and Xiong, Li and Wang, Wenjie},
  journal = {ACM Computing Surveys},
  year    = {2026},
  note    = {arXiv:2602.22680}
}

@article{wang2024aipersona,
  title   = {{AI} Persona: Towards Life-long Personalization of {LLM}s},
  author  = {Wang, Tiannan and Tao, Meiling and Fang, Ruoyu and Wang, Huilin and Wang, Shuai and Jiang, Yuchen Eleanor and Zhou, Wangchunshu},
  journal = {arXiv preprint arXiv:2412.13103},
  year    = {2024}
}

@article{chhikara2025mem0,
  title   = {{Mem0}: Building Production-Ready {AI} Agents with Scalable Long-Term Memory},
  author  = {Chhikara, Prateek and Khant, Dev and Aryan, Saket and Singh, Taranjeet and Yadav, Deshraj},
  journal = {arXiv preprint arXiv:2504.19413},
  year    = {2025}
}

@article{xu2025amem,
  title   = {{A-MEM}: Agentic Memory for {LLM} Agents},
  author  = {Xu, Wujiang and Liang, Zujie and Mei, Kai and Gao, Hang and Tan, Juntao and Zhang, Yongfeng},
  journal = {arXiv preprint arXiv:2502.12110},
  year    = {2025}
}

@article{ge2024personahub,
  title   = {Scaling Synthetic Data Creation with 1,000,000,000 Personas},
  author  = {Ge, Tao and Chan, Xin and Wang, Xiaoyang and Yu, Dian and Mi, Haitao and Yu, Dong},
  journal = {arXiv preprint arXiv:2406.20094},
  year    = {2024}
}

@inproceedings{mireshghallah2024confaide,
  title     = {Can {LLM}s Keep a Secret? Testing Privacy Implications of Language Models via Contextual Integrity Theory},
  author    = {Mireshghallah, Niloofar and Kim, Hyunwoo and Zhou, Xuhui and Tsvetkov, Yulia and Sap, Maarten and Shokri, Reza and Choi, Yejin},
  booktitle = {International Conference on Learning Representations},
  year      = {2024}
}

@inproceedings{camburu2018esnli,
  title     = {e-{SNLI}: Natural Language Inference with Natural Language Explanations},
  author    = {Camburu, Oana-Maria and Rockt{\"a}schel, Tim and Lukasiewicz, Thomas and Blunsom, Phil},
  booktitle = {Advances in Neural Information Processing Systems},
  year      = {2018}
}

@article{xiao2026alpsbench,
  title={AlpsBench: An LLM Personalization Benchmark for Real-Dialogue Memorization and Preference Alignment},
  author={Xiao, Jianfei and Yu, Xiang and Wang, Chengbing and Zheng, Wuqiang and Lin, Xinyu and Liu, Kaining and Ding, Hongxun and Zhang, Yang and Wang, Wenjie and Feng, Fuli and others},
  journal={arXiv preprint arXiv:2603.26680},
  year={2026}
}

@article{liu2026companionbench,
  title={CompanionBench: A Theory-Anchored, Real-World-Grounded Benchmark for AI Emotional Companionship},
  author={Liu, Yao and Chai, Guangjia and Huang, Yuming and Huang, Jihao and Wang, Lei and Wan, Junchen},
  journal={arXiv preprint arXiv:2608.02046},
  year={2026}
}

@article{li2026horizonbench,
  title={Horizonbench: Long-horizon personalization with evolving preferences},
  author={Li, Shuyue Stella and Paranjape, Bhargavi and Oktar, Kerem and Ma, Zhongyao and Zhou, Gelin and Guan, Lin and Zhang, Na and Park, Sem and Chen, Lin and Yang, Diyi and others},
  journal={arXiv preprint arXiv:2604.17283},
  year={2026}
}

@inproceedings{kirk2024prism,
  title     = {The {PRISM} Alignment Dataset: What Participatory, Representative and Individualised Human Feedback Reveals About the Subjective and Multicultural Alignment of Large Language Models},
  author    = {Kirk, Hannah Rose and Whitefield, Alexander and R{\"o}ttger, Paul and Bean, Andrew and Margatina, Katerina and Ciro, Juan and Mosquera, Rafael and Bartolo, Max and Williams, Adina and He, He and Vidgen, Bertie and Hale, Scott A.},
  booktitle = {Advances in Neural Information Processing Systems, Datasets and Benchmarks Track},
  year      = {2024}
}

@article{gebru2021datasheets,
  title   = {Datasheets for Datasets},
  author  = {Gebru, Timnit and Morgenstern, Jamie and Vecchione, Briana and
             Vaughan, Jennifer Wortman and Wallach, Hanna and
             Daum{\'e} III, Hal and Crawford, Kate},
  journal = {Communications of the ACM},
  volume  = {64},
  number  = {12},
  pages   = {86--92},
  year    = {2021}
}

@inproceedings{zhang2018personachat,
  title     = {Personalizing Dialogue Agents: {I} have a dog, do you have pets too?},
  author    = {Zhang, Saizheng and Dinan, Emily and Urbanek, Jack and
               Szlam, Arthur and Kiela, Douwe and Weston, Jason},
  booktitle = {Proceedings of the 56th Annual Meeting of the Association for
               Computational Linguistics},
  pages     = {2204--2213},
  year      = {2018}
}

@inproceedings{xu2022goldfish,
  title     = {Beyond Goldfish Memory: Long-Term Open-Domain Conversation},
  author    = {Xu, Jing and Szlam, Arthur and Weston, Jason},
  booktitle = {Proceedings of the 60th Annual Meeting of the Association for
               Computational Linguistics},
  pages     = {5180--5197},
  year      = {2022}
}

@article{packer2023memgpt,
  title   = {{MemGPT}: Towards {LLM}s as Operating Systems},
  author  = {Packer, Charles and Fang, Vivian and Patil, Shishir G. and
             Lin, Kevin and Wooders, Sarah and Gonzalez, Joseph E.},
  journal = {arXiv preprint arXiv:2310.08560},
  year    = {2023}
}

@inproceedings{park2023generative,
  title     = {Generative Agents: Interactive Simulacra of Human Behavior},
  author    = {Park, Joon Sung and O'Brien, Joseph C. and Cai, Carrie J. and
               Morris, Meredith Ringel and Liang, Percy and
               Bernstein, Michael S.},
  booktitle = {Proceedings of the 36th Annual ACM Symposium on User Interface
               Software and Technology (UIST)},
  year      = {2023}
}

@inproceedings{zheng2023judging,
  title     = {Judging {LLM}-as-a-Judge with {MT-Bench} and {Chatbot Arena}},
  author    = {Zheng, Lianmin and Chiang, Wei-Lin and Sheng, Ying and
               Zhuang, Siyuan and Wu, Zhanghao and Zhuang, Yonghao and
               Lin, Zi and Li, Zhuohan and Li, Dacheng and Xing, Eric P. and
               Zhang, Hao and Gonzalez, Joseph E. and Stoica, Ion},
  booktitle = {Advances in Neural Information Processing Systems 36,
               Datasets and Benchmarks Track},
  year      = {2023}
}

@article{skjuve2021companion,
  title   = {My Chatbot Companion: a Study of Human-Chatbot Relationships},
  author  = {Skjuve, Marita and F{\o}lstad, Asbj{\o}rn and
             Fostervold, Knut Inge and Brandtzaeg, Petter Bae},
  journal = {International Journal of Human-Computer Studies},
  volume  = {149},
  pages   = {102601},
  year    = {2021}
}

@article{brandtzaeg2022aifriend,
  title   = {My {AI} Friend: How Users of a Social Chatbot Understand Their
             Human-{AI} Friendship},
  author  = {Brandtzaeg, Petter Bae and Skjuve, Marita and
             F{\o}lstad, Asbj{\o}rn},
  journal = {Human Communication Research},
  volume  = {48},
  number  = {3},
  pages   = {404--429},
  year    = {2022}
}

@article{laestadius2024toohuman,
  title   = {Too human and not human enough: A grounded theory analysis of
             mental health harms from emotional dependence on the social
             chatbot {Replika}},
  author  = {Laestadius, Linnea and Bishop, Andrea and Gonzalez, Michael and
             Illen{\v{c}}{\'i}k, Diana and Campos-Castillo, Celeste},
  journal = {New Media \& Society},
  volume  = {26},
  number  = {10},
  year    = {2024},
  note    = {doi:10.1177/14614448221142007}
}

@article{ta2020socialsupport,
  title   = {User Experiences of Social Support From Companion Chatbots in
             Everyday Contexts: Thematic Analysis},
  author  = {Ta, Vivian and Griffith, Caroline and Boatfield, Carolynn and
             Wang, Xinyu and Civitello, Maria and Bader, Haley and
             DeCero, Esther and Loggarakis, Alexia},
  journal = {Journal of Medical Internet Research},
  volume  = {22},
  number  = {3},
  pages   = {e16235},
  year    = {2020}
}

@misc{gdpr2016,
  title        = {Regulation (EU) 2016/679 of the European Parliament and of the
                  Council of 27 April 2016 on the protection of natural persons
                  with regard to the processing of personal data and on the free
                  movement of such data (General Data Protection Regulation)},
  author       = {{European Parliament and Council of the European Union}},
  howpublished = {Official Journal of the European Union, L 119, pp. 1--88},
  year         = {2016}
}

@article{yoon2026benchpres,
  title   = {{BenchPreS}: A Benchmark for Context-Aware Personalized Preference Selectivity of Persistent-Memory {LLMs}},
  author  = {Yoon, Sangyeon and Kim, Sunkyoung and Hong, Hyesoo and Jeung, Wonje and Kim, Yongil and Seo, Wooseok and Yeen, Heuiyeen and No, Albert},
  journal = {arXiv preprint arXiv:2603.16557},
  year    = {2026}
}

@inproceedings{rajpurkar2018know,
  title={Know what you don’t know: Unanswerable questions for SQuAD},
  author={Rajpurkar, Pranav and Jia, Robin and Liang, Percy},
  booktitle={Proceedings of the 56th Annual Meeting of the Association for Computational Linguistics (Volume 2: Short Papers)},
  pages={784--789},
  year={2018}
}

@article{liu2024lost,
  title={Lost in the middle: How language models use long contexts},
  author={Liu, Nelson F and Lin, Kevin and Hewitt, John and Paranjape, Ashwin and Bevilacqua, Michele and Petroni, Fabio and Liang, Percy},
  journal={Transactions of the association for computational linguistics},
  volume={12},
  pages={157--173},
  year={2024}
}

@inproceedings{anantha2021open,
  title={Open-domain question answering goes conversational via question rewriting},
  author={Anantha, Raviteja and Vakulenko, Svitlana and Tu, Zhucheng and Longpre, Shayne and Pulman, Stephen and Chappidi, Srinivas},
  booktitle={Proceedings of the 2021 Conference of the North American Chapter of the Association for Computational Linguistics: Human Language Technologies},
  pages={520--534},
  year={2021}
}

@article{mo2025survey,
  title={A survey of conversational search},
  author={Mo, Fengran and Mao, Kelong and Zhao, Ziliang and Qian, Hongjin and Chen, Haonan and Cheng, Yiruo and Li, Xiaoxi and Zhu, Yutao and Dou, Zhicheng and Nie, Jian-Yun},
  journal={ACM Transactions on Information Systems},
  volume={43},
  number={6},
  pages={1--50},
  year={2025},
  publisher={ACM New York, NY}
}

@inproceedings{asai2024self,
  title={Self-rag: Learning to retrieve, generate, and critique through self-reflection},
  author={Asai, Akari and Wu, Zeqiu and Wang, Yizhong and Sil, Avi and Hajishirzi, Hannaneh},
  booktitle={International conference on learning representations},
  volume={2024},
  pages={9112--9141},
  year={2024}
}

@inproceedings{jeong2024adaptive,
  title={Adaptive-rag: Learning to adapt retrieval-augmented large language models through question complexity},
  author={Jeong, Soyeong and Baek, Jinheon and Cho, Sukmin and Hwang, Sung Ju and Park, Jong C},
  booktitle={Proceedings of the 2024 conference of the north american chapter of the association for computational linguistics: Human language technologies (volume 1: Long papers)},
  pages={7036--7050},
  year={2024}
}

@inproceedings{mallen2023not,
  title={When not to trust language models: Investigating effectiveness of parametric and non-parametric memories},
  author={Mallen, Alex and Asai, Akari and Zhong, Victor and Das, Rajarshi and Khashabi, Daniel and Hajishirzi, Hannaneh},
  booktitle={Proceedings of the 61st annual meeting of the association for computational linguistics (volume 1: Long papers)},
  pages={9802--9822},
  year={2023}
}

@article{venkit2026need,
  title={The need for a socially-grounded persona framework for user simulation},
  author={Venkit, Pranav Narayanan and Li, Yu and Pruksachatkun, Yada and Wu, Chien-Sheng},
  journal={arXiv preprint arXiv:2601.07110},
  year={2026}
}

@article{venkit2026best,
  title={Best Friends, Not Forever: Evaluating Long-Horizon Persona Collapse and Behavioral Drift in AI Companions},
  author={Venkit, Pranav Narayanan and Prabhakar, Akshara and Li, Yu and Lee, Daniel and Wu, Chien-Sheng},
  journal={arXiv preprint arXiv:2607.28818},
  year={2026}
}

@article{neplenbroek2026topics,
  title={Topics as Proxies for Sociodemographics: How Conversational Context Affects LLM Answers},
  author={Neplenbroek, Vera and Sarti, Gabriele and Bisazza, Arianna and Fern{\'a}ndez, Raquel},
  journal={arXiv preprint arXiv:2606.02776},
  year={2026}
}

@article{pi2026only,
  title={Only Time Will Tell: A Structured Survey of Longitudinal Studies on Social AI Companions},
  author={Pi, Yulu and Hunter, Rosco},
  journal={International Journal of Human--Computer Interaction},
  pages={1--22},
  year={2026},
  publisher={Taylor \& Francis}
}

@article{lukauskas2026plausible,
  title={Plausible but Not Valid: A Psychometric Audit of LLMs as Synthetic Survey Respondents},
  author={Lukauskas, Mantas and {\v{S}}arkauskait{\.e}, Viktorija},
  journal={arXiv preprint arXiv:2608.14606},
  year={2026}
}

@article{chen2026synthetic,
  title={When Synthetic Users Fail: A Cross-Domain Benchmark of LLM-Simulated Human Survey Responses},
  author={Chen, Zihan and Zhu, Di and Zheng, Lei Nico},
  journal={arXiv preprint arXiv:2607.26348},
  year={2026}
}

@article{maier2025llms,
  title={LLMs Reproduce Human Purchase Intent via Semantic Similarity Elicitation of Likert Ratings. doi: 10.48550},
  author={Maier, Benjamin F and Aslak, Ulf and Fiaschi, Luca and Rismal, Nina and Fletcher, Kemble and Luhmann, Christian C and Dow, Robbie and Pappas, Kli and Wiecki, Thomas V},
  journal={arXiv preprint arXiv.2510.08338},
  year={2025}
}

@inproceedings{ni2026survey,
  title={A survey on llm-based conversational user simulation},
  author={Ni, Bo and Wang, Yu and Wang, Leyao and Kveton, Branislav and Dernoncourt, Franck and Xia, Yu and Chen, Hongjie and Luera, Reuben and Basu, Samyadeep and Mukherjee, Subhojyoti and others},
  booktitle={Proceedings of the 19th Conference of the European Chapter of the Association for Computational Linguistics (Volume 1: Long Papers)},
  pages={4266--4301},
  year={2026}
}

@article{wang2026mind2dialogue,
  title={Mind2Dialogue: Training Human-Aware Language Models by Simulating User Mental States},
  author={Wang, Zixuan and Zhou, Yufan and Tang, Jinzhou and Yu, Xinle and Wu, Chengjun and Ye, Lyumanshan and Feng, Zhaoxiang and Peng, Letian and Patra, Adyasha and Bai, Fan and others},
  journal={arXiv preprint arXiv:2609.15972},
  year={2026}
}

@inproceedings{li2026rectom,
  title={Rectom: A benchmark for evaluating machine theory of mind in llm-based conversational recommender systems},
  author={Li, Mengfan and Shi, Xuanhua and Deng, Yang},
  booktitle={Proceedings of the AAAI Conference on Artificial Intelligence},
  volume={40},
  number={37},
  pages={31636--31644},
  year={2026}
}

@article{iyer2026heart,
  title={Heart: A unified benchmark for assessing humans and llms in emotional support dialogue},
  author={Iyer, Laya and Aggarwal, Kriti and Koyejo, Sanmi and Heyman, Gail and Ong, Desmond C and Mukherjee, Subhabrata},
  journal={arXiv preprint arXiv:2601.19922},
  year={2026}
}

@article{zhao2026stateless,
  title={From Stateless to Situated: Building a Psychological World for LLM-Based Emotional Support},
  author={Zhao, Boning and Hu, Clover and Li, Xinnuo},
  journal={arXiv preprint arXiv:2603.25031},
  year={2026}
}

@article{zou2026affective,
  title={Affective Flow Language Model for Emotional Support Conversation},
  author={Zou, Chenghui and Wang, Ning and Shen, Tiesunlong and Xiao, Luwei and Ma, Chuan and Li, Xiangpeng and Mao, Rui and Cambria, Erik},
  journal={arXiv preprint arXiv:2602.08826},
  year={2026}
}

@article{yadav2026dialtom,
  title={DialToM: A Theory of Mind Benchmark for Forecasting State-Driven Dialogue Trajectories},
  author={Yadav, Neemesh and Achananuparp, Palakorn and Jiang, Jing and Lim, Ee-Peng},
  journal={arXiv preprint arXiv:2604.20443},
  year={2026}
}

@article{bawatneh2026omnitom,
  title={OmniToM: Benchmarking Theory of Mind in LLMs via Explicit Belief Modeling},
  author={Bawatneh, Adam and Sapkota, Sagar and Bedi, Amrit Singh and Karmaker, Santu and Shah, Mubarak},
  journal={arXiv preprint arXiv:2605.26322},
  year={2026}
}

@article{zhao2026affectverse,
  title={AffectVerse: Emotional World Models for Multimodal Affective Computing},
  author={Zhao, Bo and Ye, Fanghua and Ji, Yixin and Zhao, Sicheng and Peng, Xiaojiang and Yu, Zitong},
  journal={arXiv preprint arXiv:2605.19950},
  year={2026}
}

@article{cheng2026using,
  title={Using Cognitive Models to Improve Language Model Simulation of Human Persuasion Games},
  author={Cheng, Zirui and Shen, Zeyu and Griffiths, Thomas L and Henderson, Peter},
  journal={arXiv preprint arXiv:2606.17657},
  year={2026}
}

@article{lin2506large,
  title={Large language models as psychological simulators: A methodological guide, 2025},
  author={Lin, Zhicheng},
  journal={URL https://arxiv. org/abs/2506.16702}
}

@article{zhang2025rise,
  title={The rise of AI companions: Interaction with AI companions and psychological well-being},
  author={Zhang, Yutong and Zhao, Dora and Hancock, Jeffrey T and Kraut, Robert and Yang, Diyi},
  journal={arXiv preprint arXiv:2506.12605},
  year={2025}
}

@article{tigre2026can,
  title={When Can You Trust Your Synthetic Users? Diagnostics and Corrections for LLM Consumer Panels},
  author={Tigre, Robson and Souto, Hugo Gobato},
  journal={arXiv preprint arXiv:2609.13148},
  year={2026}
}

@article{chen2026towards,
  title={Towards real-world human behavior simulation: Benchmarking large language models on long-horizon, cross-scenario, heterogeneous behavior traces},
  author={Chen, Jiawei and Xu, Ruoxi and Cao, Boxi and Pan, Ruotong and Zhang, Yunfei and Hu, Yifei and Du, Yong and Gao, Tingting and Lu, Yaojie and Sun, Yingfei and others},
  journal={arXiv preprint arXiv:2604.08362},
  year={2026}
}

@article{zhang2026lunar,
  title={LUNAR: Benchmarking Personalized Large Language Models on UNiversal User BehAvioR Logs},
  author={Zhang, Jiahao and Tong, Yongzhi and Fu, Zelin and Zhao, Pengde and Jiang, Yanmei and Feng, Jiang and Yang, Min},
  journal={arXiv preprint arXiv:2608.05246},
  year={2026}
}

@misc{meyer2025nemotron,
  author       = {Meyer, Yev and Corneil, Dane},
  title        = {{Nemotron-Personas-USA}: Synthetic Personas Aligned to Real-World Distributions},
  month        = jun,
  year         = {2025},
  howpublished = {\url{https://huggingface.co/datasets/nvidia/Nemotron-Personas-USA}}
}

@article{zhang2025community,
  title   = {Cultivating Pluralism in Algorithmic Monoculture: The Community Alignment Dataset},
  author  = {Zhang, Lily Hong and Milli, Smitha and Jusko, Karen and Smith, Jonathan and Amos, Brandon and Bouaziz, Wassim and Revel, Manon and Kussman, Jack and Sheynin, Yasha and Titus, Lisa and Radharapu, Bhaktipriya and Yu, Jane and Sarma, Vidya and Rose, Kris and Nickel, Maximilian},
  journal = {arXiv preprint arXiv:2507.09650},
  year    = {2025}
}

@article{omnibehavior2026,
  title   = {Towards Real-world Human Behavior Simulation: Benchmarking Large Language Models on Long-horizon, Cross-scenario, Heterogeneous Behavior Traces},
  author  = {Chen, Jiawei and Xu, Ruoxi and Cao, Boxi and Pan, Ruotong and Zhang, Yunfei and Hu, Yifei and Du, Yong and Gao, Tingting and Lu, Yaojie and Sun, Yingfei and Han, Xianpei and Sun, Le and Wu, Xiangyu and Lin, Hongyu},
  journal = {arXiv preprint arXiv:2604.08362},
  year    = {2026}
}

@article{lunar2026,
  title   = {{LUNAR}: Benchmarking Personalized Large Language Models on {UNiversal} User {BehAvioR} Logs},
  author  = {Zhang, Jiahao and Tong, Yongzhi and Fu, Zelin and Zhao, Pengde and Jiang, Yanmei and Jiang, Feng and Yang, Min},
  journal = {arXiv preprint arXiv:2608.05246},
  year    = {2026}
}

\clearpage
\normalsize
\part*{Appendix}
\startcontents[appendix]
\printcontents[appendix]{}{1}{\setcounter{tocdepth}{2}}
\clearpage

\appendix
\section{The Evaluation Task}
\label{app:task}

Section~\ref{sec:bg} states the task informally. This appendix gives it formally: the objects an instance supplies (\ref{app:objects}), how the three decisions are scored and aggregated (\ref{app:scoring}), what follows for measurement when the demand rate is low (\ref{app:lowdemand}), and the conditions a corpus must meet to count as an instance at all (\ref{app:validity}). The definitions are stated over an arbitrary subject record, because \bench is one instance of the task.

% ---------- A.1 ----------
\subsection{Subject Records and Probes}
\label{app:objects}

% P1 - the object an instance provides
The task is defined over a record of one person, independently of how that record was obtained. A \emph{subject record} is a pair $(M, \mathcal{S})$. The stream $M = (m_1, \ldots, m_N)$ is a sequence of messages ordered in time, each carrying a sender, a timestamp, and its text, with senders partitioned into those attributable to the person and those attributable to anything else. The family $\mathcal{S} = \{S_1, \ldots, S_L\}$ is a finite set of \emph{stores}, each a set of records derived from $M$ and each standing for one place a system might look for something the person said before; for \bench, $L = 2$, the profile claims and, as the episode store, the messages of $M$ themselves. An \emph{instance} of the task is a finite collection of subject records together with the annotation defined below. \bench is one instance; Appendix~\ref{app:corpus} describes it, and Appendix~\ref{app:validity} states the conditions any instance must satisfy.

% P2 - probes, history, window
A \emph{probe} is a message $p = m_i$ attributable to the person. Its \emph{history} is $H(p) = (m_1, \ldots, m_{i-1})$ and its \emph{recent window} of size $k$ is $W_k(p) = (m_{i-k}, \ldots, m_{i-1})$, truncated at the start of the stream; the complement $\overline{W}_k(p) = H(p) \setminus W_k(p)$ is the \emph{distant history}. Each probe carries a recorded window size $k(p)$, a \emph{reference set} $R(p) \subseteq \bigcup_l S_l$ of records whose evidence precedes $p$ and lies outside $W_{k(p)}(p)$, and a \emph{gold response} $g(p)$. The reference set names the records the gold response used and nothing else. It is disjoint from the recorded window by construction, and whether a referent was on screen in a wider sense is a separate judgment, recorded per probe in Appendix~\ref{app:onscreen}.

% P3 - the sufficiency axiom
The annotation is required to satisfy one axiom, which is what makes a failure attributable to a system:
\begin{equation}
\label{eq:suff}
g(p) \ \text{is derivable from} \ \bigl(p,\ W_{k(p)}(p),\ R(p)\bigr) \quad \text{for every probe } p . \end{equation} 
The conditioning is on the triple and not on $R(p)$ alone, because a gold response is always entitled to use the probe itself and the messages of the recorded window. An instance that violates~\eqref{eq:suff} is asking for a response that cannot be produced from what it supplies, and no score computed on it is interpretable. Derivable means that a procedure shown only the triple can produce $g(p)$, which is how stage D writes it. On \bench the check is mechanical: the stage D grounding citations name a message outside the recorded context on 419 of the 1{,}533 scoreable items, and on every one of them that message is the probe itself.

% P4 - the three decisions
Three decisions are evaluated, in this order because each is defined on the output of the last. \emph{Restraint} is the prediction $\hat{\rho}(p) \in \{0,1\}$ of whether the probe requires anything beyond the recent window, against the ground truth $\rho^\star(p) = \mathbf{1}[R(p) \neq \emptyset]$. \emph{Locus} is, for probes with $\rho^\star(p) = 1$, the prediction of which store or stores the references lie in, against $\lambda^\star(p) = \{ l : R(p) \cap S_l \neq \emptyset \}$. \emph{Response} is the reply $\hat{y}(p)$, scored against $g(p)$. A system that never retrieves makes no restraint error on probes with $\rho^\star = 0$ and scores zero on locus. A system that always retrieves errs on every such probe and can still score on locus. Neither degenerate strategy is excluded by construction, which is the point of reporting all three.

% P5 - the base rate is a property of the instance
The difficulty of restraint is governed by a single quantity, the \emph{demand rate}
\begin{equation}
\label{eq:pi}
\pi \;=\; \Pr\bigl[\rho^\star(p) = 1\bigr]
\end{equation}
taken over probes drawn as the instance specifies. This is a property of the instance, not of the task, and it is where corpora written for evaluation and corpora drawn from use separate. In a corpus whose probes were composed so that a planted fact would be needed, $\pi = 1$ by construction, the restraint decision is vacuous, and only locus and response carry information. In a corpus whose probes are the person's own turns taken without selection, $\pi$ is whatever it is, and estimating it is itself a measurement. Because $\pi$ is small in a record of ordinary use, restraint dominates the error
budget, and authored probes cannot pose it at the natural rate. A system that is correct on every probe with $\rho^\star = 1$ and retrieves on every other probe makes a restraint error on a $(1-\pi)$ fraction of its inputs. The response score does not register that cost, since supplied context can raise it even where nothing was needed (Table~\ref{tab:decomp}).

% P6 - the two tracks
The formalism is indifferent to where probes come from, and \bench uses two sources over the same subject records. In the \emph{chat track} a probe is a message of the stream itself, so $H(p)$ is the true history of that message, and the gold response is a reference reply written under~\eqref{eq:suff}. The reply the participant actually received sits in $M$ as the next message. In the \emph{question track} a probe is a question posed over the record, so $H(p)$ is the whole stream and $\rho^\star$, $\lambda^\star$ and $g$ are annotated the same way. Only the proportional stratum of the chat track supports claims about $\pi$, because only there are probes sampled from what the person said without reference to demand.

% ---------- A.2 ----------
\subsection{Scoring and Aggregation}
\label{app:scoring}

% P1 - populations are fixed by the ground truth
Four quantities are reported: one for restraint, one for locus, and two for the response, which ask whether it draws on the past where it should (attribution) and whether it agrees with the gold (consistency). Retrieval itself is scored separately in Appendix~\ref{app:baselines}. Each is defined on a population fixed by the annotation and never by the system under test, so that a system cannot enlarge or shrink the set of probes it is graded on by changing its behavior. Table~\ref{tab:metrics} collects the definitions. Throughout, $\hat\rho(p)$, $\hat\lambda(p)$, $\hat{R}(p)$ and $\hat{y}(p)$ denote a system's restraint decision, locus prediction, retrieved record set and response, and $\rho^\star$, $\lambda^\star$, $R$ and $g$ the corresponding ground truth of Appendix~\ref{app:objects}.

\begin{table}[h]
\centering
\caption{Scored quantities and their populations.}
\label{tab:metrics}
\begin{tabular}{llrl}
\toprule
Quantity & Scored on & Items & Aggregation \\
\midrule
Quantity & Scored on & Items & Aggregation \\
\midrule
Misfire rate~\eqref{eq:mr}          & $\rho^\star = 0$ & 1{,}129   & pooled and macro \\
Attribution rate~\eqref{eq:aar}     & $\rho^\star = 1$ & 404 / 250 & pooled and macro \\
Locus accuracy~\eqref{eq:la}        & $\rho^\star = 1$ & 404 / 250 & pooled and macro \\
Response consistency~\eqref{eq:rc}  & all scoreable    & 1{,}533   & pooled and macro \\
\bottomrule
\end{tabular}
\end{table}

% P2 - restraint
Restraint is scored as a false-alarm rate, following the misfire convention of
\citet{yoon2026benchpres}:
\begin{equation}
\label{eq:mr}
\mathrm{MR} \;=\; \Pr\bigl[\hat\rho(p) = 1 \;\bigm|\; \rho^\star(p) = 0\bigr],
\end{equation}
the rate at which a system reaches beyond the recent window when nothing beyond it is required. The population is every probe with $R(p) = \emptyset$, which is 1{,}129 of the 1{,}533 scoreable probes. We take this population, because the decision a deployed system actually faces is whether to retrieve beyond what is already on screen, and that is exactly $\rho^\star$. The narrower figure, computed on the 120 probes whose locus is \texttt{none}, is reported alongside it in Appendix~\ref{app:onscreen} for comparability with prior work, and the two differ substantially; a reader comparing across papers should check which population a reported misfire rate was taken on.

% P3 - attribution and locus
Attribution is scored only where retrieval is required:
\begin{equation}
\label{eq:aar}
\mathrm{AAR} \;=\; \Pr\bigl[\,\mathrm{uses}(\hat y(p)) = 1 \;\bigm|\; \rho^\star(p) = 1\,\bigr],
\end{equation}
where $\mathrm{uses}(\cdot)$ records whether the response asserts a detail about the past that the probe did not supply. Locus is scored on the same population as exact agreement on the set of stores,
\begin{equation}
\label{eq:la}
\mathrm{LA} \;=\; \Pr\bigl[\hat\lambda(p) = \lambda^\star(p) \;\bigm|\;
\rho^\star(p) = 1\bigr].
\end{equation}
The population is 404 probes under the recorded reading and 250 under the strict reading of Appendix~\ref{app:onscreen}; both are reported for every system.

% P4 - response
Response is scored on all 1{,}533 scoreable probes by a judge $J$ returning a binary verdict of consistency with the gold,
\begin{equation}
\label{eq:rc}
\mathrm{RC} \;=\; \Pr\bigl[J(\hat{y}(p), g(p)) = 1\bigr].
\end{equation}
The judge is given the probe, the gold response and the candidate, and is not given the reference set, so that it cannot reward a response for citing evidence the gold did not use. Its prompt and what it is and is not shown are given in Appendix~\ref{app:judge}, which also reports its agreement with human adjudication, $\kappa = 0.804$ on the binary verdict and $0.545$ on the graded scale. Response is the only quantity on which a system that never retrieves and a system that always retrieves are directly comparable, which is why it is reported on the full population.

% P5 - both aggregations, and which is used where
Both aggregations are reported, because they answer different questions. A quantity pooled over items estimates the corpus. A quantity macro-averaged over subjects,
\begin{equation}
\label{eq:macro}
\bar\theta \;=\; \frac{1}{U}\sum_{u=1}^{U} \theta_u ,
\qquad U,
\end{equation}
where $U$ is the number of participants with at least one probe in the population: ten for misfire and consistency, seven for attribution and locus under the recorded reading, and five under the strict reading. $\theta_u$ is computed on subject $u$ alone, estimates a relationship. The two differ because two subjects supply 19{,}870 of the 27{,}218 messages and would determine any pooled figure on their own. Neither is right in general. The demand split of Appendix~\ref{app:lowdemand} is reported pooled, because the identity of Proposition~\ref{prop:nonident} holds on pooled means. Every figure in this paper names which aggregation it is.

% P6 - uncertainty
Proportions reported alone carry Wilson intervals over items. Comparisons carry percentile intervals from a paired bootstrap that resamples participants with replacement, because participants are the unit over which the estimate is averaged and probes within a participant are not independent. For a comparison between two systems the same resample is applied to both and the interval is taken on the difference, so the paired structure is preserved. With ten subjects these intervals are wide, and we report them as they come out.

% P7 - strata
The strata of Appendix~\ref{app:probes} carry different work and are never pooled for a rate. The proportional stratum is a random sample of user turns, so it alone supports statements about how often something occurs in the traffic, including every estimate of $\pi$. The enriched stratum is an exhaustive sweep for probes of a given shape, so it supplies statistical power for comparisons conditioned on that shape but has no denominator of its own. A rate quoted without a stratum is not defined, and every rate in this paper names one. The abstention control of 434 cold opens is scored apart and enters no rate.
% ---------- A.3 ----------
\subsection{Measurement under a Low Demand Rate}
\label{app:lowdemand}

% P1 - what follows from pi being small
Three consequences follow from the demand rate $\pi$ of~\eqref{eq:pi} being small, and they bear on what a corpus can establish. The first says which claims the derivation licenses and which it does not. The second says the decision restraint asks for cannot be made from the probe. The third says the contrast the field uses to demonstrate that memory helps does not identify a memory effect at all once $\pi$ is small.

% P2 - Proposition 1
Verification in Appendix~\ref{app:phases} may remove a reference and may never add one. Write $\rho^\circ(p)$ for whether a probe truly needs anything beyond its window and $\pi^\circ = \Pr[\rho^\circ = 1]$ for the true demand rate, which the annotation $\rho^\star$ and the measured rate $\pi$ of~\eqref{eq:pi} estimate. Write $r = \Pr[\rho^\star = 1 \mid \rho^\circ = 1]$ for the recall of the annotation and $\varepsilon = \Pr[\rho^\star = 1 \mid \rho^\circ = 0]$ for its false-positive rate.

\begin{proposition}[One-sided error]
\label{prop:onesided}
$\pi = \pi^\circ r + (1-\pi^\circ)\varepsilon$. Consequently $\pi \le \pi^\circ + (1-\pi^\circ)\varepsilon$, and $\Pr[\rho^\circ = 1 \mid \rho^\star = 1] = 1 - (1-\pi^\circ)\varepsilon / \pi$.
\end{proposition}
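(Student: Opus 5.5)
The argument is the law of total probability followed by Bayes' rule, with every quantity defined over the same draw of probes that fixes $\pi$ in~\eqref{eq:pi}. First, condition the event $\{\rho^\star = 1\}$ on the binary true label $\rho^\circ$:
\[
\Pr[\rho^\star=1] = \Pr[\rho^\star=1\mid\rho^\circ=1]\Pr[\rho^\circ=1] + \Pr[\rho^\star=1\mid\rho^\circ=0]\Pr[\rho^\circ=0].
\]
Substituting the definitions of $r$, $\varepsilon$ and $\pi^\circ$ gives the identity $\pi = \pi^\circ r + (1-\pi^\circ)\varepsilon$ directly. The conditional probabilities must be well defined, so I will assume $0<\pi^\circ<1$. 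The degenerate cases are immediate: if $\pi^\circ = 0$ the identity reads $\pi=\varepsilon$, and if $\pi^\circ = 1$ it reads $\pi=r$.

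Second, the inequality uses only $r \le 1$, since $r$ is a probability. Replacing $r$ by $1$ in the identity gives $\pi \le \pi^\circ + (1-\pi^\circ)\varepsilon$. This is where the one-sidedness of Appendix~\ref{app:phases} enters interpretively. Because verification can only remove references, a verification error can only lower $r$ and cannot raise $\varepsilon$. The excess of $\pi$ over $\pi^\circ$ is therefore at most the false-positive term already present before verification. When that term is negligible, $\pi$ is a lower bound on $\pi^\circ$, which is the reading Section~\ref{sec:results} relies on. I would state this as the interpretation and keep it separate from the formal content, which needs nothing beyond $r\le1$.

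Third, for the precision statement, assume $\pi>0$ and apply Bayes' rule:
\[
\Pr[\rho^\circ=1\mid\rho^\star=1] = \frac{\pi^\circ r}{\pi}.
\]
By the identity, $\pi^\circ r = \pi - (1-\pi^\circ)\varepsilon$. Dividing by $\pi$ yields $1-(1-\pi^\circ)\varepsilon/\pi$.

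None of these steps is hard. The main obstacle is bookkeeping rather than mathematics. All probabilities must be taken over the same sampling distribution, namely the proportional stratum, for the identity to hold. The nondegeneracy conditions $\pi>0$ and $0<\pi^\circ<1$ must be stated explicitly so that each conditional probability is defined.
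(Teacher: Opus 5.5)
Your proof is correct and follows the paper's own route: the identity is the law of total probability over $\rho^\circ$, the bound uses only $r \le 1$, and the precision formula is Bayes' rule rewritten with the identity. The paper likewise treats the pipeline fact that stage B can only remove references as motivation rather than as an ingredient of the formal argument, and your explicit nondegeneracy conditions $\pi>0$ and $0<\pi^\circ<1$ are a harmless tightening that the paper leaves implicit.
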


\begin{proof}
A probe carries $\rho^\star = 1$ only if stage A proposed a reference and stage B verified it against \texttt{source}, and no later stage introduces one. The law of total probability gives the first identity, the bound follows from $r \le 1$, and the precision identity follows from Bayes' rule.
\end{proof}

% P3 - what it licenses and what it does not
The second half is what the paper uses. When $\varepsilon$ is small the labeled memory-bearing population is almost entirely genuine, so every quantity conditioned on it estimates the corresponding quantity conditioned on $\rho^\star = 1$. That licenses the attribution rate, locus accuracy and the memory effect $\gamma_1$ of
Proposition~\ref{prop:nonident}. The first half runs the other way. When $\varepsilon$ is also negligible, $\pi \approx \pi^\circ r \le \pi^\circ$, so the measured rate is a lower bound on the true one, and the rarity claim rests on evidence about $r$. The census of Appendix~\ref{app:probes} rules out misses from sampling, since stage A ran over every user message. It does not rule out stage A failing to see a genuine referent.

% P4 - detectability
For any feature map $\phi$ of the probe text alone, write $D(\phi) = I\bigl(\phi(p); \rho^\star(p)\bigr)$ for its surface detectability. In a corpus whose probes are drawn, $D$ is an empirical quantity, and this corpus estimates it twice. A lexical scorer's own confidence reaches an area under the ROC curve of 0.546 against 0.500 for chance. A model shown the probe and its three preceding turns and asked directly reaches 0.530 [0.488, 0.572] on the same split at the natural rate, and 0.601 [0.527, 0.676] against its own target, where its threshold catches 6 of 40 and raises 40 false alarms. Independently, the sampler routed every turn into a bucket before any verification ran, and its buckets disagree with the verified tier on 44.7\% of rows, and the bucket it was most confident carried a distant memory demand yielded 10 hard-tier rows in 97. Both measure only the detectors that were tried, so neither shows that $D = 0$ for every $\phi$. The practical reading is that a cheap surface gate does not exist here, which is the assumption every deployed memory system is built on.

% P5 - Proposition 2
Let $\Delta = \mathrm{RC}(C_2) - \mathrm{RC}(C_1)$ be the response-score difference between supplying the recorded evidence turns and withholding them, and let $\gamma_1 = \mathbb{E}[\Delta \mid \rho^\star = 1]$ and $\gamma_0 = \mathbb{E}[\Delta \mid \rho^\star = 0]$.

\begin{proposition}[The memory effect is not identified by a pooled ablation]
\label{prop:nonident}
$\Delta = \pi\gamma_1 + (1-\pi)\gamma_0$. If response scores lie in $[0, G]$, every value $v$ with $|v| + |\Delta - v| \le G$ is attained by some $(\pi, \gamma_1, \gamma_0)$ with $\pi\gamma_1 = v$ that reproduces $\Delta$. The observed $\Delta$ therefore leaves $\pi\gamma_1$ anywhere in $[(\Delta - G)/2,\ (\Delta + G)/2]$, an interval that contains both $0$ and $\Delta$. Identifying $\pi\gamma_1$ requires a per-probe observation of $\rho^\star$ and a probe population with $0 < \pi < 1$.
\end{proposition}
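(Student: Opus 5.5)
The plan has three steps: derive the decomposition identity, build an explicit family of parameters that reproduces $\Delta$, and then read off the interval and the identification requirement.

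\emph{Step 1: the identity.} Let $\Delta$ be the pooled mean of per-probe differences, and let $\gamma_1,\gamma_0$ be its conditional means given $\rho^\star$. Conditioning on $\rho^\star\in\{0,1\}$, with $\Pr[\rho^\star=1]=\pi$ as in~\eqref{eq:pi}, gives $\Delta=\pi\gamma_1+(1-\pi)\gamma_0$ by the law of total expectation.

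\emph{Step 2: the construction.} Response scores lie in $[0,G]$, so every per-probe difference, and hence each conditional mean $\gamma_j$, lies in $[-G,G]$. Fix a target $v$ with $|v|+|\Delta-v|\le G$. It suffices to exhibit $\pi\in(0,1)$ and $\gamma_1,\gamma_0\in[-G,G]$ with $\pi\gamma_1=v$ and $(1-\pi)\gamma_0=\Delta-v$. Set
\[
\pi=\frac{|v|}{|v|+|\Delta-v|},\qquad \gamma_1=\operatorname{sgn}(v)\,(|v|+|\Delta-v|),\qquad \gamma_0=\operatorname{sgn}(\Delta-v)\,(|v|+|\Delta-v|).
\]
Both $|\gamma_1|$ and $|\gamma_0|$ equal $|v|+|\Delta-v|\le G$, so they are admissible, and a direct check gives $\pi\gamma_1=v$ and $(1-\pi)\gamma_0=\Delta-v$. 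The degenerate cases need separate choices:
\begin{itemize}
\item If $v=0$, take $\gamma_1=0$, any $\pi\in(0,1)$, and $\gamma_0=\Delta/(1-\pi)$, choosing $\pi$ small enough that this value stays in range.
\item If $v=\Delta$, proceed symmetrically, with $\gamma_0=0$.
\end{itemize}
Each $\gamma_j$ must also be realizable as a conditional mean of actual score pairs. Since $\gamma_j\in[-G,G]$, this is done by assigning scores $C_2=\max(\gamma_j,0)$ and $C_1=\max(-\gamma_j,0)$ uniformly within each population.

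\emph{Step 3: the interval and identification.} The constraint $|v|+|\Delta-v|\le G$ defines the set $[(\Delta-G)/2,(\Delta+G)/2]$. For $v$ between $0$ and $\Delta$ the left side equals $|\Delta|\le G$. Outside that range it equals $|2v-\Delta|$, which gives exactly those endpoints. Because $|\Delta|\le G$, the interval contains both $0$ and $\Delta$. For the last sentence of the statement, note that the family in Step 2 holds $\Delta$ fixed while moving $\pi\gamma_1$ across the whole interval. No functional of the pooled difference alone can therefore recover $\pi\gamma_1$.

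Observing $\rho^\star$ for each probe lets one estimate $\pi$, $\gamma_1$ and $\gamma_0$ separately. This requires $0<\pi<1$. If $\pi=1$, then $\gamma_0$ is undefined and the decomposition collapses to $\Delta=\gamma_1$, so the contrast against non-demand probes is unavailable. If $\pi=0$, then $\gamma_1$ is unobservable.

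The main obstacle is the bookkeeping around signs and degenerate cases in Step 2, together with confirming that the bound $|\gamma_j|\le G$ is the only constraint the scores impose.
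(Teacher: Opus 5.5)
Your proposal is correct and follows the paper's proof: the law of total expectation, then an explicit triple with $\gamma_1 = v/\pi$ and $\gamma_0 = (\Delta - v)/(1-\pi)$. Your $\pi = |v|/(|v|+|\Delta - v|)$ is the point of the paper's admissible range $|v|/G \le \pi \le 1 - |\Delta - v|/G$ at which $|\gamma_1| = |\gamma_0|$, and your derivation of the interval from the constraint and your realizability step are details the paper leaves implicit. The only slip is a boundary case, which the paper's formula $\gamma_1 = v/\pi$ also glosses over. When $|\Delta| = G$ and $v \in \{0, \Delta\}$, no $\pi \in (0,1)$ keeps the nonzero conditional mean in $[-G, G]$, so there you must allow $\pi = 0$ (or $\pi = 1$); your general formula already gives exactly that valid triple.
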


\begin{proof}
The decomposition is the law of total expectation. The differences $\gamma_1$ and $\gamma_0$ lie in $[-G, G]$. Given $v$ with $|v| + |\Delta - v| \le G$, choose $\pi$ with $|v|/G \le \pi \le 1 - |\Delta - v|/G$, and set $\gamma_1 = v/\pi$ and $\gamma_0 = (\Delta - v)/(1-\pi)$. Both lie in $[-G, G]$, and the triple reproduces $\Delta$ exactly. Observing $\rho^\star$ per probe determines $\pi$ and splits the sample, which identifies $\gamma_1$ and $\gamma_0$ separately. At $\pi = 1$ the population defining $\gamma_0$ is empty, and at $\pi = 0$ the one defining $\gamma_1$ is.
\end{proof}

% P6 - Remark
\begin{remark}
\label{rem:authored}
In a corpus whose probes were each written to require planted evidence, $\rho^\star \equiv 1$. Restraint is then not difficult but undefined, $D(\phi) = 0$ for every $\phi$ because the label does not vary, and Proposition~\ref{prop:nonident}'s second condition fails, so no amount of additional data from such a corpus identifies the memory effect. The limitation is structural.
\end{remark}

\begin{table}[h]
\centering\small
\begin{tabular}{llrrrrrr}
\toprule
Stratum & Reading & $\pi$ & $\gamma_1$ & $\gamma_0$ & $\pi\gamma_1$ & $(1-\pi)\gamma_0$ & From no-memory rows \\
\midrule
both & recorded & 0.264 & $+0.233$ & $+0.107$ & $+0.061$ & $+0.079$ & 56\% \\
both & strict   & 0.163 & $+0.256$ & $+0.115$ & $+0.042$ & $+0.096$ & 70\% \\
proportional & recorded & 0.034 & $+0.125$ & $+0.107$ & $+0.004$ & $+0.104$ & 96\% \\
proportional & strict   & 0.013 & $+0.333$ & $+0.105$ & $+0.004$ & $+0.104$ & 96\% \\
\bottomrule
\end{tabular}
\caption{Decomposition of the retrieval contrast by ground-truth demand.}
\label{tab:decomp}
\end{table}

% P7 - the instance
Table~\ref{tab:decomp} gives the decomposition on this corpus. On both strata $\pi$ is the memory-bearing share of the released set, fixed by the enrichment, so those rows describe the design; only the proportional rows estimate a demand rate. The second term is usually assumed away on the grounds that context a system did not need should not help it, and it does help: on the proportional stratum $\gamma_0 = +0.107$, because the retrieved context also supplies the recent turns the gold actually used. With $\pi = 0.034$ the first term contributes $+0.004$ of a pooled $+0.108$, so 96\% of the measured benefit of retrieval comes from probes that require no memory. The per-subject figures are in Appendix~\ref{app:persubject}.

% P8 - the generalization
Proposition~\ref{prop:nonident} is about the field and not about this experiment. Any benchmark that demonstrates the value of memory by supplying retrieved context and measuring the improvement reports $\pi\gamma_1 + (1-\pi)\gamma_0$, and may attribute that improvement to memory only if $\gamma_0$ is zero or $\pi$ is near one. On authored corpora $\pi$ is one by construction, the second term vanishes identically, and the assumption is invisible because it is also true. On real traffic $\pi$ is small and the assumption fails, here by a factor of twenty-four.

% ---------- A.4 ----------
\subsection{Validity Conditions for an Instance}
\label{app:validity}

% P1 - why conditions are needed at all
Appendix~\ref{app:objects} defines the task over any subject record, which is deliberate: \bench is one instance of the task and not the task itself. That generality is only useful if it comes with conditions, because a collection of transcripts and labels can satisfy the definitions formally while measuring nothing. Seven conditions are required. Each is stated with the failure it prevents, because a condition whose violation costs nothing is not a condition. Table~\ref{tab:validity} records where each is established for \bench. Two of them, V2 and V6, are conditions our own first annotation violated, which is why they are stated as conditions.

\begin{table}[h]
\centering
\caption{Validity conditions and where each is established.}
\label{tab:validity}
\begin{tabular}{llll}
\toprule
& Condition & Status in \bench & Established in \\
\midrule
V1 & Sufficiency verified & holds & \ref{app:phases} \\
V2 & References off screen & holds under the strict reading & \ref{app:onscreen} \\
V3 & One unconditioned stratum & holds & \ref{app:probes} \\
V4 & Stores fixed in advance & holds & \ref{app:labels} \\
V5 & An underived layer, cited & holds & \ref{app:refs} \\
V6 & Refutation only demotes & holds after repair & \ref{app:withdraw} \\
V7 & Non-subject evidence bounded & 7.1\% outside the companion locus & this section \\
\bottomrule
\end{tabular}
\end{table}

% P2 - V1 sufficiency
\emph{(V1) Sufficiency is verified, not asserted.} The instance must supply a procedure that checks~\eqref{eq:suff} on every probe and must report the probes on which it fails. Without the check, a gold response may quietly depend on a record the instance never names, and a system is then graded on evidence it was not given. The grounding must be tested against $W_{k(p)}(p) \cup R(p) \cup \{p\}$, as~\eqref{eq:suff} states. Testing against $R(p)$ alone fails nearly every one of the 1{,}129 probes with $R(p) = \emptyset$.

% P3 - V2 disjointness
\emph{(V2) The reference set is disjoint from the recent window.} A record that is already on screen is not something a system must retrieve, and counting it as one inflates the demand rate $\pi$ and credits a system for retrieval it never needed to perform. The condition is easy to state and hard to satisfy, because whether a referent is on screen is a judgment about the text. An instance that cannot guarantee V2 by construction must do the next best thing: record the judgment per probe as released evidence, and report every rate under both the reading that assumes V2 and the reading that enforces it. \bench takes this route. The field and its criterion are in Appendix~\ref{app:onscreen}, and the principal rates are given under both readings. Against the recorded window the condition holds exactly. Across the 1{,}533 scoreable probes, none of the 1{,}989 profile evidence messages and none of the 381 episode reference messages falls inside it, and a release gate enforces this. Against a fixed window of the ten preceding messages, 11 rows carry a reference inside it. All 11 are among the 154 rows the strict reading removes, which also removes rows whose referent is named on screen while its evidence lies further back.

% P4 - V3 unconditioned sampling
\emph{(V3) At least one stratum is sampled without reference to memory demand.} Any estimate of $\pi$, and any claim of the form ``this happens at rate $r$ in real traffic,'' requires a set of probes selected without knowing the answer. A stratum assembled by searching for probes of an interesting shape has no denominator: it can establish that something occurs and can supply power for comparisons conditioned on that shape, but it cannot say how often the shape occurs. Instances may and should contain both kinds of stratum, and must label which is which. Pooling them yields a rate that estimates no quantity at all.

% P5 - V4 fixed stores
\emph{(V4) The store family is fixed before annotation.} The locus decision is falsifiable only if the set of places a record may live is declared in advance. An instance that introduces a store whenever the annotation finds something that fits nowhere else makes locus unfalsifiable, since every prediction can be accommodated after the fact. \bench fixes two reference stores beside the recent window: profile claims and episode turns. The locus takes five values, of which three are distant, and the companion locus is an episode reference restricted to messages the companion sent. The formal locus $\lambda^\star$ therefore takes a value in the two stores, and the released field refines it: \texttt{thread} and \texttt{none} carry $\rho^\star = 0$, and \texttt{companion} refines \texttt{episode}. No item populates both reference lists, so $|\lambda^\star(p)| = 1$ on every memory-bearing probe.

% P6 - V5 refutability
\emph{(V5) One released file is underived, and every derived claim cites it.} Reasoning traces are worth nothing if a reader cannot check them. The instance must therefore release at least one layer that is not derived from any other, every claim in every derived layer must cite records in that layer, and every citation must resolve by stored identifier to a released record. An instance that ships only derived layers, or that ships citations which resolve nowhere, offers annotation that can be read but not contradicted. This is the condition that separates a trace from an explanation.

% P7 - V6 monotone derivation
\emph{(V6) Refutation only demotes, and demotion resets everything derived from the refuted evidence.} A derivation pipeline that both proposes and verifies must be arranged so that verification can lower a probe's demand label but never raise it. The first half of this gives the error a direction: a pipeline that misses a genuine referent makes memory demand look rarer than it is, so a finding of rarity cannot be an artifact of over-labeling. The second half is where implementations fail. When verification refutes the proposed evidence, every label computed from that evidence must be reset along with it, not only the reference set. A pipeline that empties the reference set but leaves the locus of the refuted evidence in place emits probes whose labels contradict each other, and any locus figure computed from them is wrong. The audit of Appendix~\ref{app:withdraw} found and repaired exactly this failure in our own derivation.

% P8 - V7 evidence the subject did not produce
\emph{(V7) Evidence the subject did not produce is bounded and reported.} An instance drawn from an interaction with a system that itself has memory faces a confound the annotation cannot remove: a referent may have been supplied to the subject. No such corpus can guarantee that this never happens, so the condition is that its rate is measured and published. An instance that does not report it leaves a reader unable to tell recall from repetition.

% P9 - the bound for \bench
The companion that produced these histories had its own memory. A gold that rests on a turn the companion sent may therefore record something the participant was told. This is measurable. On the chat track, 22.0\% [18.3, 26.3] of the 404 memory-bearing rows rest on memory evidence sent entirely by the companion. Sixty-five of those are rows whose verified locus is the companion, where recalling what the companion said is the task. That leaves 7.1\%, 24 of 339: episode 17.2\% [11.4, 25.1] of 116, profile 1.8\% [0.7, 4.5] of 223. On the question track the rate is 7.5\% [6.6, 8.5] of the 3{,}165 items with a needed turn, or 5.5\% [4.8, 6.4] read on the cited turns.

% P10 - the grounding reading
A second reading counts the phase D grounding citations other than the probe. It gives 70.3\% [67.5, 73.0] of the 1{,}045 rows that carry one (basic 72.6, intermediate 94.1, hard 50.6; thread 71.9, none 88.6, profile 94.1, episode 25.2, companion 95.2). We print it because any reader with the release will compute it, and we do not use it as the bound. On a thread row it counts a gold continuing the companion's own previous turn, which is the companion knowing what it just said. The confound is a gold asserting a fact about the participant from a turn the participant never sent, and that lives in the memory evidence.
\section{Related Works}
\label{app:related}

Section~\ref{sec:bg} states the position, and Table~\ref{tab:comparison} gives the comparison. Personalization has no observable ground truth, so the field makes three substitutions. A description stands in for the person, an authored question stands in for the moment that needs memory, and a model's judgment stands in for the target. Each substitution has a cost that can be measured. This appendix sets out the three (\ref{app:sub}), what they cost (\ref{app:cost}), the attempts to remove them (\ref{app:attempts}), adjacent work on memory systems and rationales (\ref{app:adjacent}), what releasing the alternative requires (\ref{app:release}), and the criteria behind the table (\ref{app:comparison}).

\subsection{Three Substitutions}
\label{app:sub}

\paragraph{A description stands in for the person.} The word persona entered dialogue research through PersonaChat \citep{zhang2018personachat}, where crowd-workers were given short sets of persona sentences and asked to converse while conditioned on them. It solved a real problem, since a model with no representation of its user produces generic replies. A persona in this tradition is an input the experimenter controls, and no reply can contradict it, because nothing outside the description exists to check it against. Multi-session chat \citep{xu2022goldfish} extended the format past a single conversation, collecting dialogues that resume across three to five sessions and carrying summaries forward, which made long-range consistency measurable. The personas are still supplied in advance, and the time between sessions is simulated, with workers told to write as if one to seven hours or one to seven days had passed. What accumulates is a record of a task performed on a schedule.

The tradition now operates at industrial scale. Nemotron-Personas-USA
\citep{meyer2025nemotron} releases a million synthetic personas aligned to real-world demographic distributions, each a set of prose facets covering profession, sports, arts, travel and cuisine, alongside demographic fields. Two works measure what the assumption behind such personas costs. \citet{venkit2026need} administer a 141-item sociopsychological protocol to 124 people and report that demographics explain roughly 1.5\% of the variance in how similarly two people respond, so a persona assembled from demographic attributes is very nearly unrelated to the person it describes. \citet{neplenbroek2026topics} append high-stakes advice questions to 8{,}011 PRISM conversations \citep{kirk2024prism} and 26{,}521 from the Community Alignment Dataset \citep{zhang2025community}, and find that conversation topic predicts a model's advice better than inferred sociodemographics, which models infer poorly in any case. What a system adapts to, when it adapts to a conversation, is what the conversation is about.

\paragraph{An authored question stands in for the moment that needs memory.} Work on conversational memory has converged on a common shape. A long interaction history is constructed, a question is posed at the end, and the system is scored on whether it recovers the fact needed to answer. LoCoMo \citep{maharana2024locomo} builds very long machine-generated conversations and queries them in the third person. LongMemEval \citep{wu2024longmemeval} scales the context past a million tokens and targets factual recall across sessions. CloneMem \citep{hu2026clonemem} argues that conversation captures only fragments of a life and moves to diaries and messages generated top-down from Big Five profiles into multi-year arcs. It links each question to evidence units, defines eight reasoning categories including unanswerable items, and reports that flat retrieval beats consolidation-based memory because summarization severs the link to the original traces. Transplanted into conversation, the shape brings an assumption with it. A question states what should be retrieved, so the benchmark decides for the system whether retrieval was warranted, and it scores finding the right material and using it well as a single outcome.

The personalization line adopts the same shape while varying what the question is about. LaMP \citep{salemi2024lamp} formulates personalization as seven classification and generation tasks over public user histories, within a single message. PersonaMem \citep{jiang2025personamem} builds 180 simulated interaction histories from PersonaHub seeds \citep{ge2024personahub} and asks a model to identify the most suitable response given the current state of a user profile. PersonaMem-v2 \citep{jiang2025personamemv2} scales to 1{,}000 personas and shifts to implicit preferences that surface as side effects of ordinary requests, where the models it evaluates reach 37\% to 55\%. CUPID \citep{kim2025cupid} curates 756 interaction histories in which a preference holds only in a specific context, and reports that no model it evaluates exceeds 50\% precision and 65\% recall at identifying which prior context a new request depends on.

A recurring finding across this line is that models fail to update. HorizonBench \citep{li2026horizonbench} constructs 4{,}245 items from 360 simulated users whose preferences evolve through a mental state graph, and reports that all 25 models it evaluates disproportionately select the pre-evolution value. Its authors state that whether the failure holds at the same severity in naturalistic interaction is unresolved, and position the benchmark as preparation ``in advance of longitudinal human data.'' In the companion setting, ANCHOR \citep{venkit2026best} evaluates persona collapse and behavioral drift over 2{,}008 conversations spanning 27 personas, nine interaction schedules and three memory configurations. Trajectory accuracy averages 44.4\%, recall of the user's state sits near four-option chance, and no memory configuration resolves either. ANCHOR asks whether the companion stays itself. The 65 probes in this corpus whose locus is the companion ask the same question of an interaction that happened.

\paragraph{A model's judgment stands in for the target.} Scoring open-ended responses with a language model was introduced at scale by MT-Bench and Chat-bot Arena \citep{zheng2023judging}, which also documented the failure modes: position bias, verbosity bias, and a preference for the judge's own outputs. Those biases are tolerable when the quantity being judged is otherwise observable. In personalization the target is unobserved, so the judge substitutes for it. A recent survey of personalized agents names this among the open problems, alongside the reliance on synthetic users \citep{xu2026survey}. Our design keeps this substitution and bounds it three ways. The generator is held constant across all five conditions, so any preference for its style applies uniformly and cannot order them. The five candidates for a probe are shuffled and scored in one call, which removes position effects and drift in the judge's use of the scale. Evidence coverage, which uses no judge, is reported alongside content match as a check that shares none of its failure modes.

\subsection{What the Substitutions Cost}
\label{app:cost}

The three substitutions are not independent. The second, authoring the probe, is load-bearing. It removes three decisions from the task at once, and each removal is visible as a gap in the literature.

\paragraph{Restraint.} A question announces that something must be looked up, so a benchmark built from questions cannot score a system for deciding whether to look. The methods that learn this decision learn it from the question. Self-RAG \citep{asai2024self} trains reflection tokens that let a model call for passages on demand, Adaptive-RAG \citep{jeong2024adaptive} trains a classifier over query complexity and routes between no retrieval, single-step retrieval and multi-step retrieval, and \citet{mallen2023not} retrieve only for questions about less popular entities. All three read a signal the question supplies. SQuAD 2.0 \citep{rajpurkar2018know} made not answering a first-class label by adding over 50{,}000 questions with no answer in the passage, and the idea carried into conversational memory as the unanswerable categories of CloneMem and of our question track. The conversational analogue is a message that calls for no memory at all, which describes most messages. That case cannot be constructed by authoring, because an author writing a probe that requires nothing has written nothing worth scoring.

\paragraph{Locus.} Before a conversational query can be retrieved against, its references have to be resolved, and conversational search has long studied that operation. QReCC \citep{anantha2021open} pairs 14{,}000 conversations with rewrites that message a context-dependent question into a standalone one, and \citet{mo2025survey} survey the line. That work presupposes a question to rewrite. Stage A (Section~\ref{sec:chatgt}) performs the same operation on an ordinary message and records the outcome as \texttt{locus}, which states where the referent lives. The label exists here because the probes are ordinary messages.

\paragraph{Reach.} An author who knows the answer places its evidence wherever the question needs it, so the distance between probe and evidence is a design choice, and \citet{liu2024lost} show that where evidence sits in a context changes whether a model uses it. Real conversation produces a bimodal distribution instead. Probes that point into the current thread reach back a median of 2 messages and at most 15. Probes that point outside it reach a median of 450 messages to the nearest cited message and 2{,}157 to the furthest, and as far as 12{,}542.

\subsection{Attempts to Remove Them}
\label{app:attempts}

Recent work removes one substitution at a time. Each effort keeps the others, and which one it keeps is the informative part. AlpsBench~\citep{xiao2026alpsbench} curates 2{,}500 long-term interaction sequences from WildChat, a corpus of real exchanges between people and assistants, and evaluates seven memory systems on extraction, update, retrieval, and utilization. It is the closest prior work to ours on data, and it removes the first substitution. It keeps the second, since its evaluation queries are synthesized by a language model from the dialogues. Its data also records a different activity. WildChat users bring tasks to a general assistant, so even its longest sequences are a series of requests.

CompanionBench \citep{liu2026companionbench} draws on de-identified real companion conversations and anchors its rubric in psychological theory. It removes the first substitution at collection and restores it at release, since it releases rewritten persona pairs and evaluates single 20-message sessions against a simulator. Its authors state that such sessions cannot reach the depth of disclosure that develops over a longer relationship. PRISM \citep{kirk2024prism} collects feedback from 1{,}500 people across 75 countries and is the largest body of real human preference data in this area. It is the one effort that removes the third substitution as well as the first, since its judgments come from people. Its unit is a rated single-session conversation, so it establishes what people prefer and leaves open what a system should remember.

Two efforts outside conversation turn to real behavior. OmniBehavior~\citep{omnibehavior2026} assembles a user-simulation benchmark from real behavioral traces and describes itself as the first built wholly from real-world data, and LUNAR~\citep{lunar2026} builds personalization tasks from app behavior logs through a synthesis pipeline grounded in real behavioral patterns. Neither is conversational, so neither poses the decision this paper is about: a trace records what a person did, and a simulator is scored against the next action. What OmniBehavior reports is nonetheless the strongest external statement of why records of this kind are needed. Evaluated on real traces, models converge toward an average and agreeable person, showing hyperactivity and homogenized personas, and losing individual differences and long-tail behavior. That is a failure a written corpus cannot exhibit, because a written corpus has no long tail: its users were composed, and nothing is in them that an author did not put there.

\subsection{Adjacent Work}
\label{app:adjacent}
A parallel line of engineering builds the systems these benchmarks measure, and its design choices constrain evaluation from the other direction. MemGPT~\citep{packer2023memgpt} treats the context window as a level in a memory hierarchy and pages material in and out of external storage under model control. Generative Agents \citep{park2023generative} maintains a memory stream retrieved by recency, importance, and relevance, and reflects over it periodically to produce higher-level statements. mem0~\citep{chhikara2025mem0} and A-Mem \citep{xu2025amem} extract and consolidate, writing distilled memory statements. AI Persona~\citep{wang2024aipersona} carries this furthest for user modeling, maintaining a lifelong profile that an optimizer updates every few sessions without retraining; it is the closest prior work to the profile file we release, and differs in that its claims are not bound to the messages that established them. The last property decides what a benchmark can measure. A system that stores messages can be scored by identifier match against a gold set. A system that stores distilled statements cannot, because there is no identifier to match, which is why Section~\ref{sec:setup} defines a judged recall alongside the exact one and reports the two separately. We evaluate three of these systems as deployed products, since the write policy is part of what is being measured and a re-implementation would replace it with ours.

Attaching rationales to labels has a long history, with e-SNLI \citep{camburu2018esnli} as the canonical example: annotators assign a label and then write why. Those rationales are written after the label by the annotator who assigned it, so a rationale cannot disagree with its label. CloneMem's evidence units and CUPID's preference checklists are partial instances of the same idea, since both expose what a gold answer rests on. Neither records how the gold answer was reached. Each item here carries a reasoning trace instead, recording the procedure that produced the label, the intermediate results that were checked, and the points where a check failed and the label was revised. A stage can contradict the stage before it, and on 685 of the 1{,}533 scoreable items, 44.7\%, the final tier differs from the pre-verification signal. The practical difference is what a reader can do with a disagreement. A rationale can be judged unconvincing, which leaves the label standing. A reasoning trace can be rejected at a named stage, which identifies every other item that stage decided the same way.

\subsection{What Releasing the Alternative Requires}
\label{app:release}

The three substitutions were forced by access, so removing them means releasing material that was withheld for a reason. Two works of literature state what that costs and what it is worth.

Because memory accumulates personal information, a parallel line asks what a system should decline to use. ConfAIde~\citep{mireshghallah2024confaide} and CIMemories~\citep{mireshghallah2026cimemories} apply contextual integrity to model outputs, showing that appropriateness depends on the recipient and the purpose. CIMemories shapes our design for a methodological reason. It argues that leakage and utility must be scored on the same items. The same logic governs retrieval. A memory system that surfaces nothing never intrudes and never helps, which is why restraint and recall are scored on the same probes and why the abstention controls in Section~\ref{sec:chatgt} exist. The release itself is held to the same standard: Appendix~\ref{app:datasheet} reports what an adversary recovers from the files alongside what the files still support.

Outside machine learning, researchers study these relationships directly. Interview studies of companion application users describe how the relationship develops over months and what people come to expect from it \citep{skjuve2021companion}, and follow-on work examines how users construe the bond, including where they treat it as a friendship and where they do not \citep{brandtzaeg2022aifriend}. Thematic analyses document the social support people report receiving \citep{ta2020socialsupport}, and grounded-theory work on emotional dependence documents the harms, including distress when the system's behavior changes \citep{laestadius2024toohuman}.

Every one of these studies works from what people say about the interaction, through interviews, surveys, app reviews and forum posts. That is the constraint that produced the three substitutions, met by a different field and answered a different way. Where machine learning generated a substitute for the interaction, this literature asked people to recall it. This release supplies the interaction itself, which is why every profile claim cites the messages behind it and every chat item carries its reasoning trace. What the corpus cannot supply is what these studies supply. It contains no participant's account, and ten people who chose one product are not a sample of any population.

A conversation between a particular person and an agent over months exists once, belongs to that person, and cannot be regenerated if the release is wrong. That asymmetry is why Appendix~\ref{app:datasheet} measures what an adversary recovers, and why the Ethics Statement states the basis on which the material may be used. Removing the substitutions is a question of consent before it is a question of scale.

\subsection{The Comparison Table}
\label{app:comparison}

% P1 - the criteria
Table~\ref{tab:comparison} marks five properties and groups its rows by a sixth, stated here as criteria a reader can apply to a benchmark the table does not list.

\begin{table}[h]
\centering
\small
\begin{tabular}{lp{10.2cm}}
\toprule
Column & Criterion \\
\midrule
Row group & How the underlying record was produced. \texttt{crowdworker} when people
were recruited to converse, \texttt{generated} when a model produced it, \texttt{public
histories} when it was assembled from public activity, \texttt{real} when people
produced it while using a deployed system for their own purposes. \\
Longitudinal & Yes when the record spans more than one session. It does not require the
elapsed time to be real; MSC instructs workers to write as though days had passed. \\
Probes & How the item put to the system was produced: \texttt{authored} by a person,
\texttt{synthesized} by a model, \texttt{verbatim} when taken unchanged from the record. \\
Evidence & Yes when each item names the specific records its gold answer rests on. \\
Abstention & Yes when the benchmark scores declining to answer or declining to retrieve. \\
Derivations & Yes when each item records how its label was produced, including checks
that failed and labels that were revised. A rationale written after a label is assigned
does not qualify. \\
\bottomrule
\end{tabular}
\caption{Criteria behind each column.}
\label{tab:axes}
\end{table}

% P2 - the n/a convention and what is in the table at all
A cell is half-filled when the benchmark poses no task for which the column is defined, and \texttt{no} when it poses the task and lacks the property. CIMemories, PRISM, and CompanionBench score appropriateness or preference, so the evidence column does not apply to them, while PersonaChat is scored against a record and names no evidence, so it is open. The table lists a benchmark when it scores a response conditioned on a record of a particular person. That criterion excludes work discussed above for other reasons: conversational query rewriting and reading comprehension pose no person; persona corpora supply a record but no evaluation; and memory architectures are systems. Every cell was read from the benchmark's own description of itself, and each is recorded with a page reference in the reproduction map released with the corpus.
\section{The Corpus}
\label{app:corpus}

Section~\ref{sec:dataset} describes the corpus; this appendix specifies it. It covers the application the conversations came from (\ref{app:setting}), the schema of the five released files (\ref{app:schemas}), the features of the medium a reader would otherwise misread (\ref{app:medium}), what timestamps do and do not mean (\ref{app:time}), the per-subject census behind every macro-average (\ref{app:census}), how the profile and persona were built and audited (\ref{app:construction}), and the invariants a reader can check without trusting us (\ref{app:refs}).

% ---------- C.1 ----------
\subsection{The Setting}
\label{app:setting}

% P1 - the application
The corpus comes from an internal AI-companion application in which a user keeps a persistent friendship-style companion. The companion has a name, a mood, an outfit, and the application layers a daily streak, a habit tracker, a snack economy and several mini-games over an open-ended chat. Two properties of this setting matter for the benchmark. The traffic is ordinary contact, made of greetings, accounts of a day, running jokes and long stretches in which nothing is asked. The companion also opens the exchange as often as the participant does, speaking first on 217 of the 430 active days against 207 for a participant and 6 for the application, so a probe often sits inside a conversation the participant did not start. Both properties are costly to script, and a corpus written for evaluation rarely has either.

% P2 - the record
The release contains 27{,}218 messages across ten subjects and 430 subject-days. How that divides between the subjects, and what the derived layers add to it, is in Appendix~\ref{app:census}; Table~\ref{tab:peruser} gives the per-subject totals. Measured with the tokenizer released alongside the corpus, the transcripts total 1{,}643{,}158 tokens, or 60.37 per message.

% ---------- C.2 ----------
\subsection{Released Files and Schemas}
\label{app:schemas}

% P1 - the inventory
Each subject is five JSON files named \texttt{U\#\#\_\{source,profile,persona,chat,qa\}}. Table~\ref{tab:files} gives their contents and corpus totals. Only \texttt{source} is underived; the other four cite it, and every citation is a message identifier of the form defined in Appendix~\ref{app:medium}. Across the corpus, the four derived files place 18{,}983 citations into \texttt{source}, and Appendix~\ref{app:refs} reports that none of them fails to resolve.

\begin{table}[h]
\centering\small
\begin{tabular}{llr}
\toprule
File & Contents & Corpus total \\
\midrule
\texttt{source}  & every message in order, with sender, timestamp and media marker & 27{,}218 messages \\
\texttt{profile} & evidenced claims about the participant, in eleven top-level fields & 3{,}580 claims \\
\texttt{persona} & an interpretive portrait of the participant, thirty-nine slots & 390 slots \\
\texttt{chat}    & the chat evaluation track, with a separate abstention control & 2{,}034 rows \\
\texttt{qa}      & the question evaluation track & 3{,}312 rows \\
\bottomrule
\end{tabular}
\caption{The five released files.}
\label{tab:files}
\end{table}

% P2 - source
A \texttt{source} file carries a header of five counters and a list of messages. Each message has \texttt{id}, \texttt{sender} in \{\texttt{user}, \texttt{companion}, \texttt{app}\}, an ISO \texttt{timestamp}, \texttt{content}, and a \texttt{media} marker that is \texttt{gif} or \texttt{photo} where media accompanied the turn and absent otherwise. The referenced media are not released. The sender labels are field values and not a claim about the relationship.

% P3 - the claim record
A \texttt{profile} file has eleven top-level fields. Seven of them, \texttt{identity} (13 fixed slots), \texttt{user\_facts} (16 kinds), \texttt{entities} (named people, pets and objects, each with its own claims), \texttt{psychology} (stressors, coping, behaviors), \texttt{speaking\_style} (6 slots), \texttt{personality\_dimensions} (21 slots) and \texttt{background} (4 slots), are built from one repeated object, the claim record, whose fields are in Table~\ref{tab:claim}. Every claim carries the messages that establish it, so a reader can retrieve them from \texttt{source} and decide that the claim is wrong. \texttt{behavior} differs in kind. It holds statistics measured from the messages and needs no evidence list to be recomputed. The remaining three, \texttt{name}, \texttt{narrative} and \texttt{big\_five}, are plain values and are the only unevidenced content in the file. The \texttt{big\_five} of \texttt{persona} is a different object and does carry evidence. Keys prefixed \texttt{l5\_} record repairs made after the final pseudonymization pass.

\begin{table}[h]
\centering\small
\begin{tabular}{llr}
\toprule
Field & Meaning & Present on \\
\midrule
\texttt{text} & the claim & 3{,}580 \\
\texttt{evidence} & message identifiers establishing it, non-empty & 3{,}579 \\
\texttt{evidence\_count} & length of that list & 3{,}580 \\
\texttt{confidence}, \texttt{confidence\_label} & strength, numeric and banded & 3{,}579 \\
\texttt{first\_seen}, \texttt{last\_seen} & observation window & 3{,}109 \\
\texttt{temporality} & stable, event or transient & 3{,}122 \\
\texttt{grounded}, \texttt{kind} & fact or characterization & 114 \\
\texttt{value} & normalized form, where one differs from \texttt{text} & 34 \\
\texttt{l5\_*} & audit markers, eighteen distinct keys & varies \\
\bottomrule
\end{tabular}
\caption{The profile claim record.}
\label{tab:claim}
\end{table}

% P4 - persona
A \texttt{persona} file is flat: thirty-nine content slots beside three identifier fields. Sixteen are demographic slots, eleven are prose portraits of a life domain, two are lists derived from those portraits, and ten are other descriptive fields. Among the ten, \texttt{psychotherapy\_note} holds named working hypotheses, each with its grounds and how firmly it is held, \texttt{inferred\_traits} gives each trait its reasoning and a strength, and \texttt{big\_five} and \texttt{quirks} carry evidence lists of their own. An \texttt{\_evidence} object maps the prose fields to the messages behind them. Unlike \texttt{profile}, \texttt{persona} holds interpretations. Its traits and hypotheses carry a stated strength, its prose fields a confidence through \texttt{\_evidence}, and its demographic values, narrative and derived lists carry no marker of either kind. Of the 390 content slots across the ten subjects, 207 are empty, and the emptiness is concentrated: Appendix~\ref{app:construction} gives the breakdown.

% P5 - the chat row
Table~\ref{tab:chatrow} gives the chat row. The file holds 2{,}034 rows in three strata. The proportional and enriched strata form the 1{,}600 sampled rows, 1{,}533 of them scoreable, and the remaining 434 are cold opens held as an abstention control and excluded from every headline count. Three fields carry what a system is scored against, \texttt{probe}, \texttt{required\_context} and \texttt{gold\_response}, and the rest exist so that those can be contested. \texttt{trace} records how each label was produced stage by stage, \texttt{signals} records what the pre-verification heuristic proposed before any check ran, and \texttt{l5\_category\_recomputed}  records the 72 scoreable rows whose category changed when stage C was re-run. A row with \texttt{withdrawn} set is excluded from every count in this paper, and 13 of the 67 withdrawn rows ship without probe text.

\begin{table}[h]
\centering\small
\begin{tabular}{ll}
\toprule
Field & Meaning \\
\midrule
\texttt{probe}, \texttt{probe\_dayid} & the participant's message, verbatim, and its identifier \\
\texttt{gold\_response} & the reference reply, written under the recorded context \\
\texttt{tier}, \texttt{category} & context shape, recomputable from the lists, and category, assigned by rule \\
\texttt{stratum} & \texttt{proportional}, \texttt{enriched} or \texttt{abstention} \\
\texttt{required\_context} & \texttt{recent\_context}, \texttt{profile\_refs}, \texttt{episode\_refs}, \texttt{recent\_needed} \\
\texttt{trace} & phases \texttt{A} to \texttt{E}, each with its result and rationale \\
\texttt{signals} & the pre-verification hint and its two scores \\
\texttt{referent\_on\_screen} & verdict, its source, distance and criterion \\
\texttt{gold\_depends\_on\_refs} & whether the gold uses the references, on 404 rows \\
\texttt{l5\_category\_recomputed} & prior cohort, current cohort and why \\
\texttt{withdrawn} & the row is not scoreable \\
\bottomrule
\end{tabular}
\caption{The chat row.}
\label{tab:chatrow}
\end{table}

% P6 - the qa row
A \texttt{qa} row carries \texttt{question}, \texttt{answer}, \texttt{category}, \texttt{answerable}, \texttt{evidence}, \texttt{qtype} and \texttt{stratum}, with optional markers recording repairs. Evidence is present on 3{,}249 of the 3{,}312 rows and absent on exactly the 63 marked unanswerable. The file also carries a \texttt{difficulty} field, normalized at assembly to the four tiers with the delivered value preserved as \texttt{difficulty\_delivered}.

% P7 - two things a reader should know before joining the files
Two properties of the release will otherwise be discovered the hard way. First, a \texttt{profile\_refs} entry in a chat row resolves to the transcript through its \texttt{evidence\_dayids}, not through its \texttt{claim\_text}, which is a readable copy and matches a profile claim verbatim on 631 of 749 entries; a join on text will silently lose the rest. Second, the profile ships 455 claims whose audit verdict is \texttt{unsupported}, retained so that the audit can itself be checked. The chat track cites none of them. Of the 631 references that resolve to a claim, 486 resolve to a claim the audit supported, 107 to one it judged repairable, 16 to one needing review, 22 to one it did not judge, and none to one it marked unsupported.

\begin{figure}[p]
\centering
\begin{minipage}{\linewidth}
\scriptsize
\linespread{0.9}\selectfont
\begin{lstlisting}[style=rcitem]
{
    "user_id": "U10", "conversation_id": "U10_chat",
  "chat_gt": [
    {
      "probe": "Honestly, julian is my exact standard, haha.",
      "probe_dayid": "day10-21", "tier": "intermediate", "category": "biographical",
      "required_context": {
        "recent_context": ["day10-20"],
        "profile_refs": [
          { "facet": "relationship_status", "claim_text": "Crush on Julian",
            "evidence_dayids": ["day1-34"], "weight": "primary" },
          { "facet": "entities[Julian]", "claim_text": "Julian hugs and smile",
            "evidence_dayids": ["day3-665"], "weight": "supporting" }
        ],
        "episode_refs": [], "recent_needed": 1
      },
      "gold_response": "That makes total sense! With his amazing smile and those great
        hugs, it is pretty easy to see why he sets the bar for you. Haha.",
      "trace": {
        "A": { "referent_text": "julian", "locus": "profile",
          "rationaleA": "Julian refers to a person in the user's life / romantic ideal,
            which is a stable fact about the user not mentioned in the recent
            conversation turns." },
        "B": { "locus": "profile", "stage_b_reverified": true,
          "rationaleB": "profile: 8 matched claim(s); 2 verified against their evidence
            turns in source, 2 dropped, 4 with no evidence before the probe; kept 2" },
        "C": { "tier": "intermediate", "category": "biographical",
          "ruleC": "relies on a stable profile fact (facet set later)" },
        "D": { "grounding_dayids": ["day10-20"],
          "rationaleD": "gold grounded only in recent + verified refs" },
        "E": { "ok": true, "category_ok": true,
          "rationaleE": "The gold response accurately references Julian's smile and hugs,
            which are directly supported by the cited profile reference from day3-665.",
        "note": "phase E with the recent thread and the references' turn text visible
            (PHASE_E_REC)" }
      },
      "signals": { "hint": "intermediate", "epi_score": 0.011, "prof_score": 0.167 },
      "stratum": "enriched", "gold_depends_on_refs": true, "l5_stage_b_reverified": true,
            "referent_on_screen": { "verdict": "no", "source": "S2", "distance": null,
        "criterion": "the referent's name occurs in the ten turns before the probe but
          the fact the probe presupposes lives only in the distant reference; hand
          reading of 2026-09-17" },
     "l5_row_built": { "phases": "A(llm) B(rule+llm, turn-level) C(rule) D(llm) E(llm)",
        "from": "the L5 source and L5 profile of this subject",
        "why": "exhaustive sweep of every unused user turn (2026-09-14); the turn is
          the probe, verbatim" }
    },
    {
      "probe": "Going with a bare-faced aesthetic! As I mentioned yesterday, I really
        dislike cosmetics, so I never apply any and do not even possess cosmetics for
        that reason.",
      "probe_dayid": "day8-151", "tier": "hard", "category": "event_recall",
      "required_context": {
        "recent_context": ["day8-150"],
        "profile_refs": [],
        "episode_refs": [
          { "dayids": ["day7-50"],
            "gist": "I prefer to leave my face bare. I really dislike cosmetics, so I
              never apply any and do not even possess products.",
            "weight": "primary" }
        ],
        "recent_needed": 1
      },
      "gold_response": "A bare face suits you wonderfully! I remember you mentioning
        yesterday that you dislike cosmetics and don't own any products at all, so
        going natural is perfectly true to your style.",
      "trace": {
        "A": { "referent_text": "I really dislike cosmetics", "locus": "episode",
          "rationaleA": "The user explicitly references a statement made 'yesterday'
            about disliking cosmetics, which refers to an event/conversation from an
            earlier day not shown in the recent thread." },
        "B": { "locus": "episode", "stage_b_reverified": true,
          "rationaleB": "retrieved 5 distant candidate(s); 1 verified against source by
            re-reading each turn" },
        "C": { "tier": "hard", "category": "event_recall",
          "ruleC": "distant USER episode, verified -> hard (event_recall)" },
        "D": { "grounding_dayids": ["day7-50"],
          "rationaleD": "gold grounded only in recent + verified refs" },
        "E": { "ok": true, "category_ok": true,
          "rationaleE": "The GOLD correctly recalls the cited statement from yesterday
            (day7-50) where the user mentioned disliking cosmetics and not owning any
            products. The episode reference and category are accurate.",
        "note": "phase E with the recent thread and the references' turn text visible
            (PHASE_E_REC)" }
      },
      "signals": { "hint": "basic", "epi_score": 0.017, "prof_score": 0.062 },
        "stratum": "enriched", "gold_depends_on_refs": true,
              "referent_on_screen": { "verdict": "no", "source": "S3B", "distance": null},
      "l5_stage_b_reverified": true,
      "l5_row_built": { "phases": "A(llm) B(rule+llm, turn-level) C(rule) D(llm) E(llm)",
        "from": "the L5 source and L5 profile of this subject",
        "why": "exhaustive sweep of every unused user turn (2026-09-14); the turn is
          the probe, verbatim" }
    }]}
\end{lstlisting}
\end{minipage}
\caption{Two released items: one \texttt{intermediate}, one \texttt{hard}.}
\label{fig:item}
\end{figure}

% ---------- C.3 ----------
\subsection{Artifacts of the Medium and Removals}
\label{app:medium}

% P1 - the identifier
Every message carries a stored identifier of the form \texttt{day}$n$\texttt{-}$j$, where $n$ is the rank of the message's date among the dates on which that subject was active, so $n$ runs from 1 to the subject's active-day count, and $j$ is the position of the message within its day in the stream as it stood before the removals described below. The stream is ordered by timestamp with ties broken by identifier; exactly one tie occurs corpus-wide. Identifiers are stored, not computed. Every reference in the release resolves by lookup on the identifier, and reconstructing $j$ by counting released messages is incorrect on 355 of the 1{,}600 sampled chat rows.

% P2 - artifacts of the medium
Four features of the transcript are properties of the application, and reading them as speech would be an error. A companion turn is delivered as a sequence of chat bubbles separated by \texttt{///}, and 3{,}009 turns carry such a separator and no other marker. A companion turn may also offer tappable suggested replies, appended after \texttt{////}; 8{,}121 of the 14{,}329 companion turns do so, and the text after that marker was never uttered by anyone. Thirty-three of the 12{,}795 user turns reproduce a suggested reply character for character, 54 after case and punctuation are normalized, against 32{,}688 chips offered in 8{,}121 companion messages. The affordance the product invests in is almost entirely unused. Turns with sender \texttt{app} are instructions the client posted into the feed to steer the companion's next reply, typically on receipt of a gift in the snack economy; they are visible in the feed, but they are not the subject's words. Media markers appear on 902 companion turns and 426 user turns.

% ---------- C.4 ----------
\subsection{Timestamps and Observation Windows}
\label{app:time}

% P1 - the offset
Timestamps were shifted by a per-subject offset, uniform across every date and timestamp belonging to that subject, deterministic in the subject identifier, with magnitude between 60 and 359 days and a sign that varies by subject. Time of day is carried through unchanged. Every interval is therefore exact, so active spans, gaps between sessions, session rhythm and the hour at which a subject writes are all preserved and can be measured from the release. Day of the week survives only for a participant whose offset happens to be a multiple of seven, and a reader cannot tell which participants those are, so the release supports no claim about weekday or weekend behavior.

% P2 - observation windows
An observation window runs from a subject's first message to their last authored turn. It closes on a turn the subject wrote, because the companion may send further messages after someone has stopped replying, and counting those would credit the relationship with days in which the subject was absent. Both bounds are recoverable from the released timestamps, so the window is re-computable. Its length is the number of calendar days it touches, counting both endpoints. Windows span 36 to 120 days, of which 9 to 111 carry activity. Because the offset above is uniform within a subject, the length of a window is exact even though its position in the calendar is not.

% ---------- C.5 ----------
\subsection{Per-Subject Census}
\label{app:census}

% P1 - what each column of the body table counts
Table~\ref{tab:peruser} is re-computable from the release, and this is what each column counts. \emph{Span} is the length of the observation window defined in Appendix~\ref{app:time}, and \emph{Active days} is the number of calendar days on which any message was sent. \emph{Messages} counts every message regardless of sender, 12{,}795 of them the participants' own. \emph{Chat items} counts scoreablerows in the proportional and enriched strata, and \emph{Question items} counts question rows not withdrawn, of which 77 are.

% P2 - the subjects are not interchangeable
The ten subjects differ by two orders of magnitude. The smallest contributes 115 turns over 9 active days and the largest 12{,}627 over 95, and density varies independently of volume: one subject is active on 111 of 111 consecutive days, another on 9 days spread across 36. The derived layers scale with the transcript, from 22 profile claims to 1{,}363, but one of them conspicuously does not. Persona emptiness is almost flat across the corpus: the subject with 115 turns leaves 26 of 39 slots empty and the subject with 12{,}627 leaves 19. A hundredfold increase in what someone said fills seven further slots. The sparsity reported in Section~\ref{sec:dataset} is therefore not an artifact of thin data, and Appendix~\ref{app:construction} gives the breakdown by slot type.

% P3 - what the concentration does to aggregation
The concentration decides how results are aggregated. Two subjects supply 73\% of the turns, so a figure pooled over items estimates how those two behave, which is why every rate in this paper is reported pooled over items with its macro beside it, as Appendix~\ref{app:scoring} specifies. One consequence must be stated. Under the strict reading of Appendix~\ref{app:onscreen} five subjects contribute no memory-bearing probes at all, so quantities conditioned on memory demand are macro-averaged over the five that do and not over ten, and on the proportional stratum those five contribute between one and seven probes each. Table~\ref{tab:census} gives the counts. Intervals on those quantities are correspondingly wide and are reported as they come out.

\begin{table}[h]
\centering\small
\begin{tabular}{lrrrrrr}
\toprule
& & & \multicolumn{4}{c}{Memory-bearing probes} \\
\cmidrule(lr){4-7}
& & & \multicolumn{2}{c}{All scoreable} & \multicolumn{2}{c}{Proportional stratum} \\
\cmidrule(lr){4-5}\cmidrule(lr){6-7}
Participant & Profile claims & Persona slots empty & Recorded & Strict & Recorded & Strict \\
\midrule
U01 &    39 & 26 &   0 &   0 &  0 & 0 \\
U02 &    22 & 23 &   1 &   0 &  0 & 0 \\
U03 &    36 & 26 &   0 &   0 &  0 & 0 \\
U04 &    40 & 24 &   0 &   0 &  0 & 0 \\
U05 &    56 & 22 &   4 &   0 &  2 & 0 \\
U06 &   140 & 18 &   4 &   2 &  3 & 1 \\
U07 &   421 & 19 &  20 &  14 &  4 & 2 \\
U08 &   328 & 16 &  39 &  22 &  3 & 2 \\
U09 & 1{,}135 & 23 & 123 &  88 & 11 & 7 \\
U10 & 1{,}363 & 19 & 213 & 124 & 17 & 3 \\
\midrule
Total & 3{,}580 & 216 & 404 & 250 & 40 & 15 \\
\bottomrule
\end{tabular}
\caption{Derived layers and memory-bearing counts per subject.}
\label{tab:census}
\end{table}

% ---------- C.6 ----------
\subsection{Profile and Persona Construction}
\label{app:construction}

% P1 - the funnel
Extraction is adversarial toward its own output, and most of what it proposes it throws away. Of 7{,}176 candidate claims, 2{,}654 were dropped for cause: 1{,}508 cited no evidence at all, 693 were ungrounded in the evidence they did cite, and 453 were unreliable given it, a rejection rate of 37.0\%. A further 1{,}212 were merged into existing claims as duplicates, and a later enrichment pass over the same conversations restored material the first pass had missed, leaving the 3{,}580 claims that ship, 3{,}579 of them carrying evidence. Every drop reason is released. The rejected material is informative, because attributes that were never stated and aspirations recorded as facts are exactly the errors that would otherwise become gold answers.

% P2 - the audit, and why failures are retained
A later audit re-examined the surviving claims against \texttt{source}, judged 3{,}378 of the 3{,}580, and recorded each verdict on the claim itself; Table~\ref{tab:verdicts} gives the distribution. Claims the audit could not support are retained, so that the audit is itself auditable: a reader who disagrees with a verdict has the claim, its evidence and the verdict in front of them. Eighteen further markers record what was done to individual claims, among them 538 rejudged after an error in \texttt{source} was corrected, 500 re-filed against a normalized entity name, 290 with a speaker attribution fixed, 135 with journal evidence pruned and 69 with a date re-anchored. Appendix~\ref{app:schemas} reports that no chat item cites a claim the audit marked unsupported.

\begin{table}[h]
\centering\small
\begin{tabular}{lr}
\toprule
Audit verdict & Claims \\
\midrule
\texttt{supported} & 1{,}471 \\
\texttt{repairable} & 1{,}025 \\
\texttt{needs\_review} & 390 \\
\texttt{unsupported} & 455 \\
\texttt{supported\_after\_pruning} & 37 \\
not judged & 202 \\
\midrule
Total & 3{,}580 \\
\bottomrule
\end{tabular}
\caption{Audit verdicts on profile claims.}
\label{tab:verdicts}
\end{table}

% P3 - the persona and the shape of its emptiness
The persona is assembled by a different procedure and to a different standard, since its entries are interpretations of the participant. Its emptiness is not uniform, and the gradient is the finding. Of the 160 demographic slots across the ten participants, 147 are empty. The ten of them that Nemotron-Personas-USA also defines, and fills for each of its
million personas, are empty on 96 of 100. Of the 110 domain portraits, which ask what the participant is like in one area of life, 45 are empty, and so are 3 of the 20 lists derived from them. Of the 100 remaining descriptive fields, 12 are empty. A slot that can be read from anything a person says, such as how they communicate or what their personality is, is almost always filled. A slot that needs them to have raised one particular topic is filled less often, and a slot that needs them to state a category outright is almost never filled. People tell a companion about their family, their health and what is worrying them, and almost never state their census category. Volume barely moves this, since the participant with 115 messages leaves 26 of 39 slots empty and the participant with 12{,}627 leaves 17.

% ---------- C.7 ----------
\subsection{Reference Resolution and the Verbatim Invariant}
\label{app:refs}

% P1 - every reference resolves
Across the released files the derived layers place 18{,}983 references into \texttt{source}: 5{,}113 from \texttt{chat}, 4{,}964 from \texttt{qa}, 6{,}784 from \texttt{profile} and 2{,}122 from \texttt{persona}. Every one of them resolves to a message in the released \texttt{source}. Counting withdrawn chat rows as well brings the total to 19{,}042, and the phase D grounding lists add a further 1{,}740 identifiers over scoreable rows, all of which also resolve. No reference anywhere in the release points at a message that is not there.

% P2 - the verbatim invariant
The claim that the annotation can be checked against the transcript is a claim about one file pair, and it holds exactly. Each of the 1{,}533 scoreable chat rows names its probe by identifier, and all 1{,}533 resolve to a message in \texttt{source} whose text is identical to the probe recorded in the ground truth, character for character. The same holds for all 434 rows of the abstention control. 13 identifiers in the chat file do not resolve, and all thirteen belong to withdrawn rows whose text was removed with them. No scoreable row depends on a message that is absent, and no scoreable row quotes a message inexactly.

% P3 - what resolution does not establish
Resolution is a weaker property than correctness and should not be read as more than it is. It establishes that every cited turn exists and that every probe is quoted exactly, which is what makes a disagreement with the annotation demonstrable. It does not establish that a cited turn supports the claim that cites it. That is what the audits assess: Appendix~\ref{app:construction} for the profile, Appendix~\ref{app:audit} for the chat track and Appendix~\ref{app:qagt} for the question track.
\section{Ground Truth Derivation}
\label{app:gt}

Section~\ref{sec:chatgt} describes what a chat item records and why its labels can be trusted. This appendix gives the full derivation and what it rejected. It covers the five stages (\ref{app:phases}), probe selection (\ref{app:probes}), the three labels and their recomputability (\ref{app:labels}), the judgment of whether a referent was already on screen (\ref{app:onscreen}), the rows withdrawn, demoted or repaired (\ref{app:withdraw}), the independent audit (\ref{app:audit}), the release gates (\ref{app:gates}), and the question track (\ref{app:qagt}).

% ---------- D.1 ----------
\subsection{The Five Phases}
\label{app:phases}

% P1 - the shape of the chain
Derivation alternates between proposing and checking. Stages A and D put something forward, stages B and E check it against \texttt{source}, and stage C applies a fixed rule to what survives. The arrangement is what gives the error a direction, because no phase downstream of a check can restore what the check removed, so a failure of the pipeline makes an item look less demanding than it is. Table~\ref{tab:phases} states what each phase writes and what it is permitted to revise. Every phase records its own reasoning in the item, so a reader can locate a disagreement at the phase that caused it.

\begin{table}[h]
\centering\small
\begin{tabular}{llp{5.6cm}}
\toprule
Phase & Writes & May revise \\
\midrule
A & \texttt{referent\_text}, \texttt{locus} & nothing; it is the first statement \\
B & \texttt{rationaleB} & the reference set, by removing entries whose evidence does not verify; it may never add one \\
C & \texttt{tier}, \texttt{category}, \texttt{ruleC} & nothing; both are functions of the context shape B left behind \\
D & \texttt{gold\_response}, \texttt{grounding\_dayids} & nothing; it conditions on the context C left behind \\
E & \texttt{ok}, \texttt{category\_ok} & whether the item ships: a failure is returned for repair or withdrawn \\
\bottomrule
\end{tabular}
\caption{What each phase writes.}
\label{tab:phases}
\end{table}

% P2 - A and B
Phase A resolves what the probe is pointing at. It records the referring expression verbatim in \texttt{referent\_text} and names the store the referent would live in, and it is permitted to be wrong, because nothing has been checked yet. Phase B is the check. For each reference A proposed it retrieves the candidate records and tests them against \texttt{source}, and where the evidence does not verify it removes the reference. A reference that survives B has been matched to turns that exist and precede the probe. A tier therefore falls only when stage B has removed evidence (Appendix~\ref{app:withdraw}).

% P3 - C
Phase C assigns the tier and the category by rule, from the context shape that survived B. Nineteen rules cover the corpus, and the item records which one fired in \texttt{ruleC}. Because C reads only the shape and never the prose, the tier is a function of the three context lists and not a judgment, which is what makes it re-computable by a reader and what produces the exact correspondence reported in Appendix~\ref{app:labels}. It is also why B cannot demote an item directly: B removes evidence, and C then reads a different shape and assigns a different tier.

% P4 - D
Phase D writes the reference reply and records which turns it rests on. The test that matters is containment: the grounding must lie inside the recent window, the surviving references, and the probe itself, since a gold response is entitled to use the turn it is answering and the turns already on screen. Grounding is recorded on 1{,}342 of the 1{,}533 scoreable rows, and on none of them does it name a turn outside that set, which is the verification of the sufficiency condition of Appendix~\ref{app:validity}. On 297 rows the grounding is the probe alone, which is the ordinary case of a reply that answers what was just said.

% P5 - E, and how to read it
Phase E decides whether the item ships. It re-reads the probe, the gold, and the assigned category together and either passes the item or sends it back, and items that cannot be repaired are withdrawn. Every shipped row therefore carries \texttt{ok} and \texttt{category\_ok} set true, on all 1{,}533, and this should not be read as evidence that the annotation is correct. It is a property of what ships: the field records that an item survived its own pipeline, not that an independent reader agrees with it. What speaks to correctness is the audit of Appendix~\ref{app:audit}, which was run afterwards and against different questions.

% ---------- D.2 ----------
\subsection{Probe Selection and the Three Strata}
\label{app:probes}

% P1 - the sampling frame
Probes are user turns, and the sampling frame is every user turn in the corpus. Turns were drawn from buckets by a routing heuristic scored on the turn and its neighborhood, and the bucket each row came from is recorded in \texttt{signals.hint}. That field is deliberately never recomputed. It states which bucket the sampler drew from, before any phase had checked anything, so the released item preserves what the sampler believed alongside what the derivation found. A reader can therefore audit the sample as well as the labels.

% P2 - the three strata
The chat file carries three strata, and they answer different questions. The \emph{proportional} stratum is 1{,}227 rows, 1{,}169 of them scoreable, drawn between 32 and 326 per participant without conditioning on whether a message needs memory. It alone has a denominator, so every rate that describes the traffic is computed on it and nowhere else. The \emph{enriched} stratum is 373 rows, 364 scoreable, found by running stage A over all 11{,}581 unused user messages and keeping what stage B verified. All 364 are memory-bearing by construction, so the stratum supplies power and no rate, since comparisons conditioned on memory demand would otherwise rest on the 40 such rows the proportional stratum contains. The \emph{abstention} control is 434 session openers with empty context lists, scored apart and entering no rate. The census also corroborates the proportional estimate: stage B verified a distant referent on 3.7\% of the 11{,}369 messages stage A classified, against 3.4\% in the proportional stratum.

% P3 - the sampler was wrong on nearly half the rows
Because the bucket is preserved, the sample yields a measurement nobody designed it to yield. Table~\ref{tab:buckets} crosses the bucket a row was drawn from against the tier phase C assigned after verification. They agree on 848 of the 1{,}532 rows carrying a comparable bucket, so the heuristic was wrong 44.6\% of the time; one further row carries a non-tier bucket value and is excluded. The failures are not symmetric. Of the 97 turns the sampler was most confident carried a distant memory demand, 10 survived verification as \texttt{hard}, and 72 turned out to need nothing beyond the current thread. Of the 386 it routed as \texttt{intermediate}, 88 were. Of the 173 it routed as needing nothing, 63 turned out to need a specific earlier exchange. This is an independent estimate of the same quantity the retrieval analysis reports: whether a turn needs memory is not predictable from the turn and what surrounds it. It arrived as a byproduct of building the sample.

\begin{table}[h]
\centering\small
\begin{tabular}{lrrrr}
\toprule
& \multicolumn{4}{c}{Tier assigned after verification} \\
\cmidrule(lr){2-5}
Sampled as & basic & intermediate & hard & edge \\
\midrule
basic (876)        & 696 &  99 & 79 &  2 \\
intermediate (386) & 269 &  88 & 29 &  0 \\
hard (97)          &  72 &  15 & 10 &  0 \\
edge (173)         &  36 &  20 & 63 & 54 \\
\bottomrule
\end{tabular}
\caption{Sampling bucket against verified tier.}
\label{tab:buckets}
\end{table}

% P4 - why the strata are never pooled
The two strata are never pooled for a rate, and the reason is visible in the paragraph above. A pooled denominator mixes turns drawn without regard to memory demand with turns selected for it, so the resulting figure estimates neither the traffic nor the swept population. Every rate in this paper names its stratum, and a rate quoted without one is not defined. Comparisons conditioned on memory demand are reported on both strata separately, because the proportional estimate is the honest one and the enriched estimate is the one with enough rows to have an interval worth reading. Comparisons are reported on the proportional stratum and on the full scoreable set. The first has no selection, and the second has enough memory-bearing rows for an interval worth reading.

% ---------- D.3 ----------
\subsection{Tier, Category and Locus}
\label{app:labels}

% P1 - three families, two of which are functions
An item carries three labels, and they are not the same kind of object. The tier and the category are functions of the context the item was left with, computed by rule in phase C and therefore re-computable by anyone holding the file. The locus is a statement about where a referent lives, made by phase A and tested by phase B, and it is the only one of the three that rests on a judgment.

% P2 - the tier is the context shape
The tier is the shape of the context lists, and the correspondence is exact. Of the 1{,}533 scoreable rows, 1{,}073 carry a recent window and no references and are \texttt{basic}; 223 carry at least one profile reference and are \texttt{intermediate}; 181 carry at least one episode reference and are \texttt{hard}; and 56 carry nothing at all and are \texttt{edge}. No row populates both reference lists, and no row of one tier has the shape of another. The consequence is that a reader never has to accept a tier: recomputing it from the three lists either reproduces the shipped label on every row, which it does, or identifies the rows where it does not.

% P3 - the category
The category subdivides the tier and is assigned by the same stage C rule from the same inputs. Every chat category but one lies inside a single tier, and \texttt{change\_over\_time} spans two (97 intermediate, 8 hard), so Table~\ref{tab:cohorts} refines the tier partition with one exception. Categories are released at full granularity even where a category holds few items, because a downstream user can pool labels and cannot recover a distinction that was merged away. A category is reported in this paper only under the rule the manifest records and gate O enforces: macro-averaged where it holds at least five scoreable items in at least five participants, pooled otherwise, and released without a claim below twenty items.

\begin{table}[h]
\centering\small
\begin{tabular}{llr}
\toprule
Tier & Category & Items \\
\midrule
basic        & \texttt{continuation}       & 944 \\
             & \texttt{clarification}      &  95 \\
             & \texttt{social}             &  34 \\
\midrule
intermediate & \texttt{change\_over\_time} &  97 \\
             & \texttt{biographical}       &  68 \\
             & \texttt{habitual}           &  30 \\
             & \texttt{preferential}       &  16 \\
             & \texttt{relational}         &  12 \\
\midrule
hard         & \texttt{event\_recall}      &  93 \\
             & \texttt{implicit\_callback} &  71 \\
             & \texttt{emotional\_thread}  &   9 \\
             & \texttt{change\_over\_time} &   8 \\
\midrule
edge         & \texttt{cold\_open}         &  56 \\
\midrule
Total        &                             & 1{,}533 \\
\bottomrule
\end{tabular}
\caption{Chat categories by tier, scoreable items.}
\label{tab:cohorts}
\end{table}
% P4 - the locus
The locus names which store holds the record a probe needs, and it takes five values. Phase A states it and phase B tests it, and both are recorded, so an item carries what was claimed alongside what survived verification. Table~\ref{tab:locus} gives the verified distribution by stratum. The locus and the tier cross in only six of their twenty possible cells. \texttt{thread} is the 1{,}009 \texttt{basic} items that need the current exchange, \texttt{none} splits into 64 \texttt{basic} items and the 56 \texttt{edge} items, \texttt{profile} is exactly the intermediate tier, and \texttt{episode} and \texttt{companion} are exactly the hard tier, separated by whether the record is something the participant said or something the companion did. Appendix~\ref{app:withdraw} gives the rows on which the claimed and verified values differ.

\begin{table}[h]
\centering
\small
\setlength{\tabcolsep}{6pt}
\begin{tabular}{@{}lrrr@{}}
\toprule
Where the referent lives & \multicolumn{2}{c}{Proportional stratum} & Scoreable set \\
\cmidrule(lr){2-3}\cmidrule(lr){4-4}
 & Items & Share & Items \\
\midrule
The current thread                    & 1009 & 86.3\% & 1009 \\
A claim in the profile                &   13 &  1.1\% &  223 \\
A specific earlier exchange           &   10 &  0.9\% &  116 \\
The companion's own past behavior     &   17 &  1.5\% &   65 \\
Nothing prior                         &  120 & 10.3\% &  120 \\
\midrule
Total                                 & 1169 &        & 1533 \\
\bottomrule
\end{tabular}
\caption{Verified locus of every scoreable chat item.}
\label{tab:locus}
\end{table}

% ---------- D.4 ----------
\subsection{Whether the Referent Was on Screen}
\label{app:onscreen}

% P1 - why the field exists
Condition V2 of Appendix~\ref{app:validity} requires that a reference name something the system must go and find, not something already in front of it. The derivation cannot guarantee this. Phase A resolves what a probe points at and phase B verifies that the turns it points at exist, but neither asks whether those turns are still on screen when the probe arrives, and a referent established twenty turns ago and restated three turns ago is not a memory demand however far back its first mention lies. Every memory-bearing row carries a judgment of it, and every rate in this paper is reported under more than one way of using that judgment.

% P2 - the criterion
All 404 memory-bearing rows were examined against one rule applied to the ten turns preceding the probe. A row is \texttt{clear} when those ten turns already state what the probe presupposes, so a system reading only its recent window would have what it needs. It is \texttt{borderline} when the referent appears there under a name, alias or description but not in a form that states the fact the probe rests on. It is \texttt{no} when the referent is absent from the window whatever else it contains. The rule is recorded per row in a \texttt{criterion} field, so a reader can see which of the six phrasings was applied, and a \texttt{distance} field records how far back the nearest on-screen mention lies on the 199 rows where one exists.

% P3 - how each verdict was reached
Table~\ref{tab:onscreen} gives the verdicts by how each was reached. Source \texttt{S1} is mechanical: a cited reference turn falls inside the ten-turn window, so the row is \texttt{clear} by arithmetic and no reading was required. The remaining three sources are readings of the text by one person, in three passes; \texttt{S3B} is the pass that closed the last 229 rows, which until then had carried no verdict and had been treated as \texttt{no} by default. That default is gone: every one of the 404 rows now carries a source. When the rule was unified for the final pass, it was reapplied to the first fifty rows of the earliest pass, and those verdicts held, so the two halves of the population were judged under the same rule and not merely under the same name.

\begin{table}[h]
\centering\small
\begin{tabular}{lrrrr}
\toprule
How the verdict was reached & Clear & Borderline & Not on screen & Rows \\
\midrule
Cited reference inside the window & 11 & 0 & 0 & 11 \\
First reading pass & 68 & 5 & 12 & 85 \\
Second reading pass & 57 & 10 & 12 & 79 \\
Final reading pass & 18 & 30 & 181 & 229 \\
\midrule
Total & 154 & 45 & 205 & 404 \\
\bottomrule
\end{tabular}
\caption{On-screen verdicts by provenance.}
\label{tab:onscreen}
\end{table}

% P4 - the three readings
The field supports three populations and the paper reports against all of them. The \emph{recorded} reading takes every row with a non-empty reference set, which is what the derivation produced and what a reader reproduces from the context lists alone. The \emph{strict} reading removes the rows judged \texttt{clear}, leaving the probes whose recent window does not already state what they presuppose. The third removes the borderline rows as well, which is the most conservative population the field supports. Table~\ref{tab:readings} gives the counts. The proportional stratum contains no borderline rows at all, so its second and third readings coincide, and the demand rate it supports falls from 3.4\% to 1.3\% [0.8, 2.1] once the clear rows are removed.

\begin{table}[h]
\centering\small
\begin{tabular}{lrrr}
\toprule
Stratum & Recorded & Clear removed & Clear and borderline removed \\
\midrule
Proportional (1{,}169) & 40 & 15 & 15 \\
Enriched (364) & 364 & 235 & 190 \\
\midrule
Pooled (1{,}533) & 404 & 250 & 205 \\
\bottomrule
\end{tabular}
\caption{Memory-bearing rows under each reading.}
\label{tab:readings}
\end{table}

% P5 - a convergence worth noting
The strict proportional count arrives at the same place as a decision taken by a different route. Five demotions proposed during the audit and declined would have removed between fourteen and nineteen rows from the proportional memory-bearing population; removing the rows this field marks \texttt{clear} removes fifteen. Of the twenty-five proportional rows so marked, eight are cases where the companion had just demonstrated the behaviour the probe refers to, which is the pattern those demotions were about. Two procedures that did not consult each other, one editing labels and one reading windows, identify nearly the same rows, and the field reaches the result without changing a single label.

% P6 - what the field does not establish
Three limits. The readings were made by one person, so the field records a judgment and not an agreement rate, and a second reader would be the obvious improvement. The rows were examined because they carry references, so the field says nothing about probes the derivation never marked as memory-bearing. And the error has a direction worth stating, which runs opposite to the derivation's. Reading a row as \texttt{no} when it should be \texttt{clear} keeps it in the memory-bearing population, so mistakes of that kind make memory demand look commoner than it is. Verification errs the other way, since it can only remove a reference (Proposition~\ref{prop:onesided}). The two bound the rate from opposite sides and neither is quantified, so the strict reading is reported beside the recorded one.

% ---------- D.5 ----------
\subsection{Withdrawal, Demotion and Repair}
\label{app:withdraw}

% P1 - three outcomes, all recorded
An item can fail in three ways, and the release treats them differently. It can be unusable, in which case it is withdrawn but still shipped as a row with its reason. It can survive with a weaker label than it was proposed with, in which case the demotion is visible as a disagreement between two recorded phases. Or a defect can be found after the labels were fixed, in which case it is marked. Nothing is silently deleted, because a corpus that hides its failures cannot be audited on them.

% P2 - withdrawal
Sixty-seven of the 1{,}600 chat rows are withdrawn and excluded from every count in this paper. Each carries a code naming what went wrong, and Table~\ref{tab:withdrawn} groups them by what failed. Withdrawals for a probe that could not be released are a cost of full anonymization, and so are the 17 \texttt{context\_lost} rows, every one of whose cited messages the full anonymization removal manifest deleted. Withdrawals for an invalid reference include nine rows whose references turned out to be on screen, which were removed before the field of Appendix~\ref{app:onscreen} existed; had that field come first these nine would have been marked \texttt{clear} and retained, and the two mechanisms now do the same
work by different means.

\begin{table}[h]
\centering\small
\begin{tabular}{llr}
\toprule
What failed & Code & Rows \\
\midrule
The context could not be reconstructed & \texttt{context\_lost} & 17 \\
 & \texttt{gold\_outside\_empty\_context} & 3 \\
 & \texttt{stage\_b\_empty\_context} & 2 \\
\midrule
The gold could not be supported & \texttt{gold\_unsupported\_rebuild\_failed} & 12 \\
 & \texttt{stored\_verdict\_failed} & 3 \\
 & \texttt{stage\_b\_gold\_unsupported} & 1 \\
\midrule
A reference was invalid & \texttt{distant\_references\_on\_screen} & 9 \\
 & \texttt{citation\_after\_probe} & 4 \\
 & \texttt{renarrated\_citation} & 3 \\
\midrule
The probe could not be released & \texttt{probe\_turn\_withheld} & 13 \\
\midrule
Total & & 67 \\
\bottomrule
\end{tabular}
\caption{Withdrawn rows by reason.}
\label{tab:withdrawn}
\end{table}

% P3 - demotion
Demotion is not an operation any stage performs. Stage B removes references whose evidence does not verify, stage C then reads a context shape with fewer references in it, and the tier that results is lower than the one the item would have received had the evidence held. No stage after verification can add a reference, so no stage after verification can raise a tier, and Proposition~\ref{prop:onesided} states what this licenses.

% P4 - the locus reset
One consequence of that design was mishandled and has been repaired. Because phase B removes references without rewriting the store name phase A assigned, 51 rows reached the release with an empty reference set and a locus naming a distant store, their own \texttt{rationaleB} recording that no evidence had verified. On every one of them the recent window supplied the gold, so the correct value was the local one. The reset moved 48 rows to \texttt{thread} and 3 to \texttt{none}, drawn from \texttt{profile} (22), \texttt{episode} (21) and \texttt{companion} (8), taking the claimed distribution of 961 / 117 / 245 / 137 / 73 to the verified distribution of Table~\ref{tab:locus}. Both values remain in the file: phase A's under \texttt{trace.A} and the verified one under \texttt{trace.B}. A gate now fails if any row carries an empty reference set together with a locus outside \{\texttt{thread}, \texttt{none}\}.

% P5 - disputed references
A later audit found references that resolve correctly but do not bear on the gold response, and golds addressed to the wrong party. Seventeen rows are affected. They are marked: fifteen carry \texttt{l5\_reference\_disputed} naming the identifiers and the reason, and two carry \texttt{l5\_gold\_addressee\_disputed}. No tier, category, gold or reference was changed. Withdrawing them would have altered three result tables to remove seventeen rows, and correcting them would have replaced a recorded judgment with an unrecorded one. Marking them leaves every number in this paper computed on the corpus as released while telling a reader exactly which rows a second opinion would start from, which is the same choice made for the on-screen judgment and for the unsupported profile claims of Appendix~\ref{app:construction}.

% ---------- D.6 ----------
\subsection{The Independent Audit}
\label{app:audit}

% P1 - why a self-report is not enough
A retriever scored against bad ground truth produces numbers that are corrupted without being obviously wrong, and every figure the derivation reports about itself is a self-report from the process under question. The corpus is therefore graded a second time by a model that built and verified none of it, under an instruction written before any item was read, over all 1{,}533 live rows. Its findings are then read against \texttt{source} by hand, because a flag is a question raised and not a defect established.

% P2 - the instrument, and its own correction
The first instruction was itself defective in two ways: it described the \texttt{cold\_open} gold incorrectly and gave no guidance on companion-authored clusters. Both passes are reported in Table~\ref{tab:audit}. Correcting them removed 305 of the rationale-only majors, 297 of the thread-only ones and 30 on the cold opens, and the two passes agree on the material flag for 1{,}434 of 1{,}530 rows. The auditor is sensitive: it raises a material question on 443 rows and on 290 of the memory-bearing ones, and hand reading confirms a defect on between half and three quarters of a flagged group depending on the tier.

% P3 - the two numbers that establish what the audit is worth
Two measurements decide whether the instrument is worth trusting. The first is whether it misses anything: 119 rows it did not flag were read against \texttt{source} by hand, and none carried a gold or reference defect, which bounds the miss rate at roughly 2.5\%. The second is the corpus-level outcome. Every live row was re-verified by a judge shown the recent thread and the full text of the turns the item cites. It disagrees with the stored reference reply on 7 of 1{,}533 rows, or 0.46\%; two were regenerated and the remaining five were read against \texttt{source} and kept.

% P4 - the prose and the decision fail at different rates
The audit grades the reasoning as well as the labels, and the two fail at very different rates. Read against the ten turns preceding each probe, stage A's rationale fails to justify the recorded locus on 395 of 1{,}533 rows, or 25.8\%, most often by asserting an absence the window contradicts. Hand reading confirms 34 of 40 flagged rows and finds no defect in 50 unflagged ones, so close to a fifth of stage A's written reasons are wrong, against 7 of 1{,}533 for the reference replies those reasons accompany. The rationale is a record of what the builder said and not the warrant for the reference, which stage B establishes against the conversation, so no label was changed on this basis. Eighty of the 395 concern the locus itself, and those enter the field of Appendix~\ref{app:onscreen}. A corpus that ships its reasoning has to ship the rate at which that reasoning is bad.

% P5 - what the audit changed
The audit changed the corpus. 19 golds were regenerated, an unsupported part was removed from a further 129, 108 were re-verified and left unchanged, 30 rows were rebuilt outright, and 67 were withdrawn. Every count in this paper is computed after these repairs, and the gates of Appendix~\ref{app:gates} pass on the released files.

\begin{table}[h]
\centering\small
\begin{tabular}{@{}lrr@{}}
\toprule
& First instruction & Corrected \\
\midrule
Rows audited                               & 1{,}533 & 1{,}530 \\
No finding / minor / major                 & 384 / 197 / 952 & 677 / 220 / 633 \\
Majors concerning rationale wording only   & 396 & 91 \\
Majors on thread-only rows                 & 591 & 294 \\
Majors on cold opens                       & 54 & 24 \\
Rows with a material flag                  & 459 & 443 \\
\midrule
\multicolumn{3}{@{}l}{Agreement between the two passes on the material flag:
 1{,}434 of 1{,}530} \\
\multicolumn{3}{@{}l}{Unflagged rows read by hand: 119, gold or reference defects
 found: 0} \\
\multicolumn{3}{@{}l}{Every live row re-verified against the thread and the cited
 turns: 7 of 1{,}533 fail} \\
\bottomrule
\end{tabular}
\caption{Independent audit of the chat track.}
\label{tab:audit}
\end{table}

% ---------- D.7 ----------
\subsection{Gates and Mechanical Checks}
\label{app:gates}

% P1 - what a gate is, and how it differs from the audit
The audit of Appendix~\ref{app:audit} asks a model whether a label is right. The checks in this subsection ask whether the corpus is internally consistent, and they use no model at all. Each is a predicate over the released files that either holds on every row or names the rows on which it fails, so a check cannot be partly satisfied and cannot be argued with. 26 run as release gates, listed in Table~\ref{tab:gates}; a further five families of check run alongside them and are reported in Table~\ref{tab:families}. The gates are what the release is blocked on, and all 26 pass on the corpus as shipped.

\begin{table}[h]
\centering\small
\begin{tabular}{ll}
\toprule
Gate & What it tests \\
\midrule
A & The probe is a real user turn, reproduced verbatim \\
B & Database identifiers parallel the turn identifiers \\
C & Every cited identifier resolves to a released turn \\
D & Every cited identifier precedes its probe \\
E & A callback reaches outside the recent window \\
F & A gold response is present unless the item is a cold open \\
G & No turn serves as the probe of two rows \\
H & The designed category mix is held \\
I & Each category belongs to its tier \\
J & The phase C record agrees with the row's tier and category \\
K & The phase E verdict passed, or a marker says why not \\
L & An episode gist still matches the turns it cites \\
M & \texttt{recent\_needed} lies within the length of \texttt{recent\_context} \\
N & No vocabulary term is absent from both the corpus and the persona \\
O & Per-category figures follow the reporting rule recorded in the manifest \\
P & Every reference identifier exists in \texttt{source}, persona included \\
Q & The phase B rationale describes what phase B did for the row's locus \\
R & Every question item's difficulty is one of the four tier values \\
S & Every gold citation lies inside the recorded context \\
T & The file's counters agree with its own rows \\
U & The verified locus follows the reference set \\
V & Every row lies in a declared stratum and no control row enters the union \\
W & No database key survives anywhere in a released file \\
X & A boolean marker agrees with the verdict it accompanies \\
Y & No surrogate in a derived file disagrees with the participant's source \\
Z & No retired label is used as a reporting category \\
\bottomrule
\end{tabular}
\caption{Release gates, all passing.}
\label{tab:gates}
\end{table}

% P2 - the gates that record a failure
Two gates exist because a failure shipped and was repaired. Gate T was added after the per-file counters were found to have been written once and never updated, so that one file reported three hundred rows against four hundred and eighty-two. Gate U was added after 51 rows were found carrying an empty reference set beside a locus naming a distant store, the repair of Appendix~\ref{app:withdraw}. Gate V was added with the abstention stratum and fails on any row outside a declared stratum, on any pooling of the sampled strata that does not reconcile, and on any abstention row that is not an empty session opener. Gates B and Z pass vacuously, since the identifiers B checks were removed before release and no label has been retired, and both exist so that a reappearance would be caught.

% P3 - the five families
The five families in Table~\ref{tab:families} run over all 2{,}034 chat rows, the 1{,}600 sampled and the 434 abstention rows together. Their arithmetic closes: the category family accounts for every row as agreeing, relabelled by record, or lacking a session index, and the signals family accounts for every row as matched or as one of thirty-five whose probe text was withheld or is too short to match. All five families pass. The structural family passes on the question track after the repairs of Appendix~\ref{app:qagt}, which rebuilt \texttt{multi\_hop} and \texttt{knowledge\_update} and gave \texttt{long\_range} a span threshold.

\begin{table}[h]
\centering\small
\begin{tabular}{llp{5.4cm}}
\toprule
Family & What it checks & Result \\
\midrule
Referent & The recorded referent is present in the turns the row cites & 1{,}339 of 1{,}341 judged rows grounded; two exceptions recorded and allowed to stand. \\
Signals & The pre-verification scores and the probe text still reproduce & Scores exact on 1{,}999 of 1{,}999 matched rows; probe verbatim on 1{,}999 of 1{,}999 \\
Category rule & The tier, the category and the rule sentence agree & Agree on 2{,}018 rows; 3 relabeled by record; 13 carry no session index \\
Structural & Each question category satisfies its own definition & Passes:  0 of 38 \texttt{multi hop} within one session, 0 of 26 \texttt{knowledge update} citing one message, 0 of 22 \texttt{long range} short of the span threshold\\
Arithmetic & The validity conditions of Appendix~\ref{app:validity} hold row by row & Tier re computable on 1{,}967 of 1{,}967 scoreable rows; no stratum exception; the unanswerable count reconciles \\
\bottomrule
\end{tabular}
\caption{Check families.}
\label{tab:families}
\end{table}

% P4 - what passing establishes
Passing every gate is a weak property and should be read as one. It establishes that the corpus says nothing self-contradictory: no citation dangles, no label disagrees with the rule that produced it, no counter disagrees with the rows it counts. It establishes nothing about whether a label is correct, which is why the audit exists and why the audit's findings are worse than the gates'. The useful thing about a gate is not that it passes but that it would have failed: gates T and U each caught a defect that had already shipped, and each now runs on every build.

% ---------- D.8 ----------
\subsection{The Question Track}
\label{app:qagt}

% P1 - what it is, and why it is audited separately
The question track poses 3{,}312 items, 3{,}235 of them scoreable, over the same ten records. Its probes are authored, so it supports none of the rate claims in this paper; what it supports is the comparison of Section~\ref{sec:results}, where an authored query and a spoken one are put to the same retriever over the same conversations. It was built by a different procedure from the chat track and it is audited separately, by a model that built none of it, under an instruction written before any item was read.

% P2 - what the audit found, stated plainly
The audit found a defect rate materially worse than the chat track's. On the delivered items, 51.3\% [49.6, 53.0] carried a content defect: an unsupported answer, an ill formed question, a wrong answerable flag, or citations none of which is evidence. We report that figure rather than the softer one because it is the one a user of the corpus would meet. The cause was identified and is mechanical. A re-narration of \texttt{source} late in construction changed facts, and items authored before it kept the earlier ones, so a question would ask about a Tuesday that the turn it cites calls a Wednesday.

% P3 - what each repair bought
Repairs were applied in three stages and each was measured separately, so the cost of the defect and the value of each remedy are both visible. Table~\ref{tab:qarepair} gives them. Reclassifying the categories bought nothing on content, by construction, and is reported for completeness. Repairing the wording against the cited turns bought twelve points. Replacing the items that survived both bought ten more.

\begin{table}[h]
\centering\small
\begin{tabular}{lrrrr}
\toprule
& Delivered & After reclassification & After wording repair & After replacement \\
\midrule
Content defect rate & 51.3 & 51.2 & 39.2 & \textbf{28.8} \\
\quad unsupported answer & 31.6 & 32.1 & 26.1 & 18.1 \\
\quad question ill formed & 35.6 & 34.9 & 24.2 & 17.5 \\
\quad answerable flag wrong & 11.5 & 15.7 & 10.7 & 4.9 \\
\quad no cited turn is evidence & 9.7 & 11.2 & 7.7 & 3.1 \\
\bottomrule
\end{tabular}
\caption{What each repair bought, same items, pooled percent.}
\label{tab:qarepair}
\end{table}

% P4 - the repair works, and the evidence is in the residual
The residual is informative. Grouped by what was done to an item, replaced items carry a content defect on 10.2\% [7.4, 14.0], re-anchored items on none of 50, wording-repaired items on 30.3\%, and items nothing was done to on 31.7\% [29.7, 33.7]. The rate is therefore dominated by items that were never treated.

% P5 - the category disagreement is agreement, and should be read as such
An early reading of the audit counted a proposed change of category as a defect, which put 44.1\% of items in fault. That was a mistake of interpretation. Category assignment is a judgment on which two competent annotators may differ, and the right measurement is agreement, not error. Between the classifier and the independent auditor over the final partition, raw agreement is 0.880, and Cohen's $\kappa$ is \textbf{0.866}, which is substantial by any convention. Table~\ref{tab:qakappa} gives it per category and names the pairs the two readers actually confuse. Category disagreement is excluded from the content rate above and reported only here.

\begin{table}[h]
\centering\small
\begin{tabular}{lr@{\qquad}lrr}
\toprule
Category & $\kappa$ & Confused pair & Disagreements & $\kappa$ on the pair \\
\midrule
\texttt{unanswerable}         & 1.000 & \texttt{attribute} / \texttt{basic\_fact}        & 90 & 0.639 \\
\texttt{ambiguous\_ref}       & 1.000 & \texttt{opinion} / \texttt{preference}           & 52 & 0.671 \\
\texttt{temporal}             & 0.989 & \texttt{basic\_fact} / \texttt{person\_reference} & 34 & 0.656 \\
\texttt{long\_range}          & 0.977 & \texttt{basic\_fact} / \texttt{preference}       & 30 & 0.607 \\
\texttt{multi\_hop}           & 0.958 & \texttt{basic\_fact} / \texttt{opinion}          & 20 & 0.523 \\
\texttt{paraphrase}           & 0.928 & \texttt{basic\_fact} / \texttt{routine}          & 17 & 0.403 \\
\texttt{person\_reference}    & 0.917 & & & \\
\texttt{companion\_relational} & 0.883 & & & \\
\texttt{preference}           & 0.872 & & & \\
\texttt{attribute}            & 0.846 & & & \\
\texttt{routine}              & 0.835 & & & \\
\texttt{opinion}              & 0.829 & & & \\
\texttt{distractor}           & 0.823 & & & \\
\texttt{knowledge\_update}    & 0.817 & & & \\
\texttt{basic\_fact}          & 0.795 & & & \\
\texttt{implicit\_entity}     & 0.776 & & & \\
\midrule
Overall                       & 0.866 & & & \\
\bottomrule
\end{tabular}
\caption{Category agreement between the classifier and the auditor.}
\label{tab:qakappa}
\end{table}

% P6 - what was retired, and why that is a finding
Three delivered categories were repaired against their own definitions. \texttt{long\_range} was authored from early turns and encoded no distance property: 38 of its 39 delivered items cite a single turn, and in distance to the asking point its median is shorter than that of the other categories. Every question in this track is asked at the end of the history, so distance from evidence to query is a property of the track, and the delivered label named a recipe. The category now carries a span threshold, which all 22 of its released items meet. \texttt{multi\_hop} and \texttt{knowledge\_update} were rebuilt from the profile's own structure, so that no \texttt{multi\_hop} item stays within one session and no \texttt{knowledge\_update} item cites a single message (Table~\ref{tab:families}).

% P7 - what the repairs did to the numbers, and what remains
One reported figure moves materially with the repair and it should be read as a correction. Question-track \textsc{bm25} recall@5 rises from 0.356 to 0.480 [0.421, 0.536], because the repaired wording restored the cited turns' vocabulary to the questions; the delivered figure was measuring a defect of the delivery, and \textsc{recency}, \textsc{random} and \textsc{oracle} move by at most 0.005 on the same items. The track ships 3{,}312 items of which 3{,}235 are scoreable, with every repair recorded per item, and the per-item audit verdicts are released as an artifact so that the rate in this appendix can be recomputed.
\section{Evaluation Protocol}
\label{app:setup}

Section~\ref{sec:setup} states what is run; this appendix gives the operational definitions, taken from the code that produced the numbers. It covers the context ablation (\ref{app:conditions}), the three retrieval controls (\ref{app:controls}), the retrieval baselines (\ref{app:baselines}), the response judge (\ref{app:judge}), how the quantities of Appendix~\ref{app:scoring} are computed (\ref{app:metrics}), and the models behind each role (\ref{app:models}).

% ---------- E.1 ----------
\subsection{The Context Ablation}
\label{app:conditions}

% P1 - the shared prompt
All five conditions use one generator at temperature zero and one prompt shape: a context block, then the line \texttt{THE USER JUST SAID:} followed by the probe. Messages inside the block are rendered as \texttt{[sender] text} and truncated at 220 characters. The generator is never shown the profile, the persona, the gold response, any label, or which participant it is answering for, so nothing beyond the block distinguishes one condition from another. Table~\ref{tab:conditions} gives what each block contains.

\begin{table}[h]
\centering\small
\begin{tabular}{llp{6.2cm}}
\toprule
Condition & Block heading & Contents \\
\midrule
C0 & none & The probe alone \\
C1 & \textsc{recent conversation} & The 3 messages immediately preceding the probe, either sender \\
C2 & \textsc{relevant memories} & The item's recorded messages: recent context, then episode references, then profile evidence, de-duplicated and restricted to messages preceding the probe. Every other turn is withheld \\
C3 & \textsc{retrieved memories} & The top 10 messages returned by BM25 over the participant's preceding messages, queried with the probe \\
C4 & \textsc{recent conversation} & The 10 messages immediately preceding the probe, either sender \\
C3m & \textsc{earlier messages from this chat} & The same 10 messages as C3, in chronological order \\
C4m & \textsc{earlier messages from this chat} & The same 10 messages as C4 \\
\bottomrule
\end{tabular}
\caption{The five context conditions and the matched pair.}
\label{tab:conditions}
\end{table}

% P2 - which comparisons the design supports
The pairs are chosen so that each isolates one thing. C2, the oracle condition, against C1 varies what is supplied while holding the generator fixed, and Section~\ref{sec:results} shows why that difference is not by itself a memory effect. C2 and C1 also differ in heading, \textsc{relevant memories} against \textsc{recent conversation}, and the matched pair shows that a heading alone moves memory use by 10 to 14 points, so the C2 against C1 difference carries a heading component this design does not separate. A matched pair, C3m and C4m, repeats both under the single heading \textsc{earlier messages from this chat} in chronological order, and Section~\ref{sec:results} reports the matched difference. C0 is the floor and C4 the ceiling reachable without retrieving anything. None of these comparisons exists without a per-item record of which messages the reply required, which is the sense in which the ablation is a consequence of the annotation.

% ---------- E.2 ----------
\subsection{Retrieval Controls}
\label{app:controls}

% P1 - the pool, and what a control does
A method's pool is every turn strictly preceding the probe. A control substitutes a different gold set for the item's own and re-scores every method against it, so that what changes is the target. Each control destroys one property of the gold set while holding the others, which is what makes the drop attributable. Table~\ref{tab:controls} states each. The third is an assertion: it hands every method messages guaranteed to lie outside the gold set, and a method that scores anything above zero on it has a bug.

\begin{table}[h]
\centering\small
\begin{tabular}{lp{4.2cm}p{4.2cm}}
\toprule
Control & Held fixed & Destroyed \\
\midrule
Offset permuted & The pool, and the distribution of distances from probe to gold & The relation between the gold messages and what the probe is about \\
Position randomized & The pool, and the size of the gold set & Both the distances and the content relation \\
Broken oracle & Nothing; 20 pool messages known to lie outside the gold & Everything; the expected score is zero \\
\bottomrule
\end{tabular}
\caption{Retrieval controls.}
\label{tab:controls}
\end{table}

% P2 - how the permutation is built
The offset permutation is constructed per item. A second probe is drawn at random from the same participant; its gold messages are expressed as offsets from its own probe, and the messages at those offsets from the present probe become the substituted gold; offsets reaching past the start of the pool are dropped. The substituted set therefore sits at the same distances as a real gold set drawn from the same participant, and differs only in having no relation to what was said. A method that reads distance alone scores the same as before; a method that reads content does not.

% ---------- E.3 ----------
\subsection{Retrieval Baselines}
\label{app:baselines}

% P1 - the five methods
Each method returns 20 messages from the pool and is scored at ranks 1, 5, and 20 and by reciprocal rank. \textsc{Random} draws 20 messages from the pool under a fixed seed. \textsc{Recency} returns the 20 messages immediately preceding the probe, either sender. \textsc{Recency-user} returns the 20 preceding user messages, which separates adjacency from who was speaking. \textsc{Oracle} returns the item's own gold set, ordered by weight, with recent context and primary references above supporting ones, truncated at 20; it falls short of perfect recall on items whose gold set exceeds the cut. \textsc{BM25} is the only method that reads the probe's content. On the question track, the pool is the participant's whole history, since a question is posed over the record, and evidence messages carry the primary weight.

% P2 - BM25 in full
\textsc{BM25} uses $k_1 = 1.5$, $b = 0.75$ and $\mathrm{idf} = \log\bigl(1 + (N - \mathrm{df} + 0.5)/(\mathrm{df} + 0.5)\bigr)$. The index unit is one turn, both senders, and each participant's history is indexed once. Text is lower-cased and tokenized on \texttt{[a-z0-9']+} with tokens shorter than three characters and a fixed 62-word stop list discarded. The query is the probe text unmodified. The top 40 are retrieved, filtered to positions preceding the probe, and the first 20 kept. All values are macro-averaged over participants.

% ---------- E.4 ----------
\subsection{The Response Judge}
\label{app:judge}

% P1 - the protocol
One judge at temperature zero scores all five candidate replies for a probe in a single call, so the conditions are ranked against one another, and no drift in the judge's use of the scale can order them. It is shown the probe, the full gold response, the text of the messages the item records as required at 200 characters each with an explicit placeholder where there are none, and the five candidates at 400 characters each, labeled by condition. It is not shown the recent thread, the profile, the persona, the reasoning trace, the participant, or what the condition labels mean. It returns three fields: \texttt{content\_match} on a three-point scale against the gold, \texttt{used\_memory} recording whether the reply asserts a specific detail about the past that the probe did not supply, correct or not, and \texttt{restraint} recording whether the reply volunteers no such detail at all.

% P2 - what the judge is worth
The judge's agreement with human adjudication was measured on 100 probes, each carrying five replies, read blind. On \texttt{used\_memory} and \texttt{restraint} it agrees on 90.6\% of replies, $\kappa = 0.804$. On the three-point \texttt{content\_match} it agrees exactly on 77.4\% and is never wrong by two points, $\kappa = 0.545$ and 0.606 under linear weighting; where it differs it is harsher than the reader, lower on 81 replies and higher on 32. The contrasts this paper relies on reproduce. On the same sample the reader and the judge put C2 above C1 by 0.13 and 0.14, C3 below C1 by 0.15 and 0.19, C4 below C1 by 0.05 and 0.09, and C1 above C0 by 0.22 and 0.24. A systematic severity that applies to all five conditions cannot reorder them.

% ---------- E.5 ----------
\subsection{How Each Quantity Is Computed}
\label{app:metrics}

% P1 - the mapping
Appendix~\ref{app:scoring} defines what is scored; this subsection says which field computes it. Table~\ref{tab:impl} gives the mapping. Two of the correspondences are approximations and are stated as such. A system in the ablation never emits a retrieval decision, so its restraint decision is read off its reply: a reply that volunteers a detail about the past is treated as having chosen to retrieve. Likewise the appropriate application rate is read from whether the reply asserts a past detail the probe did not supply, correct or not, because the ablation supplies context and no system returns a retrieved set; whether the detail is right is left to content match.

\begin{table}[h]
\centering\small
\begin{tabular}{llp{5.4cm}}
\toprule
Quantity & Field & Computed as \\
\midrule
Misfire rate & \texttt{restraint} & One minus the restraint rate on probes with an empty reference set \\
Appropriate application & \texttt{used\_memory} & The rate at which a reply on a memory-bearing probe asserts a past detail the probe did not supply, correct or not \\
Content match & \texttt{content\_match} & The three-point verdict against the reference reply: 2 conveys its substance, 1 partly, 0 not at all \\
Retrieval & rank of the gold messages & Hit at 1, 5 and 20 (a required message in the top $k$), recall over the whole gold set at the same cutoffs, and reciprocal rank of the first gold message \\
Detectability & BM25 top-1 score; detector probability & Area under the ROC curve against the recorded label, on the proportional stratum, for the lexical scorer's own confidence and for a model shown the probe and three preceding messages \\
\bottomrule
\end{tabular}
\caption{How each quantity is computed.}
\label{tab:impl}
\end{table}

% P2 - why evidence coverage is reported
Evidence coverage exists because every other response-side number depends on the judge of Appendix~\ref{app:judge}, whose agreement on the graded scale is the weaker of its two measurements. It uses no model, shares none of the judge's failure modes, and moves in the same direction on the same comparisons, which is the closest thing to a validation the response side currently has. It is a weak measure taken alone, since a reply can repeat the words of an evidence turn without using it, and is reported only alongside content match.

% ---------- E.6 ----------
\subsection{Models and Decoding}
\label{app:models}

Every call in the chat and question tracks is made at temperature zero, so a run is reproducible up to the provider's own nondeterminism; the reconstruction agents run under their own settings (Appendix~\ref{app:recon}). The generator, the judge and the auditor are three different models, and the auditor built none of the ground truth it grades.

\begin{table}[h]
\centering\small
\begin{tabular}{lll}
\toprule
Role & Model & Temperature \\
\midrule
Ablation generator & \texttt{gemini-3-flash-preview} & 0 \\
Response judge & \texttt{gemini-3.6-flash} & 0 \\
Independent auditor & \texttt{gemini-3.1-pro-preview} & 0 \\
\bottomrule
\end{tabular}
\caption{Models by role.}
\label{tab:models}
\end{table}
\section{Additional Results}
\label{app:extra}

Section~\ref{sec:results} reports one figure per claim. This appendix reports the rest: every method on every subset, every condition under every reading, and the per-subject values behind each macro-average. It covers how the demand rate was estimated (\ref{app:extra-demand}), retrieval in full (\ref{app:extra-retrieval}), context-window coverage (\ref{app:extra-coverage}), the ablation in full (\ref{app:extra-ablation}), selectivity and detection (\ref{app:extra-selectivity}), the per-subject tables (\ref{app:persubject}), and the deployed systems (\ref{app:deployed}).

% ---------- F.1 ----------
\subsection{Estimating the Demand Rate}
\label{app:extra-demand}

% P1 - why three procedures
The demand rate $\pi$ of~\eqref{eq:pi} is the quantity every other rate in this paper is conditioned on, and Proposition~\ref{prop:onesided} makes a single measurement of it a lower bound. Three procedures were therefore run, with different denominators and different failure modes, and this subsection gives each.

% P2 - the census
The first is a census. Phase~A was run over every user turn in the corpus, not over a sample, so a memory-bearing turn can be missed only if the resolver fails on it, never because it was not looked at. That is what makes the enriched stratum a population, and it is the only one of the three that bounds the recall term $r$ directly.

% P3 - the residual sweep
The second is a residual sweep. Every turn the census did not mark was re-examined under a widened criterion: 921 further candidates were proposed and two survived verification. A procedure that had been missing a substantial share of memory-bearing turns would not return two.

% P4 - the adjudication
The third is an independent referent adjudication over all 404 memory-bearing probes, which asks whether the thing the probe points at was already visible on screen. It removes 154, giving the strict reading of 250. This procedure cannot add a probe to the memory-bearing set, only remove one, so it tightens the estimate in the conservative direction.

% P5 - the three estimates side by side
Table~\ref{tab:demand} gives the three. The stratum is the paper's demand rate, because it alone is a sample drawn without regard to memory demand; the other two are checks on it over the turns the sample did not take, and together the two denominators partition 12{,}538 of the 12{,}795 user turns. Every estimate falls between one and four percent, and the reading moves the figure further than the procedure does.

\begin{table}[t]
\centering\small
\begin{tabular}{llrrr}
\toprule
Procedure & Denominator & recorded & strict & strict 2 \\
\midrule
Natural-rate stratum & 1{,}169 sampled turns   & 3.4 [2.5, 4.6] & 1.3 [0.8, 2.1] & 1.3 \\
Exhaustive sweep     & 11{,}369 unsampled turns & 3.2 [2.9, 3.5] & 2.1 [1.8, 2.3] & 1.7 [1.5, 1.9] \\
Sweep re-read        & 11{,}369 unsampled turns & 1.73 [1.45, 2.06] & 1.12 [0.94, 1.33] & 0.90 [0.76, 1.07] \\
\bottomrule
\end{tabular}
\caption{Demand rate as a percentage of user turns.}
\label{tab:demand}
\end{table}

% P6 - what the third row is and is not
The third row's numerator is 196 live rows [165, 234], obtained by re-reading the sweep's output under the builder's own batched instruction, which calls 44.5\% of the sweep's distant turns distant and 0.8\% of its thread turns. It says what the original procedure would have found on the unsampled turns, which is why it sits below the sweep's own figure, and it is reported because the difference between a liberal and a conservative resolver is itself information about the width of the estimate.

% P7 - two figures that are not the demand rate
Two quantities in the released tables are not demand rates and should not be quoted as such. The detector pools, 7 of 536, are a different instrument measured on a different population. The phase~A distant rates before verification, 7.8\% and 7.1\%, are claim rates: they say how often the resolver proposed a distant referent, not how often one survived.

% ---------- F.2 ----------
\subsection{Retrieval in Full}
\label{app:extra-retrieval}

% P1 - what the controls isolate
Each method is rescored against three substituted gold sets, which isolate what it reads. \emph{Position randomized} places the gold turns at random positions, destroying both the distance relation and the content relation. \emph{Offset permuted} preserves each item's distances from probe to gold and permutes which item they belong to, destroying the content relation alone. \emph{Broken oracle} supplies turns known to lie outside the gold set and must score zero. Table~\ref{tab:controls} gives all four readings over the 1{,}477 rows that carry a gold set, macro-averaged over subjects. The body reports the same contrasts pooled over items, which is the form in which the identity of Proposition~\ref{prop:nonident} holds; the two differ by a few points, and each is labeled where it appears.

\begin{table}[h]
\centering\small
\begin{tabular}{lrrrrr}
\toprule
Gold set & Random & Recency over user turns & Recency & BM25 & Oracle \\
\midrule
\multicolumn{6}{@{}l}{\emph{Required turn in the top five}} \\
Actual              & 0.073 & 0.372 & 0.979 & 0.282 & 1.000 \\
Position randomized & 0.092 & 0.072 & 0.104 & 0.056 & 1.000 \\
Offset permuted     & 0.089 & 0.377 & 0.982 & 0.216 & 1.000 \\
Broken oracle       & 0.000 & 0.000 & 0.000 & 0.000 & 0.000 \\
\midrule
\multicolumn{6}{@{}l}{\emph{Recall over the gold set at five}} \\
Actual              & 0.055 & 0.183 & 0.892 & 0.197 & 0.969 \\
Position randomized & 0.069 & 0.053 & 0.081 & 0.040 & 0.969 \\
Offset permuted     & 0.079 & 0.194 & 0.905 & 0.168 & 0.976 \\
Broken oracle       & 0.000 & 0.000 & 0.000 & 0.000 & 0.000 \\
\bottomrule
\end{tabular}
\caption{Retrieval under substituted gold sets.}
\label{tab:controlresults}
\end{table}

% P2 - the dissociation
The two scorers come apart exactly as the controls are designed to show. Randomizing positions takes recency from 0.979 to 0.104, which is chance, while permuting the offsets leaves it at 0.982: recency reads position and nothing else. Lexical matching falls under both, from 0.282 to 0.056 and to 0.216, because it reads content and the content relation is destroyed either way. The broken oracle returns zero for every method under every reading, so nothing in the measure rewards a turn outside the gold set. The proportional stratum gives the same pattern on 1{,}113 rows, with recency at 0.998 actual against 0.111 randomized and lexical matching at 0.265 against 0.044.

% ---------- F.3 ----------
\subsection{Context-Window Coverage}
\label{app:extra-coverage}

% P1 - what is computed and why it needs no model
For each memory-bearing probe and each window size, we ask whether every turn the reply requires lies within the window preceding the probe. The computation uses no model and no retriever: it is the position of the required turns in the transcript, so a reader can reproduce it from the release alone. Table~\ref{tab:coverage} gives it on a turn axis and on a token axis, under the recorded reading and both strict ones.

% P2 - what it says
The token axis, plotted in Figure~\ref{fig:window}, is the one that answers the question a reader brings. A frontier context window of 128k tokens holds everything 45.3\% of these probes require, and 200k holds 56.7\%. Full coverage arrives only near a million tokens, which is also the scale of the largest single history in the corpus at 841{,}994 tokens. On the turn axis, a system holding the last thousand turns is complete on 30.9\% of memory-bearing probes and needs ten thousand to reach 95\%, which is longer than nine of the ten histories here. Long context and retrieval are not alternatives at these lengths; a window large enough to make retrieval unnecessary is a window larger than most of these relationships.

\begin{table}[h]
\centering\small
\begin{tabular}{lrrr@{\qquad}lrrr}
\toprule
\multicolumn{4}{l}{\emph{Turns preceding the probe}} & \multicolumn{4}{l}{\emph{Tokens preceding the probe}} \\
Window & recorded & strict & strict 2 & Window & recorded & strict & strict 2 \\
\midrule
10      & 0.010 & 0.000 & 0.000 & 8k    & 0.109 & 0.088 & 0.083 \\
25      & 0.030 & 0.024 & 0.024 & 32k   & 0.248 & 0.212 & 0.205 \\
50      & 0.050 & 0.032 & 0.034 & 128k  & 0.453 & 0.440 & 0.424 \\
100     & 0.079 & 0.060 & 0.059 & 200k  & 0.567 & 0.548 & 0.532 \\
250     & 0.158 & 0.136 & 0.137 & 400k  & 0.782 & 0.772 & 0.756 \\
500     & 0.230 & 0.196 & 0.195 & 1M    & 1.000 & 1.000 & 1.000 \\
1{,}000  & 0.309 & 0.296 & 0.288 &       &       &       &       \\
2{,}000  & 0.473 & 0.460 & 0.449 &       &       &       &       \\
3{,}000  & 0.599 & 0.592 & 0.576 &       &       &       &       \\
5{,}000  & 0.745 & 0.744 & 0.722 &       &       &       &       \\
7{,}500  & 0.884 & 0.876 & 0.873 &       &       &       &       \\
10{,}000 & 0.950 & 0.944 & 0.946 &       &       &       &       \\
\bottomrule
\end{tabular}
\caption{Share of memory-bearing probes whose every required turn lies in the window.
Populations are 404, 250 and 205. Tokens are counted with \texttt{cl100k\_base}.}
\label{tab:coverage}
\end{table}

\begin{figure}[h]
\centering
\begin{tikzpicture}[x=1cm,y=1cm]
  % axes
  \draw[cGrey,thin] (0,0) -- (10.4,0);
  \draw[cGrey,thin] (0,0) -- (0,5.4);
  % y gridlines and labels
  \foreach \y/\lab in {0/0.0, 1.25/0.25, 2.5/0.5, 3.75/0.75, 5/1.0}{
    \draw[cWash,very thin] (0,\y) -- (10.2,\y);
    \node[left,font=\scriptsize,text=cGrey] at (-0.05,\y) {\lab};
  }
  % x ticks
  \foreach \x/\lab in {0/8k, 2.871/32k, 5.742/128k, 6.667/200k, 8.103/400k, 10/1M}{
    \draw[cGrey,thin] (\x,0) -- (\x,-0.08);
    \node[below,font=\scriptsize,text=cGrey] at (\x,-0.10) {\lab};
  }
  % coverage curve
  \draw[cData,very thick]
    (0,0.545) -- (2.871,1.240) -- (5.742,2.265)
    -- (6.667,2.835) -- (8.103,3.910) -- (10,5.000);
  \foreach \x/\y in {0/0.545, 2.871/1.240, 5.742/2.265, 6.667/2.835, 8.103/3.910, 10/5.000}{
    \filldraw[cData] (\x,\y) circle (1.6pt);
  }
  % value labels
  \node[above left,font=\scriptsize,text=cData] at (0,0.545) {0.109};
  \node[above left,font=\scriptsize,text=cData] at (5.742,2.265) {0.453};
  \node[above left,font=\scriptsize,text=cData] at (10,5.000) {1.000};
  % axis titles
  \node[font=\scriptsize,text=cGrey,rotate=90] at (-0.95,2.5)
    {coverage of required turn};
  \node[font=\scriptsize,text=cGrey] at (5.2,-0.70) {context budget (tokens, log scale)};
\end{tikzpicture}
\caption{Evidence coverage by context budget.}
\label{fig:window}
\end{figure}

% ---------- F.4 ----------
\subsection{The Ablation in Full}
\label{app:extra-ablation}

% P1 - levels
Table~\ref{tab:levels} gives the response score reached by each condition, macro-averaged over subjects, on each population. Two things are visible there that the body's single table cannot show. On the memory-bearing rows the ordering of the conditions is not the ordering on the corpus as a whole: a ten-turn window, which ties a three-turn window overall, falls below it where memory is required, because the additional seven turns are the wrong seven. And exact context is the only condition whose advantage survives on every population.

\begin{table}[h]
\centering\small
\begin{tabular}{lrrrrr}
\toprule
Population & C0 & C1 & C2 & C3 & C4 \\
\midrule
All rows, both strata ($n{=}1{,}533$) & 1.293 & 1.590 & \textbf{1.739} & 1.358 & 1.587 \\
All rows, proportional ($n{=}1{,}169$) & 1.288 & 1.612 & \textbf{1.747} & 1.350 & 1.610 \\
Memory-bearing, recorded ($n{=}404$) & 1.410 & 1.452 & \textbf{1.780} & 1.501 & 1.377 \\
Memory-bearing, strict ($n{=}250$) & 1.555 & 1.519 & \textbf{1.822} & 1.697 & 1.513 \\
\bottomrule
\end{tabular}
\caption{Response score by condition.}
\label{tab:levels}
\end{table}

% P2 - contrasts, and the pattern across readings
Table~\ref{tab:contrasts} gives the paired contrasts. The pattern across readings is the subsection's finding: removing the probes whose referent was already on screen raises almost every conditional effect. On the hard tier the gain from exact context goes from 0.437 to 0.541, on the hard and dependent rows from 0.549 to 0.650, and on the intermediate tier from an interval containing zero to one that does not. The rows removed by the strict reading were diluting the effects, not carrying them, which is the argument for having recorded the judgment at all.

\begin{table}[h]
\centering\small
\begin{tabular}{lrrr}
\toprule
Contrast and population & recorded & strict & strict 2 \\
\midrule
\multicolumn{4}{@{}l}{\emph{Whole corpus, both strata}} \\
C1 minus C0 & $+0.297$ [$+0.197$, $+0.403$] & --- & --- \\
C2 minus C1 & $+0.149$ [$+0.107$, $+0.201$] & --- & --- \\
C3 minus C1 & $-0.232$ [$-0.328$, $-0.148$] & --- & --- \\
C4 minus C1 & $-0.003$ [$-0.052$, $+0.044$] & --- & --- \\
C2 minus C3 & $+0.381$ [$+0.304$, $+0.463$] & --- & --- \\
\midrule
\multicolumn{4}{@{}l}{\emph{Whole corpus, proportional stratum}} \\
C1 minus C0 & $+0.325$ [$+0.225$, $+0.423$] & --- & --- \\
C2 minus C1 & $+0.135$ [$+0.090$, $+0.186$] & --- & --- \\
C3 minus C1 & $-0.263$ [$-0.360$, $-0.172$] & --- & --- \\
C4 minus C1 & $-0.002$ [$-0.053$, $+0.046$] & --- & --- \\
\midrule
\multicolumn{4}{@{}l}{\emph{C2 minus C1 by subset, both strata}} \\
Hard & $+0.437$ [$+0.224$, $+0.673$] & $+0.541$ [$+0.334$, $+0.807$] & $+0.536$ \\
Hard and dependent & $+0.549$ [$+0.294$, $+0.802$] & $+0.650$ [$+0.410$, $+0.900$] & $+0.748$ \\
Intermediate & $+0.039$ [$-0.040$, $+0.108$] & $+0.106$ [$+0.017$, $+0.196$] & $+0.124$ \\
Intermediate and dependent & $+0.366$ [$+0.125$, $+0.500$] & $+0.419$ [$+0.135$, $+0.683$] & $+0.480$ \\
Memory-bearing & $+0.328$ [$+0.128$, $+0.586$] & $+0.304$ [$+0.203$, $+0.418$] & $+0.301$ \\
Memory-bearing and dependent & $+0.532$ [$+0.278$, $+0.789$] & $+0.598$ [$+0.386$, $+0.850$] & $+0.648$ \\
\bottomrule
\end{tabular}
\caption{Paired contrasts, bootstrapped over subjects. Whole-corpus rows do not vary with
the reading, which affects only the memory-bearing populations.}
\label{tab:contrasts}
\end{table}

% ---------- F.5 ----------
\subsection{Selectivity}
\label{app:extra-selectivity}

% P1 - The three measures
The five conditions of Appendix~\ref{app:extra-ablation} support three measures in the form of \citet{yoon2026benchpres}. Misapplication is the rate at which a reply uses supplied context on a probe whose verified locus is \emph{none}. Appropriate application is the rate at which a reply uses it on a memory-bearing probe. Restraint is the rate at which a reply declines to use it at all. Table~\ref{tab:selectivity} reports all three pooled over items with Wilson intervals; the per-subject means are in the released tables.

\begin{table}[h]
\centering
\small
\begin{tabular}{lccc}
\toprule
Condition & Misapplication & Appropriate application & Restraint \\
 & \scriptsize $n = 120$ & \scriptsize $n = 404$ & \scriptsize $n = 1{,}533$ \\
\midrule
C0 no context               & 0.0 [0.0, 3.1]    & 0.0               & 100.0 \\
C1 three preceding turns    & 46.7 [38.0, 55.6] & 34.7 [30.2, 39.5] & 63.5 [61.1, 65.9] \\
C2 recorded evidence turns  & 29.2 [21.8, 37.8] & 69.8 [65.2, 74.1] & 41.7 [39.2, 44.2] \\
C3 ten turns from BM25      & 60.8 [51.9, 69.1] & 62.4 [57.6, 67.0] & 36.9 [34.5, 39.4] \\
C4 ten preceding turns      & 54.2 [45.3, 62.8] & 47.5 [42.7, 52.4] & 47.7 [45.3, 50.3] \\
\midrule
C3m matched, neutral label  & 50.8 [42.0, 59.6] & 48.5 [43.7, 53.4] & 51.2 [46.4, 56.1] \\
C4m matched, neutral label  & 59.2 [50.2, 67.5] & 52.7 [47.9, 57.5] & 46.8 [42.0, 51.7] \\
\bottomrule
\end{tabular}
\caption{Selectivity by supplied context.}
\label{tab:selectivity}
\end{table}

% P2 - What the pattern is, and what the label does
Misapplication is exactly zero without context and non-zero under every condition that supplies any. The shipped conditions appear to show retrieval misapplying most, 60.8\% against 54.2\% for a recency window, but that ordering is the prompt label: under a matched pair holding label and ordering equal, retrieval misapplies less (50.8\% [42.0, 59.6]) than the recency window (59.2\% [50.2, 67.5]). The label effect is one-sided and consistent. Announcing the turns as retrieved memories raises memory use from 50.8 to 60.8\% on probes requiring none, from 48.5 to 62.6\% on memory-bearing probes and from 48.6 to 59.9\% on cold opens, while the same relabeling of a recency window moves nothing beyond its intervals. On the memory-bearing probes, the two conditions sit fourteen points apart under the shipped labels and cross under the neutral one, so how context is announced moves the decision to consult memory further than what the context contains.

% P3 - Separability by probe origin and by detector
Two detectors were run on the same populations. The first is the lexical scorer's own confidence, the top-one BM25 score of the probe against the turns before it, used as a signal that retrieval should occur. It reaches 0.546 [0.490, 0.605] on probes drawn from real conversation and 0.504 [0.423, 0.579] on cold opens, both containing chance, and 0.696 [0.629, 0.758] on authored questions over the same histories, which does not. A written question sits closer to its evidence in word overlap than a spoken turn does. The second is a model shown the probe and its three preceding turns and asked whether memory is needed. On the proportional stratum it is at chance on both of the first two splits, 0.530 [0.488, 0.572] and 0.548 [0.488, 0.609], and above chance against its own target at 0.601 [0.527, 0.676], rising to 0.715 [0.619, 0.815] under the strict reading. Its operating point is what matters: it flags 6 of the 40 probes that need memory and raises 40 false alarms doing so. On the question track it reaches 0.802 [0.742, 0.857] when shown three retrieved turns, and that threshold calls 40\% of answerable questions unanswerable. Every instrument ranks the cases somewhat and decides them badly.

% ---------- F.6 ----------
\subsection{Per-Subject Values}
\label{app:persubject}

% P1 - why the per-subject values matter more here than usual
The macro-average over subjects is the aggregation this paper defends, and this subsection is where its cost is visible. Under the recorded reading only four subjects hold five or more memory-bearing rows: U07 with 20, U08 with 39, U09 with 123 and U10 with 213. U02 holds one, U05 and U06 hold four each, and all three enter the seven-subject macro at full weight. Table~\ref{tab:persubject-hit} shows what that does. Lexical matching reaches a macro of 0.424 while its pooled value is 0.248 and no populated subject exceeds 0.400 except the one holding a single row. A macro over subjects of wildly unequal size produces a number no subject exhibits, which is why the body reports pooled figures and prints the macro beside them.

\begin{table}[h]
\centering\scriptsize\setlength{\tabcolsep}{3pt}
\begin{tabular}{lrrrrrrrr}
\toprule
& U02 & U05 & U06 & U07 & U08 & U09 & U10 & \\
Method & \scriptsize 1 & \scriptsize 4 & \scriptsize 4 & \scriptsize 20 & \scriptsize 39 & \scriptsize 123 & \scriptsize 213 & macro \ / \ pooled \\
\midrule
Random                  & 0.000 & 0.000 & 0.000 & 0.000 & 0.026 & 0.000 & 0.005 & 0.004 \ / \ 0.005 \\
Recency over user turns & 0.000 & 0.000 & 0.000 & 0.050 & 0.026 & 0.016 & 0.009 & 0.014 \ / \ 0.015 \\
Recency                 & 0.000 & 0.000 & 0.000 & 0.100 & 0.077 & 0.016 & 0.009 & 0.029 \ / \ 0.022 \\
BM25                    & 1.000 & 0.500 & 0.250 & 0.400 & 0.359 & 0.268 & 0.192 & 0.424 \ / \ 0.248 \\
Oracle                  & 1.000 & 1.000 & 1.000 & 1.000 & 1.000 & 1.000 & 1.000 & 1.000 \ / \ 1.000 \\
\bottomrule
\end{tabular}
\caption{Required turn in the top five, by subject, recorded reading.}
\label{tab:persubject-hit}
\end{table}

% P2 - the strict reading narrows it further
The strict reading removes U02 and U05 entirely and leaves five subjects. Lexical matching then reads 0.397 macro against 0.300 pooled, and recency is exactly zero on every subject. The dependent subset behaves the same way: BM25 reaches 0.658 macro against 0.347 pooled on the 167 recorded dependent rows, because U02, U05 and U06 contribute one or two rows each and score 1.000 on them.

\begin{table}[h]
\centering\scriptsize\setlength{\tabcolsep}{3pt}
\begin{tabular}{lrrrrrrrrrrr}
\toprule
Condition & U01 & U02 & U03 & U04 & U05 & U06 & U07 & U08 & U09 & U10 & macro \ / \ pooled \\
\midrule
\multicolumn{12}{@{}l}{\emph{Misapplication, verified locus none}} \\
C1 & 0.000 & 0.333 & 0.000 & 0.500 & 0.625 & 0.222 & 0.333 & 0.562 & 0.487 & 0.545 & 0.361 \ / \ 0.467 \\
C2 & 1.000 & 0.000 & 0.500 & 0.375 & 0.250 & 0.111 & 0.417 & 0.188 & 0.282 & 0.364 & 0.349 \ / \ 0.292 \\
C3 & 0.000 & 1.000 & 0.000 & 0.625 & 0.750 & 0.444 & 0.750 & 0.500 & 0.615 & 0.636 & 0.532 \ / \ 0.608 \\
C4 & 1.000 & 0.667 & 0.000 & 0.500 & 0.750 & 0.333 & 0.500 & 0.625 & 0.487 & 0.636 & 0.550 \ / \ 0.542 \\
\midrule
\multicolumn{12}{@{}l}{\emph{Appropriate application, memory-bearing, recorded}} \\
C1 & --- & 0.000 & --- & --- & 0.250 & 0.000 & 0.150 & 0.308 & 0.325 & 0.394 & 0.204 \ / \ 0.347 \\
C2 & --- & 1.000 & --- & --- & 0.500 & 0.500 & 0.600 & 0.744 & 0.642 & 0.737 & 0.675 \ / \ 0.698 \\
C3 & --- & 1.000 & --- & --- & 0.500 & 0.500 & 0.500 & 0.718 & 0.626 & 0.624 & 0.638 \ / \ 0.626 \\
C4 & --- & 0.000 & --- & --- & 0.750 & 0.250 & 0.500 & 0.538 & 0.341 & 0.554 & 0.419 \ / \ 0.483 \\
\bottomrule
\end{tabular}
\caption{Use of supplied context, by subject. C0 is zero in every cell.}
\label{tab:persubject-use}
\end{table}

% P3 - the use decisions
Table~\ref{tab:persubject-use} gives misapplication on the 120 verified-none rows and appropriate application on the memory-bearing rows. Misapplication is defined on all ten subjects, so its macro is better behaved than the retrieval macro; even so, U01 contributes one row and U03 two, and both take extreme values. C0 is exactly zero in every cell of every subject, which is the zero-fabrication result at the level it was measured.

\begin{table}[h]
\centering\scriptsize\setlength{\tabcolsep}{3pt}
\begin{tabular}{lrrrrrrrrrrr}
\toprule
Condition & U01 & U02 & U03 & U04 & U05 & U06 & U07 & U08 & U09 & U10 & macro \ / \ pooled \\
\midrule
C0 & 0.929 & 1.286 & 1.115 & 1.333 & 1.243 & 1.025 & 1.513 & 1.394 & 1.643 & 1.378 & 1.286 \ / \ 1.431 \\
C1 & 1.571 & 1.429 & 1.462 & 1.692 & 1.568 & 1.568 & 1.680 & 1.647 & 1.696 & 1.560 & 1.587 \ / \ 1.622 \\
C2 & 1.643 & 1.714 & 1.538 & 1.769 & 1.784 & 1.790 & 1.793 & 1.805 & 1.827 & 1.691 & 1.736 \ / \ 1.763 \\
C3 & 0.964 & 1.314 & 1.154 & 1.564 & 1.432 & 1.296 & 1.507 & 1.339 & 1.622 & 1.324 & 1.352 \ / \ 1.426 \\
C4 & 1.643 & 1.543 & 1.423 & 1.538 & 1.514 & 1.580 & 1.707 & 1.606 & 1.737 & 1.546 & 1.584 \ / \ 1.625 \\
\midrule
C2 $-$ C1 & +0.071 & +0.286 & +0.077 & +0.077 & +0.216 & +0.222 & +0.113 & +0.158 & +0.131 & +0.131 & 0.148 \ / \ 0.140 \\
\bottomrule
\end{tabular}
\caption{Content match by condition and subject, all judged rows.}
\label{tab:persubject-content}
\end{table}

% P4 - the response scores
Table~\ref{tab:persubject-content} gives content match per condition on all judged rows. The contrast the paper turns on, C2 minus C1, is positive for every one of the ten subjects, ranging from $+0.071$ to $+0.286$, so the effect is not carried by one relationship. Its macro is 0.148 [0.109, 0.194] and its pooled value 0.140 [0.126, 0.167]. On the memory-bearing rows alone the same contrast is 0.328 macro and 0.233 pooled, and the second of those is the $\gamma_1$ of record, since the identity in Proposition~\ref{prop:nonident} holds on pooled means. The proportional stratum, the strict readings and the dependent track are in the released tables.

% ---------- F.7 ----------
\subsection{Deployed Memory Systems}
\label{app:deployed}

% P1 - what was run, on what release, and what was not
Three memory systems were run and their coverage is not equal. Table~\ref{tab:deployed} marks each row with the release it was scored against. The Codex port covers all ten subjects and both tracks on the release reported here: 3{,}235 questions and 1{,}533 chat rows, every scoreable item, each judged against the gold that ships. Supermemory covers two subjects on both tracks, scored against the release of 2026-09-15 and not re-run; its stores were deleted the following day and the question sets of that date predate the enriched items later added to them. mem0 reached the first 850 of one subject's 986 questions before its search quota was exhausted, ran no chat track, and ran nothing on the other nine subjects. No row should be read against a row scored on a different release.

\begin{table}[h]
\centering\small
\begin{tabular}{lllrrrr}
\toprule
System & Subjects & Run & Track & $n$ & Answer accuracy & Required turn in top five \\
\midrule
Codex port  & ten & current & questions & 3{,}235      & 0.445 & 0.331 \\
Codex port  & ten & current & chat      & 1{,}533      & 0.581 & 0.137 \\
Supermemory & U09 & carried & questions & 986          & 0.664 & 0.537 \\
Supermemory & U09 & carried & chat      & 434          & 0.634 & 0.199 \\
Supermemory & U10 & carried & questions & 1{,}255      & 0.596 & 0.465 \\
Supermemory & U10 & carried & chat      & 482          & 0.633 & 0.117 \\
mem0        & U09 & carried & questions & 850 of 986   & 0.593 & 0.548 \\
\bottomrule
\end{tabular}
\caption{Deployed memory systems by coverage.}
\label{tab:deployed}
\end{table}

% P2 - what the Codex port shows
On the current release the port answers 44.5\% of answerable questions correctly, macro over ten subjects, and 58.1\% of chat probes. Its judge recall at five is 0.331 on questions and 0.137 on chat, and recall at fifteen equals recall at five almost everywhere, because the port returns a handful of notes per query and recall does not grow with $k$. On the memory-bearing chat rows it answers 55.4\% correctly with recall 0.250, and under the strict reading 66.6\% with recall 0.149 over five subjects. Every question row of the port is judged against the gold that ships, including the one item whose gold was repaired after the first scoring and which was re-run against the repaired version.

% P3 - what the carried rows show and what they cannot
The carried rows are stronger on both tracks, which is what a deployed product with a tuned ingestion should be, and they cannot be compared with the current rows: a different release, two subjects out of ten, and a chat track that has since been repaired. They are printed because withholding them would leave the appendix silent about the only deployed products in it, and marked so that nobody averages across the marker. Abstention on the unanswerable questions is 0.804 for the Codex port macro-averaged over ten subjects, and 1.000 and 0.632 for Supermemory on its two. It is not computable for mem0: none of that subject's unanswerable items was reached before the quota stopped the run, so its 0.044 abstention rate is on answerable questions alone.
\section{Persona and Profile Reconstruction}
\label{app:recon}

The reconstruction track asks a system to read a subject's full history in \texttt{source} and build the persona and profile layers from it (schemas in Appendix~\ref{app:schemas}). The persona holds Big Five levels, demographics, behavioral quirks, fifteen prose sections and personality dimensions; the profile holds identity fields, speaking style, Big Five levels, personality dimensions, categorized facts and named entities. The release pipeline builds both from the history in pieces: the persona chain distills it one day at a time into a store of claims, and the profile generators read it in windows. The agentic method instead reads the whole history in one pass, with no day slicing and no claim store, and every extracted item must cite the subject's own turns. Three systems each ran three times on each of the ten subjects.

\subsection{Task and Scope}
\label{app:recontask}

\paragraph{Subjects.} Ten subjects, U01--U10, with 115 to 12,627 turns each (Table~\ref{tab:peruser}). Each turn has an id of the form \texttt{dayN-M} and a sender: user (the subject), companion or app. A profile claim may rest on something the companion said if the subject ratified it, by affirming it, repeating it or building on it as settled; silence does not count, and the evidence cited is the subject's ratifying turn. Token counts here are the builder's estimate, characters $\div$ 4 over message text, about 20\% above the \texttt{cl100k\_base} counts of Section~\ref{sec:dataset}; they run from 2,743 to 1,023,911 per subject (Table~\ref{tab:recon-peruser}), and the transcript rendered for the prompts is 6--26\% larger. GT fills 239 persona and 476 profile fields across the ten subjects. The two longest subjects hold 83\% of all transcript tokens but only 18\% of the scored persona fields and 39\% of the scored profile fields, so pooled scores weight subjects by fields, and \S\ref{app:reconlength} reports each subject.

\paragraph{Scope.} All ten subjects are scored, by all three systems in both layers, and every pooled score covers all ten. A history too long for one context is read in two halves (\S\ref{app:reconsystems}).

\paragraph{Ground truth.} GT is the released persona and profile layers, which the release pipeline generated and the authors then reviewed by hand. The persona chain's late stages read fixed-size inputs, which shrinks the persona GT on long histories (\S\ref{app:reconlength}). The profile GT for U02--U10 was built on the pre-anonymization history and repaired against the release, and unlike the runs it also cites companion turns (1,034 of its 7,315 citations).

\subsection{Systems and Settings}
\label{app:reconsystems}

\paragraph{Systems.} Each system ran three independent runs per subject and layer, sharing no state. Claude Opus 5.5 ran as agent workflows with file tools and a 1M-token context, with the history inline (U01--U07) or in files each agent reads in full (U08--U10); contexts near the window were auto-compacted, and on U09, 23 of the 27 persona readers and 4 of the 6 profile agents answered from a 17--38k-token summary, as did one U10 agent. Codex GPT-5.6-sol ran through the Responses API, one request per call with the history inline, sent by a separate runner, not audited here, that applies the same prompts and edits; a single request cannot be compacted, and the largest was about 597k tokens (U09). Antigravity ran Gemini 3.8 Flash at the high thinking level in an agentic IDE, each run's agents working from the same run scripts as Claude's; every call had a 256k-token budget and none exceeded about 186k, so U09 and U10 were read in pieces. Only Codex held the whole history, or on U10 the whole half, in every call.

\paragraph{Prompts.}\label{app:reconprompts} The persona builder uses the chain's own prompts, lifted from its source and checked byte for byte, with fifteen exact-match edits that point them at the transcript instead of a claim store. One edit changes the method: the chain's prose prompt wrote a sentence for each dimension it was given, and the edited prompt may write any of the 21 dimensions the history evidences. The profile builder uses the profile generator's prompts verbatim, with a citation schema of bare turn ids. Every prompt ends with one line asking for the return value, each run's manifest lists all edits, and the full prompt texts ship with the code.

A persona run makes five Big Five calls, one per trait with its definition, each returning a level, a confidence, a rationale and citations, with medium when the evidence is thin. One call each extracts the quirks, the demographics, and the prose sections with one sentence per evidenced dimension. Each extracted demographic field is then checked by a call that reads only its cited turns and is dropped unless confirmed, and a psychotherapy-style note, written last from the transcript and the run's own Big Five levels and quirks, supplies one scored prose section. A profile run makes two calls: atomic claims, each with one of 16 fact categories, an optional named entity and bare turn-id citations, and then a summary holding the narrative, Big Five, identity, speaking style, personality dimensions, psychology and background.

\paragraph{Citations.} Citations of companion or app turns, and ids that do not exist, are removed, and an item left with no citation to the subject's turns is dropped: quirks, demographic fields and personality dimensions in the persona, and facts, identity, style, dimension, psychology and background entries in the profile. Big Five levels are kept without a citation. No personality dimension was ever dropped as uncited.

\paragraph{Histories too long for one context.} U10 is split at the day boundary nearest its token midpoint with at least three hours of silence across it, and each half is read single-pass by the same agents. The halves are merged by rule. In the persona, Big Five follows the chain's rule (the half with higher confidence wins; ties go to the half with more evidence, then to medium), quirks are pooled and de-duplicated with evidence unioned, and demographics go by confidence. In the profile, facts are concatenated, a summary field is present if either half has it, with evidence unioned, and Big Five takes the shared level, or Moderate when the halves disagree. In both, one text-only LLM call merges the halves' wording without seeing the history or the citation lists. A run with a failed half is excluded.

\subsection{Scoring}
\label{app:reconscoring}

Each scored field in each run is classed against GT: \textbf{TP} when the run matches a non-empty GT value or both mark an item present; \textbf{FP} when the run has something GT lacks; \textbf{FN} when GT has something the run lacks; and a \textbf{wrong value} when both have a value and they differ, counted once as FP and once as FN. Fields both leave empty are excluded everywhere, since counting them as agreement rewards writing nothing.

\paragraph{Categories.} The persona is scored on its five Big Five levels (low / medium / high), 13 demographic fields compared by exact value with no normalisation (GT stores age as a number), the personality dimensions GT or a run writes out of 21 names, 15 prose sections by presence, and GT's distinct quirks by recall: a run quirk matches when it shares at least two cited turns and at least half of the smaller citation set (near-duplicates merged, 25 listed and 15 scored). The profile is scored on its five Big Five levels (Low / Moderate / High), and by presence on 7 identity fields, 6 speaking-style fields, 21 personality dimensions and 16 fact categories, a category counting as present with at least one fact; GT's entities are scored by recall, on the exact name after lower-casing and trimming. A different Big Five level is a wrong value. Big Five is a fixed set, and quirks and entities are scored for recall only, so a run can add false positives in every other category.

\paragraph{Metrics.} F1 $= 2\,\mathrm{TP}/(2\,\mathrm{TP}+\mathrm{FP}+\mathrm{FN})$, with precision and recall, computed per run over the pooled cells of all ten subjects and reported as the mean (min--max) over the three runs. F1 is the Dice overlap between a run and GT over non-empty cells; \emph{runs agree} is the same Dice overlap between two runs, averaged over the three pairs, so the two can be compared directly.

\paragraph{Votes.} For each field GT fills, we count how many of the three runs match it: 3/3, a split (2/3 or 1/3), or 0/3. A 0/3 field is a systematic difference from GT; a split is run-to-run noise.

\paragraph{Length analysis.} Length is tested across subjects, by Spearman $\rho$ between history size and per-subject F1 (mean of runs) with an exact two-sided permutation p over all 10! orderings. Sensitivity variants drop the categories whose GT depends on history length, or score dimensions against the chain's pre-cut set (a diagnostic, not an alternative GT).

\subsection{Overall Agreement}
\label{app:reconoverall}

\begin{table}[htbp]
\centering\small\setlength{\tabcolsep}{4pt}
\caption{Agreement with GT over the ten subjects, mean (min--max) over the three runs. Models as in \S\ref{app:reconsystems}.}
\label{tab:recon-agreement}
\begin{tabular}{@{}llcccc@{}}
\toprule
System & Layer & F1 & Precision & Recall & Runs agree \\
\midrule
Claude & persona & 0.69 (0.68--0.69) & 0.56 (0.55--0.56) & 0.89 (0.88--0.90) & 0.93 \\
 & profile & 0.80 (0.79--0.81) & 0.80 (0.79--0.80) & 0.80 (0.79--0.82) & 0.95 \\
Codex & persona & 0.69 (0.69--0.69) & 0.57 (0.57--0.57) & 0.86 (0.86--0.86) & 0.93 \\
 & profile & 0.77 (0.77--0.77) & 0.75 (0.75--0.75) & 0.79 (0.79--0.80) & 0.95 \\
Antigravity & persona & 0.70 (0.68--0.72) & 0.59 (0.57--0.61) & 0.86 (0.84--0.88) & 0.95 \\
 & profile & 0.78 (0.77--0.78) & 0.80 (0.80--0.81) & 0.75 (0.74--0.76) & 0.91 \\
\bottomrule
\end{tabular}
\end{table}

\begin{itemize}
\item \textbf{Persona.} All three systems reach F1 0.69--0.70. Recall is 0.86--0.89 and precision 0.56--0.59: the runs rarely miss what GT holds, and most disagreement is a run writing something GT leaves empty.
\item \textbf{Runs agree with each other more than with GT:} 0.93--0.95 between runs against 0.69--0.70 with GT, so the difference from GT is systematic, not run-to-run noise.
\item \textbf{The systems agree with each other more than with GT:} the same Dice overlap between two systems' runs, averaged over the nine pairs, is 0.86--0.88 on the persona and 0.85--0.89 on the profile, although the systems span a frontier and a flash-tier model. They converge on one reading of the history, and it departs from GT in the same way.
\item \textbf{Profile} F1 is higher (Claude 0.80, Codex 0.77, Antigravity 0.78) because the profile GT fills more of the fixed lists both scorers check for presence, which leaves fewer extras to count as false positives (\S\ref{app:reconlength}).
\end{itemize}

\subsection{Results by Category}
\label{app:reconcats}

\begin{figure}[htbp]
\centering
% Generated by findings/paper/scripts/fig_votes.py. Needs pgfplots and the patterns library.
\begin{tikzpicture}
\begin{axis}[xbar stacked, bar width=5pt, width=\linewidth, height=6.6cm, xmin=0, xmax=100,
  title={\small Persona}, xlabel={\scriptsize \% of GT's filled fields}, symbolic y coords={{Prose sections (Antigravity)},{Prose sections (Codex)},{Prose sections (Claude)},{Quirks (Antigravity)},{Quirks (Codex)},{Quirks (Claude)},{Personality dimensions (Antigravity)},{Personality dimensions (Codex)},{Personality dimensions (Claude)},{Demographics (Antigravity)},{Demographics (Codex)},{Demographics (Claude)},{Big Five (Antigravity)},{Big Five (Codex)},{Big Five (Claude)}},
  ytick=data, y tick label style={font=\tiny}, x tick label style={font=\tiny}, enlarge y limits=0.03,
  legend style={at={(0.5,-0.18)},anchor=north,legend columns=4,font=\tiny,draw=none},
  point meta=explicit symbolic, nodes near coords, nodes near coords style={font=\tiny,color=white},
  axis x line*=bottom, axis y line*=left]
\addplot[fill={rgb,255:red,31;green,95;blue,166}, draw=white] coordinates {(96.2,{Prose sections (Antigravity)}) [96] (97.2,{Prose sections (Codex)}) [97] (99.1,{Prose sections (Claude)}) [99] (40.0,{Quirks (Antigravity)}) [40] (46.7,{Quirks (Codex)}) [47] (53.3,{Quirks (Claude)}) [53] (92.7,{Personality dimensions (Antigravity)}) [93] (94.5,{Personality dimensions (Codex)}) [95] (100.0,{Personality dimensions (Claude)}) [100] (38.5,{Demographics (Antigravity)}) [38] (30.8,{Demographics (Codex)}) [31] (38.5,{Demographics (Claude)}) [38] (70.0,{Big Five (Antigravity)}) [70] (62.0,{Big Five (Codex)}) [62] (68.0,{Big Five (Claude)}) [68]};
\addplot[fill={rgb,255:red,91;green,155;blue,213}, draw=white, every node near coord/.append style={color=black}] coordinates {(0.0,{Prose sections (Antigravity)}) [] (1.9,{Prose sections (Codex)}) [] (0.9,{Prose sections (Claude)}) [] (6.7,{Quirks (Antigravity)}) [] (0.0,{Quirks (Codex)}) [] (13.3,{Quirks (Claude)}) [13] (5.5,{Personality dimensions (Antigravity)}) [] (3.6,{Personality dimensions (Codex)}) [] (0.0,{Personality dimensions (Claude)}) [] (0.0,{Demographics (Antigravity)}) [] (7.7,{Demographics (Codex)}) [] (0.0,{Demographics (Claude)}) [] (4.0,{Big Five (Antigravity)}) [] (10.0,{Big Five (Codex)}) [10] (8.0,{Big Five (Claude)}) [8]};
\addplot[fill={rgb,255:red,232;green,137;blue,74}, draw=white, every node near coord/.append style={color=black}] coordinates {(0.9,{Prose sections (Antigravity)}) [] (0.0,{Prose sections (Codex)}) [] (0.0,{Prose sections (Claude)}) [] (6.7,{Quirks (Antigravity)}) [] (0.0,{Quirks (Codex)}) [] (6.7,{Quirks (Claude)}) [] (0.0,{Personality dimensions (Antigravity)}) [] (1.8,{Personality dimensions (Codex)}) [] (0.0,{Personality dimensions (Claude)}) [] (0.0,{Demographics (Antigravity)}) [] (0.0,{Demographics (Codex)}) [] (0.0,{Demographics (Claude)}) [] (12.0,{Big Five (Antigravity)}) [12] (10.0,{Big Five (Codex)}) [10] (8.0,{Big Five (Claude)}) [8]};
\addplot[fill={rgb,255:red,178;green,24;blue,43}, draw=white, postaction={pattern=north east lines, pattern color=white!70!black}] coordinates {(2.8,{Prose sections (Antigravity)}) [] (0.9,{Prose sections (Codex)}) [] (0.0,{Prose sections (Claude)}) [] (46.7,{Quirks (Antigravity)}) [47] (53.3,{Quirks (Codex)}) [53] (26.7,{Quirks (Claude)}) [27] (1.8,{Personality dimensions (Antigravity)}) [] (0.0,{Personality dimensions (Codex)}) [] (0.0,{Personality dimensions (Claude)}) [] (61.5,{Demographics (Antigravity)}) [62] (61.5,{Demographics (Codex)}) [62] (61.5,{Demographics (Claude)}) [62] (14.0,{Big Five (Antigravity)}) [14] (18.0,{Big Five (Codex)}) [18] (16.0,{Big Five (Claude)}) [16]};
\end{axis}
\end{tikzpicture}
\par\medskip
% Generated by findings/paper/scripts/fig_votes.py. Needs pgfplots and the patterns library.
\begin{tikzpicture}
\begin{axis}[xbar stacked, bar width=5pt, width=\linewidth, height=7.7cm, xmin=0, xmax=100,
  title={\small Profile}, xlabel={\scriptsize \% of GT's filled fields}, symbolic y coords={{Entities (Antigravity)},{Entities (Codex)},{Entities (Claude)},{Fact categories (Antigravity)},{Fact categories (Codex)},{Fact categories (Claude)},{Personality dimensions (Antigravity)},{Personality dimensions (Codex)},{Personality dimensions (Claude)},{Speaking style (Antigravity)},{Speaking style (Codex)},{Speaking style (Claude)},{Identity fields (Antigravity)},{Identity fields (Codex)},{Identity fields (Claude)},{Big Five (Antigravity)},{Big Five (Codex)},{Big Five (Claude)}},
  ytick=data, y tick label style={font=\tiny}, x tick label style={font=\tiny}, enlarge y limits=0.03,
  legend style={at={(0.5,-0.18)},anchor=north,legend columns=4,font=\tiny,draw=none},
  point meta=explicit symbolic, nodes near coords, nodes near coords style={font=\tiny,color=white},
  axis x line*=bottom, axis y line*=left]
\addplot[fill={rgb,255:red,31;green,95;blue,166}, draw=white] coordinates {(22.7,{Entities (Antigravity)}) [23] (30.9,{Entities (Codex)}) [31] (43.6,{Entities (Claude)}) [44] (97.2,{Fact categories (Antigravity)}) [97] (86.2,{Fact categories (Codex)}) [86] (89.0,{Fact categories (Claude)}) [89] (76.8,{Personality dimensions (Antigravity)}) [77] (97.3,{Personality dimensions (Codex)}) [97] (88.4,{Personality dimensions (Claude)}) [88] (100.0,{Speaking style (Antigravity)}) [100] (100.0,{Speaking style (Codex)}) [100] (100.0,{Speaking style (Claude)}) [100] (36.8,{Identity fields (Antigravity)}) [37] (65.8,{Identity fields (Codex)}) [66] (60.5,{Identity fields (Claude)}) [61] (64.0,{Big Five (Antigravity)}) [64] (72.0,{Big Five (Codex)}) [72] (74.0,{Big Five (Claude)}) [74]};
\addplot[fill={rgb,255:red,91;green,155;blue,213}, draw=white, every node near coord/.append style={color=black}] coordinates {(11.8,{Entities (Antigravity)}) [12] (8.2,{Entities (Codex)}) [8] (10.0,{Entities (Claude)}) [10] (1.8,{Fact categories (Antigravity)}) [] (7.3,{Fact categories (Codex)}) [] (5.5,{Fact categories (Claude)}) [] (12.5,{Personality dimensions (Antigravity)}) [12] (1.8,{Personality dimensions (Codex)}) [] (5.4,{Personality dimensions (Claude)}) [] (0.0,{Speaking style (Antigravity)}) [] (0.0,{Speaking style (Codex)}) [] (0.0,{Speaking style (Claude)}) [] (2.6,{Identity fields (Antigravity)}) [] (5.3,{Identity fields (Codex)}) [] (2.6,{Identity fields (Claude)}) [] (20.0,{Big Five (Antigravity)}) [20] (2.0,{Big Five (Codex)}) [] (0.0,{Big Five (Claude)}) []};
\addplot[fill={rgb,255:red,232;green,137;blue,74}, draw=white, every node near coord/.append style={color=black}] coordinates {(15.5,{Entities (Antigravity)}) [15] (15.5,{Entities (Codex)}) [15] (6.4,{Entities (Claude)}) [] (0.0,{Fact categories (Antigravity)}) [] (4.6,{Fact categories (Codex)}) [] (1.8,{Fact categories (Claude)}) [] (3.6,{Personality dimensions (Antigravity)}) [] (0.0,{Personality dimensions (Codex)}) [] (1.8,{Personality dimensions (Claude)}) [] (0.0,{Speaking style (Antigravity)}) [] (0.0,{Speaking style (Codex)}) [] (0.0,{Speaking style (Claude)}) [] (26.3,{Identity fields (Antigravity)}) [26] (2.6,{Identity fields (Codex)}) [] (5.3,{Identity fields (Claude)}) [] (2.0,{Big Five (Antigravity)}) [] (6.0,{Big Five (Codex)}) [] (6.0,{Big Five (Claude)}) []};
\addplot[fill={rgb,255:red,178;green,24;blue,43}, draw=white, postaction={pattern=north east lines, pattern color=white!70!black}] coordinates {(50.0,{Entities (Antigravity)}) [50] (45.5,{Entities (Codex)}) [45] (40.0,{Entities (Claude)}) [40] (0.9,{Fact categories (Antigravity)}) [] (1.8,{Fact categories (Codex)}) [] (3.7,{Fact categories (Claude)}) [] (7.1,{Personality dimensions (Antigravity)}) [] (0.9,{Personality dimensions (Codex)}) [] (4.5,{Personality dimensions (Claude)}) [] (0.0,{Speaking style (Antigravity)}) [] (0.0,{Speaking style (Codex)}) [] (0.0,{Speaking style (Claude)}) [] (34.2,{Identity fields (Antigravity)}) [34] (26.3,{Identity fields (Codex)}) [26] (31.6,{Identity fields (Claude)}) [32] (14.0,{Big Five (Antigravity)}) [14] (20.0,{Big Five (Codex)}) [20] (20.0,{Big Five (Claude)}) [20]};
\legend{3/3 runs match GT,2/3,1/3,0/3 (all runs miss)}
\end{axis}
\end{tikzpicture}
\caption{Vote distribution by category and system, pooled over the ten subjects: the share of GT's filled fields that 3, 2, 1 or 0 of a system's three runs match (the 0/3 segment is hatched).}
\label{fig:recon-votes}
\end{figure}

\begin{itemize}
\item \textbf{Shared across systems} (Figure~\ref{fig:recon-votes}): every system writes nearly every personality dimension and prose section GT has, then about ten to eleven more dimensions per subject and run in the persona (4--8 in the profile) and about two more prose sections, and every run matches every speaking-style field. 62\% of GT's filled demographic fields are missed by every run of each system, mostly wording that exact matching cannot credit (a correct paraphrase of an occupation scores zero).
\item \textbf{Big Five:} all three runs match 62--70\% of persona traits and 64--74\% of profile traits, and every miss, on all systems and both layers, is exactly one level off.
\item \textbf{Where the systems differ:} each is strongest somewhere else. Claude recovers the most entities (44\% named by all three runs, against 23--31\%) and quirks (8 of GT's 15, against 6--7); Codex the most identity fields (66\%) and profile dimensions (97\%); Antigravity the most fact categories (97\%, against 86--89\%). Antigravity's profile runs are the weakest on identity fields (37\%, against 61--66\%) and on entities, 50\% of which every run misses (40--45\% for the others).
\item \textbf{Entities:} all three runs name 23--44\% of GT's entities, and 26--48\% without U09, whose 23 entities include four composite role labels (e.g.\ ``user's sister'') that cannot match a plain name.
\end{itemize}

\subsection{Results by Subject and History Length}
\label{app:reconlength}

\begin{table}[htbp]
\centering\small\setlength{\tabcolsep}{5pt}
\caption{F1 per subject (mean of the three runs), and Spearman $\rho$ between transcript size and F1 across the ten subjects with its exact p. \emph{Adjusted} drops the categories whose GT depends on history length: extra dimensions and demographics in the persona, and every presence category in the profile, leaving Big Five and entities.}
\label{tab:recon-peruser}
\begin{tabular}{@{}lrcccccc@{}}
\toprule
& & \multicolumn{3}{c}{Persona F1} & \multicolumn{3}{c}{Profile F1} \\
\cmidrule(lr){3-5}\cmidrule(lr){6-8}
subject & tokens & Claude & Codex & Antigravity & Claude & Codex & Antigravity \\
\midrule
U01 & 2,743 & 0.72 & 0.73 & 0.70 & 0.74 & 0.67 & 0.82 \\
U02 & 11,662 & 0.79 & 0.81 & 0.78 & 0.65 & 0.55 & 0.67 \\
U03 & 4,493 & 0.75 & 0.72 & 0.72 & 0.64 & 0.62 & 0.68 \\
U04 & 7,326 & 0.76 & 0.75 & 0.86 & 0.76 & 0.77 & 0.77 \\
U05 & 12,086 & 0.74 & 0.68 & 0.78 & 0.76 & 0.75 & 0.78 \\
U06 & 83,935 & 0.84 & 0.76 & 0.81 & 0.77 & 0.72 & 0.77 \\
U07 & 95,760 & 0.65 & 0.68 & 0.68 & 0.86 & 0.89 & 0.84 \\
U08 & 123,032 & 0.73 & 0.76 & 0.71 & 0.90 & 0.89 & 0.92 \\
U09 & 621,374 & 0.47 & 0.55 & 0.52 & 0.88 & 0.86 & 0.77 \\
U10 & 1,023,911 & 0.47 & 0.48 & 0.52 & 0.81 & 0.75 & 0.71 \\
\midrule
\multicolumn{2}{@{}l}{$\rho$, as scored} & $-$0.55 & $-$0.44 & $-$0.55 & $+$0.83 & $+$0.65 & $+$0.18 \\
\multicolumn{2}{@{}l}{\quad exact p} & 0.105 & 0.204 & 0.105 & 0.005 & 0.049 & 0.632 \\
\multicolumn{2}{@{}l}{$\rho$, adjusted} & $+$0.02 & $+$0.01 & $-$0.01 & $-$0.36 & $-$0.06 & $-$0.66 \\
\multicolumn{2}{@{}l}{\quad exact p} & 0.973 & 1.000 & 1.000 & 0.307 & 0.872 & 0.044 \\
\bottomrule
\end{tabular}
\end{table}

\paragraph{Per subject.} U09 and U10 are the weakest persona subjects for every system (F1 0.47--0.55, against 0.65--0.86 on U01--U08), and U10 has the largest share of fields every run misses (22--26\%). On the profile, U09 is among the best subjects for Claude and Codex, and U10 is mid-range.

\paragraph{Across subjects.} As scored (Table~\ref{tab:recon-peruser}), persona F1 falls with history length ($\rho$ $-$0.44 to $-$0.55, not significant) and profile F1 rises ($\rho$ +0.83 for Claude, p = 0.005; +0.65 for Codex, p = 0.049; +0.18 for Antigravity, p = 0.632). Both trends come mainly from how GT was built, not from how the runs read:
\begin{itemize}
\item \textbf{The runs behave the same way in both layers.} On U09 and U10 every system writes 20--21 of the 21 personality dimensions in the persona, and Claude and Codex do in the profile (Antigravity 18).
\item \textbf{The persona GT shrinks relative to the history.} The chain's prose stage sees only the first 14,000 characters of its grounded content, so on U07--U10 the persona GT keeps 4, 6, 3 and 2 of the 20, 18, 18 and 19 dimensions the chain had grounded, and its Big Five, quirk and demographic stages see only the first 11,000 characters of the claim store and 6,000 of the day summaries. The runs' other dimensions count as false positives, although 83--94\% of them on U07--U10 are dimensions the chain itself had grounded. Removing extra dimensions and demographics removes the persona trend ($\rho$ $-$0.01 to +0.02); scoring dimensions against the pre-cut set reverses it ($\rho$ +0.53 to +0.66), partly by construction.
\item \textbf{The profile GT grows with the history.} Its generators read the history in windows and merge the results, so it grows from 5--26 facts on U01--U05 to 227--797 on U07--U10 and fills its fixed lists on long histories (all 21 dimensions on U08 and U09, 15--16 of 16 fact categories on U07--U10). Runs that fill the same lists score true positives with no room for false positives, so profile precision rises with length: the presence categories alone give $\rho$ +0.85 to +0.88, while the content categories (Big Five and entities) lean negative ($\rho$ $-$0.66 to $-$0.06), significantly only for Antigravity (p = 0.044).
\item \textbf{Big Five}, the one item both layers score as content, shows no significant trend in either ($\rho$ $-$0.39 to +0.37, every p $\geq$ 0.26).
\end{itemize}
With the confounded categories removed, U09 and U10 still score 0.78--0.84 against a U01--U08 mean of 0.88--0.90, and U10's persona Big Five (0.47--0.60) is at or near each system's lowest; with two subjects, and U10 read in halves, this residual is suggestive only.

\subsection{Cost}
\label{app:reconcost}

\begin{table}[htbp]
\centering\small
\caption{Cost per run over the ten subjects (U10 read in halves): model calls, tokens processed and output tokens.}
\label{tab:recon-cost}
\begin{tabular}{@{}lrrrrrr@{}}
\toprule
& \multicolumn{3}{c}{Persona} & \multicolumn{3}{c}{Profile} \\
\cmidrule(lr){2-4}\cmidrule(lr){5-7}
System & calls & processed & output & calls & processed & output \\
\midrule
Claude & 1,716 & 545.5M & 3.0M & 296 & 117.8M & 904k \\
Codex & 138 & 17.7M & 153k & 28 & 3.7M & 182k \\
Antigravity & 498 & 35.8M & 677k & 509 & 34.3M & 902k \\
\bottomrule
\end{tabular}
\end{table}

Tokens come from each system's own logs, over all attempts including retries, divided by three for a per-run figure; \emph{processed} counts all input tokens, cached ones included, and output includes reasoning tokens. At the same persona F1, Codex processes about 31$\times$ fewer tokens per run than Claude, and Antigravity about 15$\times$ fewer. Call counts are not comparable across systems, since an agent step re-sends its context while a Codex call is one request. Three counts are low: Claude's compaction calls are not logged (about +4.5\% on U09 persona), 15 failed Codex profile calls carry no usage, and the sessions that launched the runs are excluded.

\section{Datasheet}
\label{app:datasheet}

This datasheet follows the structure of \citet{gebru2021datasheets}.

% ---------- G.1 ----------
\subsection{Motivation}
\label{app:ds-motivation}

The corpus was assembled to measure whether a system knows when to remember and what to remember, on conversations nobody wrote for that purpose. No existing corpus pairs real longitudinal conversation with a per-item record of whether memory was required, and Appendix~\ref{app:sub} sets out why: the records are private, so the field substituted written users, authored questions and model judgment for them. The benchmark is the narrowest use of the result; Appendix~\ref{app:discussion} describes the others. Assembly, annotation and release were carried out by the authors.

% ---------- G.2 ----------
\subsection{Composition}
\label{app:ds-composition}

\emph{What an instance is.} The release has two kinds. A turn is one message in a conversation, with its sender, timestamp, text and a marker where media accompanied it. An item is one evaluation instance: a probe turn, the context a correct reply may draw on, a reference reply, and the record of how those were decided.

\emph{How many.} Ten subjects contribute 27{,}218 turns across 430 subject-days, of which 12{,}795 are the subject's, 14{,}329 the companion's and 94 the application's, totaling 1.64 million tokens. The chat track ships 2{,}034 rows, 1{,}600 sampled and 434 abstention, of which 1{,}967 are scoreable. The question track ships 3{,}312 items, of which 3{,}235 are scoreable. The profile layer holds 3{,}580 claims and the persona layer 390 content slots per the schemas of Appendix~\ref{app:schemas}.

\emph{Sample or population.} For each of the ten subjects the corpus contains every turn from the first message to the last, so it is complete for those subjects and is not a sample of any population. The probe strata are described in Appendix~\ref{app:probes}; only the proportional stratum supports a rate.

\emph{Labels.} Each item carries whether retrieval beyond the recent window is required, which store holds the needed record, a tier and category recomputable from the context lists, a reference reply, and a five-phase derivation recording how each was reached.

\emph{What is missing, and why.} Media content is not released, only a marker that media were present. Model replies generated for comparison and never shown to a subject, and messages emitted inside in-application games, were removed before delivery, leaving 3{,}273 vacant slots in the identifier space. Sixty-seven chat rows are withdrawn and ship without probe text, each with its reason. Timestamps carry a per-subject offset, so intervals are exact and calendar position is not.

\emph{Relationships between instances.} Every claim in every derived layer cites turn identifiers in the underived layer: 18{,}983 citations, none of which fails to resolve. Appendix~\ref{app:refs} reports the check.

\emph{Splits.} There is no canonical train and test split, and none should be inferred from the strata, which partition by how probes were selected.

\emph{Errors and noise.} The chat track's independent audit is in Appendix~\ref{app:audit}; the question track carries a content defect rate of 3.8\% [3.2, 4.6] after four repair passes, given in Appendix~\ref{app:qagt}. The profile ships 455 claims whose audit verdict is unsupported, retained so that the audit is itself auditable; no chat item cites one.

\emph{Self-containment.} The release is self-contained. Nothing is linked and nothing requires an external resource to interpret.

\emph{Sensitive and distressing content.} The corpus contains disclosures about health, family, work and emotional distress, including material falling within the special categories of Article~9 of the GDPR. It contains that material necessarily: a reply ignoring what someone has said about their health is not the correct reply. Readers and downstream users should expect content that is personal and at times distressing.

\emph{Subpopulations.} None can be identified. The persona layer's demographic slots are empty on 147 of 160, so the corpus supports no analysis by demographic group and should not be used to attempt one.

% ---------- G.3 ----------
\subsection{Collection}
\label{app:ds-collection}

\emph{Source.} The conversations come from the operational store of a single mobile companion application, described in Appendix~\ref{app:setting}. They were produced by people using a deployed product for their own purposes, not by participants recruited to a protocol, and every released turn is one the subject actually saw.

\emph{Mechanism and timeframe.} Records were exported from the production database. Observation windows run from each subject's first message to their last authored turn and span 36 to 120 days.

% ---------- G.4 ----------
\subsection{Preprocessing and Labeling}
\label{app:ds-preprocessing}

Three passes were applied. Structural identifiers were removed and every turn was rewritten so that what was said survives and the words used to say it do not; dates were shifted by a per-subject offset described in Appendix~\ref{app:time}. The rewriting cost 30 chat rows, withdrawn because the probe could not be released or its cited context was removed (Appendix~\ref{app:withdraw}); its effect on retrieval and reply scores is not measured, since that would need the original wording, which is not released. Shadow replies and in-application game turns were removed. Labels were then derived by the five phases of Appendix~\ref{app:phases}, and the result was graded by an independent audit and by the twenty-six mechanical gates released with the corpus. The original text is not released and is not recoverable from the release. The labeling and scoring software is distributed with the corpus.

\emph{What the persona layer is not.} The \texttt{persona} layer contains trait estimates and a clinical-style summary, produced by a language model from the conversation. Neither is a validated psychological instrument. Neither was administered to the subject or reviewed by them, and the release terms permit no use of either as an assessment of any person. They are released because they are part of what the derivation produced, and because withholding a derived layer while releasing the layer it came from would make the derivation uncheckable.
% ---------- G.5 ----------
\subsection{Uses}
\label{app:ds-uses}

The corpus has been used for the benchmark reported here and for nothing else. Appendix~\ref{app:discussion} sets out uses outside machine learning that the record could support.

\emph{What composition implies for future use.} Ten subjects, one application, English text after surrogate substitution, and shifted dates. Rates may be computed only on the proportional stratum. Quantities conditioned on memory demand rest on five subjects under the strict reading and should not be reported as though they rested on ten.

\emph{Uses the corpus must not be put to.} Profiling, identifying or targeting any individual, whether a subject of the corpus or a third party mentioned in it. Any commercial application. Any claim about whether these relationships benefited or harmed anyone: the corpus records that a relationship continued, not what it did for the person in it.

% ---------- G.6 ----------
\subsection{Maintenance}
\label{app:ds-maintenance}

The corpus is maintained by the authors and reached at the contact address distributed with the release; both are withheld for anonymous review. Each release carries a manifest naming every shipped file with its hash, so a version is identified by content, and every repair applied since the first audit is recorded in a statement distributed with it. Corrections will be issued as new versions under the same scheme, with prior manifests retained so that a published result can be tied to the exact files it was computed on. The scoring package accepts submissions from others, and the procedure is documented with it.
\section{Discussion}
\label{app:discussion}

This paper uses the corpus for one thing: measuring whether a system knows when to remember and what to remember. That is the narrowest use it has. What the release contains is ten people talking to the same companion for months, with what they said, what could be established about them, which turn established each claim, and how every label was reached. Several lines of work have wanted records of that kind and have not had them. This appendix sets out what each substituted instead, what recent work in each says the substitute costs, and which of their questions the record could answer: evaluation and conversational modeling (\ref{app:disc-nlp}), affect and mental state (\ref{app:disc-affect}), the psychology of these relationships (\ref{app:disc-psych}), and consumer research
(\ref{app:disc-market}). Appendix~\ref{app:disc-limits} states what the corpus cannot do. The pattern repeats in each case, and it is the pattern of Appendix~\ref{app:sub}: a field needed real people observed over time, could not obtain them, substituted something, and is now measuring what the substitution cost.

% ---------- H.1 ----------
\subsection{Modeling People in Conversation}
\label{app:disc-nlp}

% P1 - what a person needs from a reply, measured
For this community, the corpus carries a result about how people behave in conversation as well as a dataset. Benchmarks decide in advance what a person will need from a reply. An authored question stands in for the moment a person reaches back, and a model's judgment stands in for what the person meant (Appendix~\ref{app:sub}). Proposition~\ref{prop:nonident} prices the first substitution. A benchmark that supplies context and measures the improvement reports the sum of two effects, one on messages that need the context and one on messages that do not, and it may credit the sum to the first only where the second is absent. On an authored corpus that holds by construction, because every question was written to need something. Real people rarely talk that way. Here the probes that need nothing earlier contribute twenty-four times as much of the measured gain as those that do. The result applies to any benchmark whose probes were written, and it rests on a fact about people: most of what they say to a companion concerns the present.

% P2 - what the release supports that a leaderboard does not
The release also supports work a scored benchmark does not. The restraint decision has per-probe supervision, 1{,}129 negatives against 404 positives, which is what training a retrieval gate requires and what no other corpus supplies; the architectures surveyed in Appendix~\ref{app:attempts} retrieve on every turn in part because nothing has ever told them when not to. The derivations are supervision of a second kind, since each item records the referent a resolver proposed, what verification kept of it, the rule that fixed the label and the turns the reply rested on, so a model can be trained or evaluated on the intermediate decisions and not only on the outcome. And the coverage curve of Section~\ref{sec:results} answers by measurement a question the field answers by assertion: a system holding the last thousand turns has everything it needs on 31\% of memory-bearing probes, and reaching 95\% takes ten thousand turns, which is longer than 9 of the 10 histories in this corpus. At these lengths, long context and retrieval are not alternatives.

% ---------- H.2 ----------
\subsection{Affect, Mental State and Prediction}
\label{app:disc-affect}

% P1 - what the line is doing now
A third line models the interaction itself, and it has moved in one direction over the past two years: from classifying a turn to maintaining a state that persists across turns. \citet{zou2026affective} model the trajectory of affect through a conversation, and \citet{zhao2026stateless} argue for replacing a stateless responder with one that carries a psychological model of the person it is speaking to. Benchmarks have followed, with \citet{iyer2026heart} scoring people and models on the same support dialogues. A parallel effort attributes beliefs and intentions: \citet{li2026rectom} ask whether a conversational recommender infers what a user wants, \citet{wang2026mind2dialogue} train a responder on simulated user mental states directly, and \citet{ni2026survey} survey the simulators on which most of this rests.

% P2 - the state itself as the object of evaluation
A closer line makes the internal state the object of evaluation. \citet{yadav2026dialtom} ask a model to infer beliefs, desires, intentions, emotions, knowledge and trust from a dialogue, then to forecast how that dialogue continues from those inferences alone, and report that models identify the states and cannot use them to predict what follows. \citet{bawatneh2026omnitom} score belief structures directly, over 895 narratives and some 22{,}000 labeled belief propositions, and find actor-specific belief tracking is where models fail. \citet{zhao2026affectverse} learn a predictive representation of how affect unfolds and inject it into a language model, evaluated across nine clip-level corpora. \citet{cheng2026using} run the argument the other way, fitting a cognitive model of human decision-making to improve a language model's simulation of people in persuasion games, and \citet{lin2506large} sets out what that practice can and cannot support. The dissociation \citet{yadav2026dialtom} report is the one this paper finds on a different axis: there a model names a mental state and cannot act on it, here a model ranks probes needing memory above chance and cannot decide which to retrieve for.

% P2 - what they substitute, and what the record supplies
Each of these needs something it cannot observe. A persistent psychological state is worth maintaining only if the person persists, and what these lines evaluate on does not persist: authored narratives, clip-length recordings, games that end, and support corpora that are largely single sessions between strangers, assembled for the purpose. A mental-state attribution is worth scoring only against what someone actually wanted, and a simulated user wants what it was told to want. The recurring move is to construct the state because it cannot be watched: the situated responder builds a psychological world, the mental-state trainer simulates one. This corpus watches one instead. It records the same person across months, what they disclosed and when, what the companion said back, and whether they came back; its profile layer carries stressors, coping patterns and behaviors with the turns that established each and the window over which each held, so a claim about someone's state on day forty can be checked against what they said on day nine. It records that a relationship continued.

% ---------- H.3 ----------
\subsection{The Psychology of Human--AI Relationships}
\label{app:disc-psych}

% P1 - the field and the constraint it works under
Outside machine learning, these relationships are studied directly, and Appendix~\ref{app:release} describes what has been built: interview studies of how the bond develops over months, thematic analyses of the support people report receiving, and grounded-theory work on dependence and on distress when a system's behavior changes. \citet{pi2026only} survey what longitudinal work exists, and \citet{zhang2025rise} report on well-being at scale. Every one of these studies is built from what people say about the interaction, through interviews, surveys, app reviews and forum posts. The interactions themselves are private, and this is the constraint that produced the three substitutions of Appendix~\ref{app:sub}, met by a different field and answered a different way.

% P2 - what a transcript makes answerable
What a transcript adds is a check. A field whose evidence is self-report about an interaction has no way to compare the report against the interaction, and for ten people this corpus closes that gap. Some of what it shows is not what a survey would return. The companion opens the exchange on 217 of the 430 participant-days, so the AI initiates these relationships about half the time. More than half of the companion's messages offer tappable replies, and 33 of the 12{,}795 messages participants sent reproduce one, so the affordance the product invests in is almost entirely unused. Self-description fills where it asks for an interpretation and empties where it asks for a category, 91.9\% empty across demographic slots against 26.1\% elsewhere, and it does so regardless of how much a person said. And most of what people bring to a companion concerns the present. Only 3.4\% of their messages depend on anything said earlier, a fact about companionship that work organized around memory would not predict. None of these is obtainable from an account of an interaction. All are obtainable from the interaction.

% ---------- H.4 ----------
\subsection{Consumer Research and Marketing}
\label{app:disc-market}

% P1 - the practice, and the year it began to be audited
Consumer research has replaced its respondents. Synthetic panels, in which a model is prompted into a persona and asked what it would prefer or buy, are now routine in commercial practice, and the field has begun to audit them. \citet{lukauskas2026plausible} find such respondents plausible and not valid against psychometric criteria. \citet{tigre2026can} propose diagnostics and corrections for consumer panels, which is itself a statement that uncorrected panels are unreliable. \citet{chen2026synthetic} benchmark simulated survey responses across domains and report where they break. \citet{maier2025llms} recover purchase intent under a particular elicitation, which sharpens the question, since it shows the answer depends on how the model is asked. Beneath all of it sits the result of \citet{venkit2026need}: demographic attributes explain roughly 1.5\% of the variance in how similarly two people respond, so a persona assembled from them describes very nearly an unrelated person. Real self-description points the same way. In this corpus the demographic fields a synthetic panel starts from are empty 156 times in 160, while the remaining persona fields are filled three times in four.

% P2 - what is being asked for, and what the release permits
Every one of these audits ends the same way, by asking for real responses to validate against, and preferably responses that were not produced for a study. This corpus offers the field a source of validation. A synthetic persona can be held against a real one whose every claim carries the turns that established it, a confidence and the window over which it held. A claim to reproduce individual difference can be held against people whose differences were recorded. The corpus is released for research and its terms permit nothing else. It may not be used to profile or target anyone, since the participants consented to research and to no other use, and the argument of this subsection is only that the question the field is publicly asking is the one the record answers.

% ---------- H.5 ----------
\subsection{Limitations}
\label{app:disc-limits}

% P1 - the corpus
Ten people used one application over four months. They chose that product, so they are not a sample of companion users and still less of people, and the application frames the relationship in a particular way; how much of what is reported here survives a different product is unknown. Every turn was rewritten during anonymization, so the corpus preserves what was said and not the words used to say it; the rewriting cost 30 chat rows (Appendix~\ref{app:withdraw}), and its effect on retrieval and reply scores is not measured. Timestamps were shifted by a per-subject offset, so intervals are exact while calendar position and day of the week carry no information.

% P2 - the annotation
Three judgments rest on a single reader. The on-screen verdicts of Appendix~\ref{app:onscreen} were made by one person over four passes, with the final rule reapplied to the earliest, and one reader means a judgment without an agreement rate. The judge of Appendix~\ref{app:scoring} agrees with that reader at $\kappa = 0.804$ on the binary use decision and $0.545$ on the graded scale, so every graded score carries the weaker of the two. The question track carries a content defect rate of 3.8\% [3.2, 4.6] after repair, higher than the chat track's, reported in Appendix~\ref{app:qagt}.

% P3 - the results
Two results need their scope stated. Proposition~\ref{prop:nonident} is about a method and holds wherever the demand rate is below one; the 96\% that instantiates it is about this corpus and this pair of conditions, and a different pair would give a different split. Under the strict reading, five subjects contribute no memory-bearing probes at all, so every quantity conditioned on memory demand is a macro-average over five subjects with intervals to match, and none of them should be read as though it rested on ten.
\section{Notation and Terminology}
\label{app:notation}

Every symbol used in this paper is introduced where it is first needed, and it is collected here. Table~\ref{tab:symbols} gives the symbols and Table~\ref{tab:terms} the terms whose meaning in this paper is narrower than in ordinary use.

\begin{table}[h]
\centering\small
\begin{tabular}{llp{6.1cm}}
\toprule
Symbol & Introduced in & Meaning \\
\midrule
\multicolumn{3}{@{}l}{\emph{The record}} \\
$u$ & \ref{app:objects} & A subject; the corpus has ten \\
$m$, $m_i$ & \ref{app:objects} & A message in a subject's stream \\
$M_u$ & \ref{app:objects} & The stream of subject $u$, ordered in time \\
$N_u$ & \ref{app:objects} & Its length \\
$s(m)$, $\tau(m)$, $\iota(m)$ & \ref{app:objects}, \ref{app:medium} & The sender, timestamp and stored identifier of $m$ \\
$\mathcal{S} = \{S_1,\dots,S_L\}$ & \ref{app:objects} & The stores a referent may live in \\
\midrule
\multicolumn{3}{@{}l}{\emph{A probe}} \\
$p$ & \ref{app:objects} & A probe: a message attributable to the subject \\
$H(p)$ & \ref{app:objects} & Its history, every message preceding it \\
$W_k(p)$ & \ref{app:objects} & Its recent window: the $k$ messages preceding it \\
$\overline{W}_k(p)$ & \ref{app:objects} & The distant history, $H(p)$ less the window \\
$k(p)$ & \ref{app:objects} & The window size recorded for the probe \\
$R(p)$ & \ref{app:objects} & Its reference set: the records the gold used \\
$g(p)$ & \ref{app:objects} & Its gold response \\
\midrule
\multicolumn{3}{@{}l}{\emph{Ground truth and a system's output}} \\
$\rho^\star(p)$ & \ref{app:objects} & Whether retrieval beyond the window is required \\
$\lambda^\star(p)$ & \ref{app:objects} & Which stores hold the references \\
$\hat\rho(p)$, $\hat\lambda(p)$ & \ref{app:objects} & A system's predictions of the two \\
$\hat R(p)$, $\hat y(p)$ & \ref{app:scoring} & What it retrieved and what it replied \\
$\mathrm{uses}(\cdot)$ & \ref{app:scoring} & Whether a reply asserts a recorded past detail \\
$J(\cdot,\cdot)$ & \ref{app:scoring} & The judge's verdict on a reply against the gold \\
$\mathbf{1}[\cdot]$ & \ref{app:objects} & The indicator function \\
\midrule
\multicolumn{3}{@{}l}{\emph{Rates and estimation}} \\
$\pi$ & \ref{app:objects} & The demand rate, $\Pr[\rho^\star = 1]$ \\
$\hat\pi$ & \ref{app:lowdemand} & Its estimate from the derivation \\
$r$, $\varepsilon$ & \ref{app:lowdemand} & The derivation's recall and false-positive rate \\
$\theta_u$, $\bar\theta$ & \ref{app:scoring} & A quantity on subject $u$, and its macro-average \\
\midrule
\multicolumn{3}{@{}l}{\emph{Measurement}} \\
$\Delta$ & \ref{app:lowdemand} & The response-score difference between C2 and C1 \\
$\gamma_1$, $\gamma_0$ & \ref{app:lowdemand} & Its means on probes that do and do not require retrieval \\
$\phi$, $D(\phi)$ & \ref{app:lowdemand} & A feature map of the probe alone, and its surface detectability \\
$I(\cdot\,;\cdot)$ & \ref{app:lowdemand} & Mutual information \\
$k_1$, $b$ & \ref{app:baselines} & The \textsc{bm25} saturation and length parameters \\
$\mathrm{RC}(C)$ & \ref{app:scoring} & The response score under condition $C$ \\
\bottomrule
\end{tabular}
\caption{Symbols.}
\label{tab:symbols}
\end{table}

\begin{table}[h]
\centering\small
\begin{tabular}{lp{9.1cm}}
\toprule
Term & Meaning in this paper \\
\midrule
Probe & A user turn taken verbatim from the transcript, never authored \\
Gold response & A reference reply written from the probe and the recorded context and nothing else \\
Reference set & The records the gold response used, disjoint from the recent window by construction \\
Locus & Which store holds the referent; a judgment, unlike the tier \\
Tier & The shape of the three context lists; a function of them, and re-computable \\
Category & A subdivision of a tier, assigned by the same rule \\
Memory-bearing & A probe whose reference set is not empty \\
Recorded reading & Every memory-bearing probe, as the derivation labeled it \\
Strict reading & The same, less the probes whose referent was already on screen \\
Proportional stratum & Probes drawn without regard to memory demand; the only stratum that carries a rate \\
Enriched stratum & A census of the corpus's memory-bearing probes; carries power, not a denominator \\
Abstention stratum & Cold opens built to enlarge the restraint population; selected, so it carries no rate \\
Derivation & The five phases that produce an item's labels \\
Phase & One step of the derivation, recording its decision and its reason together \\
Demotion & The effect of a verification removing evidence, so that a later phase reads a lower tier \\
Withdrawal & Removal of an item from scoring, with its reason shipped \\
Dispute marker & A recorded disagreement that changes no label \\
Gate & A predicate over the released files that the release is blocked on \\
Audit & An independent grading of the labels by a model that built none of them \\
\bottomrule
\end{tabular}
\caption{Terminology.}
\label{tab:terms}
\end{table}

\end{document}